\documentclass[conference,letterpaper]{IEEEtran}
\newif\iffullversion
\fullversiontrue
\usepackage[T1]{fontenc}
\usepackage[utf8]{inputenc}
\usepackage{amsmath,amsfonts,amssymb,amsthm,bm}
\usepackage{dsfont}
\usepackage{cite}
\usepackage{xspace,xcolor}
\usepackage[noend]{algpseudocode}
\usepackage{enumitem}
\usepackage{thmtools,thm-restate}
\usepackage[hidelinks,hypertexnames=false]{hyperref}
\usepackage{cleveref}
\providecommand{\halmos}{\qedhere}
\newenvironment{algorithmfigure}[1][t]{%
  \begin{figure}[#1]%
  \crefalias{figure}{algorithm}%
}{\end{figure}}
\crefname{algorithm}{Algorithm}{Algorithms}
\Crefname{algorithm}{Algorithm}{Algorithms}
\NewDocumentEnvironment{algorithmframe}{+b}{%
  \begingroup
  \setlength{\fboxsep}{4pt}%
  \setlength{\fboxrule}{0.4pt}%
  \noindent\fbox{%
    \begin{minipage}{\dimexpr\linewidth-2\fboxsep-2\fboxrule\relax}
      \raggedright
      #1
    \end{minipage}%
  }%
  \par\endgroup
}{}
\makeatletter
\newcommand{\LineCont}{\Statex\hspace*{\dimexpr\ALG@thistlm+1em\relax}}
\makeatother
\AddToHook{cmd/appendices/after}{%
  \crefalias{section}{appendix}%
  \crefalias{subsection}{subappendix}%
  \crefalias{subsubsection}{subsubappendix}%
}
\crefname{appendix}{Appendix}{Appendices}
\Crefname{appendix}{Appendix}{Appendices}
\crefname{subappendix}{Appendix}{Appendices}
\Crefname{subappendix}{Appendix}{Appendices}
\crefname{subsubappendix}{Appendix}{Appendices}
\Crefname{subsubappendix}{Appendix}{Appendices}
\NewDocumentCommand{\fullversionref}{o m}{%
  \iffullversion
    \IfValueT{#1}{\Cref{#1} in }\Cref{#2}%
  \else
    \hyperlink{wcmdp:full-version}{the full version}%
  \fi
}
\hypersetup{pdftitle={Achieving an O(1/N) Optimality Gap in Average-Reward Weakly-Coupled MDPs},
pdfauthor={Yige Hong, Xiangcheng Zhang, Qiaomin Xie, Yudong Chen, Weina Wang}}
 \theoremstyle{plain}
 \newtheorem{theorem}{Theorem}
 \newtheorem*{theorem*}{Theorem}
 
 \newtheorem{lemma}{Lemma}
 \newtheorem{proposition}{Proposition}

 \theoremstyle{definition}
 \newtheorem{definition}{Definition}
\newtheorem{assumption}{Assumption}

\theoremstyle{remark}
\newtheorem{remark}{\textit{Remark}}
\newtheorem*{remark*}{Remark}

\newcommand{\yigecomment}[1]{}%

\providecommand{\hcref}[1]{%
  \texorpdfstring{\protect\NoHyper\Cref{#1}\protect\endNoHyper}{#1}%
}

\newcommand{\norm}[1]{\left\lVert#1\right\rVert}

\newcommand{\normbig}[1]{\big\lVert#1\big\rVert}

\newcommand{\E}[1]{\mathbb{E}\left[#1\right]}

\newcommand{\Ebig}[1]{\mathbb{E}\big[#1\big]}
\newcommand{\EBig}[1]{\mathbb{E}\Big[#1\Big]}

\newcommand{\given}{\,\middle|\,}
\newcommand{\givenplain}{\,|\,}
\newcommand{\givenbig}{\,\big|\,}
\newcommand{\givenBig}{\,\Big|\,}

\newcommand{\Var}[1]{\text{Var}{\left[#1\right]}}

\newcommand{\Probbig}[1]{\mathbb{P}\big[#1\big]}
\newcommand{\ProbBig}[1]{\mathbb{P}\Big[#1\Big]}
\newcommand{\Prob}[1]{\mathbb{P}\left[#1\right]}
\newcommand{\R}{\mathbb{R}}

\newcommand{\indibrac}[1]{\mathds{1}\!\left\{#1\right\}}

\newcommand{\abs}[1]{\left\lvert#1\right\rvert}

\newcommand{\absBig}[1]{\Big\lvert#1\Big\rvert}

\newcommand{\floor}[1]{\left\lfloor#1\right\rfloor}

\newcommand{\ve}[1]{\bm{#1}}

\newcommand{\vone}{\mathds{1}}

\newcommand{\veS}{\ve{S}}

\newcommand{\veA}{\ve{A}}
\newcommand{\sspa}{\mathbb{S}}
\newcommand{\aspa}{\mathbb{A}}
\newcommand{\sumN}{\sum_{i\in[N]}}
\newcommand{\sumsa}{\sum_{s\in\sspa, a\in\aspa}}

\newcommand{\rel}{\textup{rel}} %
\newcommand{\rmax}{r_{\max}}

\newcommand{\ravg}{R}
\newcommand{\rsysn}{\ravg(\pi, \veS_0)}
\newcommand{\rliminf}{\ravg^{-}(\pi, \veS_0)}
\newcommand{\rlimsup}{\ravg^{+}(\pi, \veS_0)}
\newcommand{\ropt}{\ravg^*(N, \veS_0)}
\newcommand{\rrel}{\ravg^\rel}

\newcommand{\pibar}{{\sysbar{\pi}}}
\newcommand{\pibs}{{\pibar^*}}

\newcommand{\lpfp}{\textup{LP fixed-point}\xspace}
\newcommand{\frcontrol}{\textup{$\pibs$-Fixed-Ratio Control}\xspace}

\newcommand{\sysbar}[1]{\bar{#1}}

\newcommand{\simplex}{\Delta}
\newcommand{\threshbar}{\overline{\eta}}

\newcommand{\sempty}{S^{\emptyset}}

\newcommand{\md}{m_d}
\newcommand{\Md}[1]{N\md(x)}

\newcommand{\statdist}{\mu^*}

\newcommand{\wmat}{W}
\newcommand{\umat}{U}

\newcommand{\rhoFinal}{\rho_1}

\newcommand{\errtol}{\epsilon_N^{\text{rd}}}
\newcommand{\errrew}{\epsilon_N^{\text{rew}}}  %

\newcommand{\hw}{h_\wmat}

\newcommand{\hu}{h_\umat}

\newcommand{\lamw}{\lambda_\wmat}
\newcommand{\lamu}{\lambda_\umat}
\newcommand{\rhow}{\rho_{w}}
\newcommand{\rhou}{\rho_{u}}

\newcommand{\KhwOne}{K_{W,1}}
\newcommand{\KhwTwo}{K_{W,2}}
\newcommand{\Chw}{C_{W}}
\newcommand{\KhuOne}{K_{U,1}}
\newcommand{\KhuTwo}{K_{U,2}}
\newcommand{\Chu}{C_{U}}
\newcommand{\KnsOne}{K_{D,1}}
\newcommand{\KnsTwo}{K_{D,2}}
\newcommand{\Cns}{C_{D}}
\newcommand{\KscOne}{K_{C,1}}
\newcommand{\KscTwo}{K_{C,2}}

\newcommand{\Da}{D^{\pibs}}
\newcommand{\Db}{D^{\textup{LP}}}

\newcommand{\Fullstate}{\Sigma}
\newcommand{\fullstate}{\sigma}
\newcommand{\fullssp}{\mathbb{X}}
\newcommand{\loworder}{O_N}
\newcommand{\Vds}{V_1} %
\newcommand{\Vll}{V_2} %
\newcommand{\rhat}{\hat{r}}
\newcommand{\gvec}{\bm{g}}
\newcommand{\rvec}{\bm{r}}
\newcommand{\KVds}{K_1}  %
\newcommand{\KVdst}{K_2}
\newcommand{\Kdrift}{K_{\mathrm{drift}}} %
\newcommand{\CVds}{C}  %
\newcommand{\Kla}{K_{\locapprox}}  %
\newcommand{\Kg}{K_{g}}  %
\newcommand{\Lds}{L_1}
\newcommand{\Lvll}{L_2}
\newcommand{\locapprox}{\Psi} %

\newcommand{\drftup}{u}

\newcommand{\lamq}{\lambda_{Q}}
\newcommand{\sneu}{\tilde{s}}

\newcommand{\slk}{\delta}
\newcommand{\slkb}{\delta}

\Crefname{assumption}{Assumption}{Assumptions}
\crefname{assumption}{assumption}{assumptions}

\newcommand{\Uset}{\mathcal{U}^*}
\newcommand{\Kset}{\mathcal{K}^*}
\newcommand{\Cmat}{C^*}
\newcommand{\Cinv}{C^+}
\newcommand{\Cinvs}{C^+_\sspa}
\newcommand{\opnorm}[1]{\norm{#1}}

\IEEEoverridecommandlockouts
\title{Achieving an $O(1/N)$ Optimality Gap in Average-Reward Weakly-Coupled MDPs}
\author{
\IEEEauthorblockN{Yige Hong\IEEEauthorrefmark{1}\IEEEauthorrefmark{6}, Xiangcheng Zhang\IEEEauthorrefmark{2}\IEEEauthorrefmark{6}, Qiaomin Xie\IEEEauthorrefmark{3},
Yudong Chen\IEEEauthorrefmark{4}, and Weina Wang\IEEEauthorrefmark{5}}
\IEEEauthorblockA{\IEEEauthorrefmark{1}H.\ Milton Stewart School of Industrial \& Systems Engineering, Georgia Institute of Technology\\
yhong320@gatech.edu}
\IEEEauthorblockA{\IEEEauthorrefmark{2}John A.\ Paulson School of Engineering and Applied Sciences, Harvard University\\
xiangchengzhang@fas.harvard.edu}
\IEEEauthorblockA{\IEEEauthorrefmark{3}Department of Industrial and Systems Engineering, University of Wisconsin--Madison\\
qiaomin.xie@wisc.edu}
\IEEEauthorblockA{\IEEEauthorrefmark{4}Department of Computer Sciences, University of Wisconsin--Madison\\
yudongchen@cs.wisc.edu}
\IEEEauthorblockA{\IEEEauthorrefmark{5}Computer Science Department, Carnegie Mellon University\\
weinaw@cs.cmu.edu}
\thanks{\IEEEauthorrefmark{6}This work was conducted while Yige Hong was a PhD student at Carnegie Mellon University and Xiangcheng Zhang was visiting Carnegie Mellon University.}
\thanks{Yige Hong and Weina Wang are supported in part by the NSF Grants ECCS-2145713, CCF-2403194, CCF-2428569, and ECCS-2432545. Yudong Chen is supported in part by NSF grant CCF-2233152 and a Vilas Associates Award. Qiaomin Xie is supported in part by the NSF Grants ECCS-2339794 and ECCS-2432546.}}

\newcommand{\fullversioncitation}{\textbf{[arXiv link pending]}}

\begin{document}
\maketitle
\pagestyle{plain}
\thispagestyle{plain}
\begin{abstract}
We study average-reward weakly-coupled Markov decision processes (WCMDPs), where a WCMDP consists of $N$ smaller MDPs, called arms, that share multiple per-step budget constraints.
We consider the setting where the arms have identical model parameters, multiple actions, and state- and action-dependent costs.
For restless bandits (RBs), a well-studied special case of WCMDPs, prior work has developed policies that achieve an $O(1/\sqrt{N})$ optimality gap under general conditions, and has further identified conditions under which policies can achieve a better-than-$1/\sqrt{N}$ optimality gap.
However, for general WCMDPs, no prior result achieves an optimality gap better than $1/\sqrt{N}$.
In this paper, we identify conditions analogous to those for RBs under which a better-than-$1/\sqrt{N}$ optimality gap is achievable, and design a policy that attains an $O(1/N)$ optimality gap.
Notably, unlike prior approaches based on generalizing priority orderings, our policy is not priority-based but rather is designed to induce locally linear mean-field dynamics. 

\end{abstract}

\iffullversion\else
\noindent\hypertarget{wcmdp:full-version}{\textit{Full version with detailed proofs:}} \fullversioncitation.
\par\smallskip
\fi

\section{Introduction}
\subsection{Motivation}
Weakly-coupled Markov decision processes (WCMDPs) \cite{Haw03_Lagrangian} model sequential resource allocation among many interacting components.
A WCMDP consists of $N$ smaller Markov decision processes, called \emph{arms}, whose state transitions are independent conditional on their current states and chosen actions.
The coupling arises through shared budget constraints: each action incurs costs of one or more types, and the total cost of each type across all arms must remain within its budget at every time step.
This structure appears in applications such as online advertising \cite{BouLu_16_budget}, healthcare resource allocation \cite{biswas2021learning}, surveillance \cite{villar2016indexability}, and machine maintenance \cite{GlaMitAns_05_rb_repair}.
With known model parameters, we maximize long-run average reward per arm. The \emph{optimality gap} is the difference between the optimal reward per arm and that achieved by a policy; we study its order as $N$ grows.

A widely studied special case is the \emph{restless bandit} (RB) problem \cite{Whi_88_rb}.
Each arm has two actions, active and passive, and a single budget limits the number of arms that can be activated at each time step.
An arm's state can evolve under either action.
For a broad review of RBs, their applications, and index policies, see Ni\~no-Mora \cite{Nin_23}.
A substantial literature establishes asymptotically optimal policies, whose optimality gaps vanish as $N\to\infty$.
Results include $o(1)$ gaps \cite{WebWei_90,Ver_16_verloop,Yan_24_multichain} and $O(1/\sqrt{N})$ gaps \cite{HonXieCheWan_23,HonXieCheWan_24}, under different structural assumptions.

For RBs, gaps of smaller order than $1/\sqrt{N}$ are already possible under suitable assumptions.
The papers \cite{GasGauYan_23_whittles,GasGauYan_23_exponential} establish $O(\exp(-CN))$ gaps for suitable priority policies derived from a linear programming (LP) relaxation, where $C>0$ is a constant independent of $N$.
The key assumptions are the aperiodic unichain condition, non-degeneracy, and a uniform global attractor property (UGAP).
Our prior work \cite{HonXieCheWan_24_exp} achieves the same exponential order using a two-set policy, replacing UGAP with a weaker and easier-to-verify local stability condition near an optimal stationary distribution.

These RB results motivate seeking similarly small optimality gaps for general WCMDPs, which allow more than two actions, multiple budget constraints, and costs that depend on both state and action.
Prior work on average-reward WCMDPs establishes $o(1)$ gaps for the special case with multiple actions but a single budget \cite{HodGla_15,Ver_16_verloop,XioWanLi_22}, and for WCMDPs with multiple actions and budget constraints \cite{GolAvr_24_wcmdp_multichain}.
More recent work \cite{ZhaHonWan_25_het} establishes an $O(1/\sqrt{N})$ gap for WCMDPs with multiple actions, multiple budget constraints, and fully heterogeneous arms, whose model parameters may differ across arms.
This guarantee also applies to homogeneous systems, where all arms share the same model parameters.
However, to our knowledge, an optimality gap of smaller order than $1/\sqrt{N}$ has not been established for average-reward WCMDPs with multiple budget constraints and state- and action-dependent costs.

This paper establishes an $O(1/N)$ optimality gap for average-reward WCMDPs with homogeneous arms, multiple actions, multiple budget constraints, and state-dependent costs.
We assume the aperiodic unichain condition, non-degeneracy, and local stability; the last two are additional assumptions relative to the $O(1/\sqrt{N})$ guarantee in \cite{ZhaHonWan_25_het}.
We achieve this bound by extending the two-set policy of \cite{HonXieCheWan_24_exp} from RBs to WCMDPs.
We do not require a global attractor assumption.

\subsection{Technical insight}
When generalizing from RBs to WCMDPs, much of the prior work has focused on adapting the notion of state priorities to define priority policies \cite{HodGla_15,Ver_16_verloop,XioWanLi_22}.
These policies assign priorities to states and favor arms in higher-priority states when choosing actions that incur positive costs.
This approach is natural when there is a single budget constraint and the costs are state independent.
However, with multiple budget constraints and state-dependent costs, a single priority order no longer directly specifies how to balance the competing budget requirements.

In this paper, our main technical insight is that the priority structure itself is not what enables an optimality gap smaller than $1/\sqrt{N}$.
Rather, the key contributor is the \emph{local linearity} of the mean-field dynamics induced by the policy.
Therefore, to generalize this approach to WCMDPs and design a policy that achieves an optimality gap smaller than $1/\sqrt{N}$, we seek a policy whose mean-field dynamics is locally linear.
We elaborate on this below.

For an RB, consider the empirical state distribution that records the fraction of arms in each state, and its mean-field approximation where the random transitions are replaced by their conditional means.
Let $\statdist$ denote the stationary state distribution prescribed by an optimal LP solution; it is a fixed point of the mean-field dynamics under these priority policies.
We can bound the optimality gap by the expected distance between the empirical state distribution and $\statdist$. The empirical state distribution typically fluctuates on the $1/\sqrt{N}$ scale suggested by the Central Limit Theorem (CLT) for conditionally independent arm transitions.
However, prior work \cite{GasGauYan_23_whittles,GasGauYan_23_exponential} shows that, near $\statdist$, the reward and mean transition depend linearly on the empirical state distribution. This observation allows the analysis to compare the \emph{expected} empirical state distribution with $\statdist$. Since random deviations can cancel in expectation, this difference can be of smaller order than $1/\sqrt{N}$.
Together with concentration near $\statdist$, local linearity therefore allows optimality gaps smaller than the CLT fluctuation scale.

In this paper, we extend the local linearity structure directly, retaining the two-set approach of \cite{HonXieCheWan_24_exp} without forcing a priority policy.
The challenge is to preserve local linearity while satisfying multiple budget constraints with state-dependent costs.
We construct an auxiliary system of linear equations (\Cref{eq:wcmdp:active-system}) to determine target state-action frequencies (the desired fractions of all arms in each state-action pair).
When the empirical state distribution is sufficiently close to $\statdist$, this system yields feasible target frequencies that depend linearly on that distribution.
Rounding the target frequencies to feasible integer action counts introduces an $O(1/N)$ residual in the reward and mean transition under our construction.
The rounding residual contributes the $O(1/N)$ term in our gap bound, which is smaller than the $O(1/\sqrt{N})$ gap previously established for general average-reward WCMDPs.

\subsection{Additional related work}
Model predictive control (MPC) provides another approach to the average-reward setting by repeatedly solving a finite-horizon LP and implementing only the first action allocation.
Under a mixing assumption, Gast and Narasimha \cite{GasNar_25_mpc_RB} bound the optimality gap for homogeneous RBs by the sum of a planning-error term that can be reduced by increasing the horizon and an $O(1/\sqrt{N})$ term.
The latter term becomes exponentially small under additional non-degeneracy, LP uniqueness, and local stability conditions, assuming an integer activation budget.
They also discuss a general WCMDP extension, without proving the sharper bound in that setting.
Narasimha and Gast \cite{NarGas_25_mpc_het} study fully heterogeneous systems; their WCMDP guarantee assumes a conjecture that the horizon needed for a fixed planning-error tolerance can be bounded independently of $N$.

So far we have discussed prior work under the long-run average-reward criterion.
For a fixed finite horizon, with performance measured by expected total reward per arm over that horizon, optimality gaps of smaller order than $1/\sqrt{N}$ are available under suitable non-degeneracy conditions.
For homogeneous RBs, Zhang and Frazier \cite{ZhaFra_21} obtain an $O(1/N)$ gap, which Gast, Gaujal, and Yan \cite{GasGauYan_23_exponential} improve to $O(\exp(-CN))$ with an integer activation budget.
For general WCMDPs, $O(1/N)$ guarantees cover both homogeneous systems \cite{GasGauYan_24_reopt} and systems with typed heterogeneity, which permits a fixed number of distinct arm types as $N$ grows \cite{BroZha_23_ftva_and_reopt}.
Zhang \cite{Zha_24_finite_het_nondegeneracy} further develops bounds for fully heterogeneous systems that improve with the degree of non-degeneracy.
Recently, Yan, Wang, and Ying \cite{CheWanYin_25_RB_Gaussian_approx} also establish a $\widetilde{O}(1/N)$ gap for RBs without requiring non-degeneracy conditions, where $\widetilde{O}$ suppresses logarithmic factors in $N$.
These fixed-horizon guarantees can not be directly used to establish an average-reward guarantee: the bounds can grow faster than linearly with the horizon, and the policies themselves depend on the horizon.
Even when additional assumptions give bounds with only linear horizon dependence, the policies still solve optimization problems whose sizes grow with the horizon \cite{GasGauYan_24_reopt,BroZha_23_ftva_and_reopt}.

The LP-update policy of \cite{GasGauYan_24_reopt} inspires our design.
Both approaches use LP solutions to construct target state-action frequencies and exploit local linearity in the current state distribution.
Their LP plans the remaining finite-horizon mean-field evolution; we solve for stationary state-action frequencies and then use \Cref{eq:wcmdp:active-system}, without planning a future trajectory.

\section{Problem formulation}\label{sec:wcmdp:setup}
We consider a WCMDP that consists of $N$ homogeneous Markov decision processes (MDPs), indexed by $[N]=\{1,\ldots,N\}$, each specified by $(\sspa,\aspa,P,r)$. The state space $\sspa$ and action space $\aspa=\{0,1,\ldots,|\aspa|-1\}$ are finite. The transition kernel is $P:\sspa\times\aspa\times\sspa\to[0,1]$, and $r:\sspa\times\aspa\to\R$ is the reward function. We call each MDP an \emph{arm} and set $\rmax=\max_{s,a}|r(s,a)|$. Model parameters are known. Conditional on the current states and chosen actions, arms transition independently according to $P$.
A policy $\pi$ may be randomized and history-dependent. We write $\veS_t^\pi=(S_t^\pi(i))_{i\in[N]}$ and $\veA_t^\pi=(A_t^\pi(i))_{i\in[N]}$ for the state and action vectors.

The WCMDP system has $K$ budget constraints, specified by cost functions $c_k\colon \sspa\times\aspa \to [0, 1]$ and budget levels $\alpha_k \in (0, 1)$ for $k\in[K]$ as follows. 
We assume that the action $0$ incurs zero cost of any type, i.e., $c_k(s,0) = 0$ for all $s\in\sspa$ and $k\in[K]$. 
A policy $\pi$ for the $N$-armed system is \emph{feasible} if it satisfies, for all $t\geq 0$ and all $k\in[K]$,
\begin{equation}
    \label{eq:wcmdp:budget}
    \frac{1}{N}\sumN c_k(S_t^\pi(i), A_t^\pi(i)) \leq \alpha_k,
\end{equation}
where $S_t^\pi(i), A_t^\pi(i)$ are the state and action of arm $i$ at time $t$ under the policy $\pi$. 

For a fixed initial state vector $\veS_0$, we define the limsup and liminf average rewards per arm by
\begin{align*}
    & \rlimsup = \limsup_{T\to\infty}\frac{1}{NT}\sum_{t=0}^{T-1}\sumN\E{r(S_t^\pi(i),A_t^\pi(i))}, \\
    & \rliminf = \liminf_{T\to\infty}\frac{1}{NT}\sum_{t=0}^{T-1}\sumN\E{r(S_t^\pi(i),A_t^\pi(i))}.
\end{align*}
When these agree, their common value is the long-run average reward $\rsysn$.
Our objective is to maximize $\rliminf$ over feasible policies; its optimal value is $\ropt$.
Stationary policies on a finite augmented state space have a well-defined long-run average reward, and finite-state MDPs admit an optimal stationary policy \cite[Chapter~8 and Theorem~9.1.8]{Put_05}. For policies with a well-defined reward, the optimality gap is $\ropt-\rsysn$.

\paragraph{Scaled counts and notation.}
For $D\subseteq[N]$, we let $m(D)=|D|/N$ and
\[
\begin{aligned}
X_t^\pi(D,s) &= \frac{1}{N}\sum_{i\in D}\indibrac{S_t^\pi(i)=s}, \\
    Y_t^\pi(D,s,a) &= \frac{1}{N}\sum_{i\in D}\indibrac{S_t^\pi(i)=s,\ A_t^\pi(i)=a}.
\end{aligned}
\]
We write $X_t^\pi(D)$ and $Y_t^\pi(D)$ for the corresponding row vectors, and $Y_t^\pi=Y_t^\pi([N])$. The set function $X_t^\pi$ contains the same information as $\veS_t^\pi$. For nonempty $D$, $X_t^\pi(D)/m(D)$ is its empirical state distribution. Counts for an empty subset are zero. We write $\simplex(\sspa)$ for the probability simplex, $I_k$ for the $k$-by-$k$ identity matrix, and $\vone$ for the all-one row vector indexed by states. Distributions are row vectors, and policy superscripts are omitted when clear.

\subsection{LP relaxation and assumptions}\label{sec:wcmdp:lp}
We relax the per-step budget constraints \eqref{eq:wcmdp:budget} to their time-average counterparts, obtaining the following LP relaxation:
\begin{align}
    \label{eq:wcmdp-lp} \tag{LP-W}
    & \underset{\{y(s,a)\}_{s\in\sspa,a\in\aspa}}{\text{maximize}} \;\; \sumsa r(s,a)\, y(s,a) \\
    & \text{subject to} \;\; \sumsa c_k(s,a)\, y(s,a) \leq \alpha_k, \quad \forall k\in[K], \label{eq:wcmdp:budget-lp}\\
    & \sum_{s'\in\sspa,\, a\in\aspa} y(s', a) P(s', a, s) = \sum_{a\in\aspa} y(s, a),\forall s\in\sspa, \label{eq:wcmdp:flow}\\
    & \sum_{s'\in\sspa,a'\in\aspa} y(s',a') = 1, \quad y(s,a) \geq 0, \;\; \forall (s,a). \label{eq:wcmdp:simplex}
\end{align}
We let $\rrel$ denote the optimal value of the LP relaxation. A standard argument shows that $\rrel \geq \ropt$ \cite{ZhaHonWan_25_het}.
Fixing an optimal solution of \eqref{eq:wcmdp-lp}, $y^*$, we define $\statdist(s) \triangleq \sum_{a\in\aspa} y^*(s,a)$ as the induced \emph{optimal stationary distribution} over $\sspa$. 
We let $\sempty \triangleq \{s\in\sspa\colon \statdist(s) = 0\}$ be the set of transient states under $\pibs$.
We define the optimal single-armed policy associated with $y^*$ by
\begin{equation}\label{eq:wcmdp:single-arm-opt-def}
 \pibs(a\mid s)=\begin{cases}y^*(s,a)/\statdist(s),&\statdist(s)>0,\\ 1/|\aspa|,&s\in\sempty,\end{cases}
\end{equation}
with transition matrix $P_\pibs(s,s')=\sum_{a\in\aspa}\pibs(a\mid s)P(s,a,s')$.

\begin{assumption}[Aperiodic unichain]\label{assump:wcmdp:aperiodic-unichain}
    The transition matrix $P_{\pibs}$ induced by $\pibs$ has a simple eigenvalue $1$; all other eigenvalues have modulus strictly less than $1$; i.e., $P_{\pibs}$ defines an aperiodic unichain on $\sspa$.
\end{assumption}

\Cref{assump:wcmdp:aperiodic-unichain} depends only on the single-armed MDP. The two remaining assumptions, non-degeneracy (\Cref{assump:wcmdp:non-degeneracy}) and local stability (\Cref{assump:wcmdp:local-stability}), are stated in \Cref{sec:wcmdp:OLC}, where they arise naturally alongside the construction of the subroutine \lpfp. 
When there are multiple optimal solutions for \eqref{eq:wcmdp-lp}, we choose a fixed one that satisfies all assumptions.

\section{Two-set policy}\label{sec:wcmdp:policy}

The two-set policy combines two subroutines with complementary roles:
\frcontrol steers a subset's empirical state distribution toward $\statdist$, while \lpfp provides near-optimal control for a subset whose empirical state distribution is already close to $\statdist$.
By combining these subroutines, the policy seeks to enlarge the subset following \lpfp.
We first describe the two subroutines and then explain how the policy selects their respective subsets.

\begin{algorithmfigure}[t]
\begin{algorithmframe}
\textbf{Input}: A subset $D$ of arms with state counts\\
\hspace*{1em}$(|D|\,x(s))_{s\in\sspa}$, optimal single-armed policy $\pibs$
\begin{algorithmic}[1]
    \For{$s\in\sspa$}
        \State Independently assign each arm in state $s$
        \LineCont an action $a\sim\pibs(\cdot\mid s)$.
    \EndFor
\end{algorithmic}
\end{algorithmframe}
\caption{\frcontrol}
\label{alg:wcmdp:uoc}
\end{algorithmfigure}

\subsection{\frcontrol}
\label{sec:wcmdp:uoc}
The subroutine \emph{\frcontrol} (\Cref{alg:wcmdp:uoc}) assigns each arm an action by independent sampling from $\pibs(\cdot\mid s)$.

By \Cref{assump:wcmdp:aperiodic-unichain}, if all $N$ arms follow \Cref{alg:wcmdp:uoc}, their expected empirical state distribution evolves according to the transition matrix $P_\pibs$ and converges to $\statdist$.
In the two-set policy, this subroutine is applied to a \emph{subset of arms} for this purpose.

\subsection{\lpfp}\label{sec:wcmdp:OLC}

Next, we define the second subroutine, \lpfp.
It is defined to be a locally linear and near-optimal control when the system's empirical state distribution $x([N])$ is close to $\statdist$.
To construct this control, we compute a target state-action frequency $y$ as the solution of a linear system and use it to guide the actions in the next time step.
Because the construction of the subroutine relies on the optimal solution to the LP relaxation, $y^*$, which can be interpreted as an optimal fixed-point state-action frequency under the mean transitions, we refer to this subroutine as \emph{LP fixed-point}. 
\lpfp is inspired by the ``LP-update'' policy in the finite-horizon WCMDP literature \cite{GasGauYan_24_reopt}.

\Cref{sec:wcmdp:olc:assumptions} states the non-degeneracy and local stability assumptions (\Cref{assump:wcmdp:non-degeneracy,assump:wcmdp:local-stability}). \Cref{sec:wcmdp:olc:algorithm} gives the subroutine, \Cref{sec:wcmdp:olc:dynamics} its dynamics, and \Cref{sec:wcmdp:olc:feasibility} its feasibility conditions.

\subsubsection{Two assumptions: non-degeneracy and local stability}\label{sec:wcmdp:olc:assumptions}

We introduce two assumptions under which \lpfp is well-defined and induces locally stable dynamics.

For the fixed optimal LP solution $y^*$, let $\sempty$ be the set of transient states, $\Kset$ the set of tight budget constraints, and $\Uset$ the set of pairs $(s,a)$ with $s\notin\sempty$ and $y^*(s,a)=0$:
\[
\begin{aligned}
    \sempty &= \bigl\{s\in\sspa\colon \textstyle\sum_{a\in\aspa} y^*(s,a) = 0\bigr\}, \\
    \Kset &\triangleq \bigl\{k\in[K]\colon \textstyle\sumsa c_k(s,a)\,y^*(s,a) = \alpha_k\bigr\}, \\
    \Uset &\triangleq \bigl\{(s,a)\in\sspa\times\aspa\colon s\notin\sempty,\; y^*(s,a) = 0\bigr\}.
\end{aligned}
\]

For the rest of this subsection, we apply \lpfp to all $N$ arms, writing $x=X_t([N])\in\Delta(\sspa)$ for their empirical state distribution and $y\in\Delta(\sspa\times\aspa)$ for a state-action distribution. \Cref{sec:wcmdp:two-set} extends the subroutine to arbitrary subsets of arms.

We consider the following linear system in $y\in\R^{|\sspa|\times|\aspa|}$, parameterized by the empirical state distribution $x$:
\begin{equation}\label{eq:wcmdp:active-system}
    \begin{aligned}
        & y(s,a) = 0, & & (s,a)\in \Uset, \\
        & y(s,0) = y(s,1) = \cdots = y(s, |\aspa|-1), & & s\in\sempty, \\
        & \sumsa c_k(s,a) y(s,a) = \alpha_k, & & k\in\Kset, \\
        & \sum_{a\in\aspa} y(s,a) = x(s), & & s\in\sspa.
    \end{aligned}
\end{equation}
A solution $y$ to \eqref{eq:wcmdp:active-system} gives the target state-action frequency for \lpfp at this time step.
On a high level, the purpose of these constraints is to find a budget-feasible state-action frequency that recovers $y^*$ when $x = \statdist$.
Specifically, 
the first row of equations requires $y$ to share the support of $y^*$. 
The second row fixes the action distribution to be uniform conditional on any state in $\sempty$.
The third row requires the same budgets to be tight under $y$ and $y^*$.
The fourth row enforces consistency with the instantaneous state count $x$.

The equations in \eqref{eq:wcmdp:active-system} are fundamentally different from the constraints of the LP relaxation \eqref{eq:wcmdp-lp}. In particular, the equations here require $y$ to be an \emph{instantaneous} state-action frequency consistent with the current state distribution $x$, whereas $y^*$ is a stationary state-action frequency satisfying the flow-balance equation \eqref{eq:wcmdp:flow}. 
The first and second rows of \eqref{eq:wcmdp:active-system} also have no counterparts in \eqref{eq:wcmdp-lp}. 
Despite these differences, \eqref{eq:wcmdp:active-system} recovers $y^*$ when $x = \statdist$.

We write this linear system in vector form as
\begin{equation}
    \label{eq:wcmdp:active-system-vec}
    y \Cmat = \bigl(0,\;0,\;(\alpha_k)_{k\in\Kset},\;x\bigr),
\end{equation}
where $y$ and $x$ are regarded as row vectors, $\Cmat\in\mathbb{R}^{|\sspa||\aspa|\times d}$ denotes the coefficient matrix of this linear system, and $d = |\Uset| + (|\aspa|-1)|\sempty| + |\Kset| + |\sspa|$. A sufficient condition for \eqref{eq:wcmdp:active-system-vec} to have a solution $y$ for every $x\in\simplex(\sspa)$ is that $\Cmat$ has full column rank.

\begin{assumption}[Non-degeneracy]\label{assump:wcmdp:non-degeneracy}
    The matrix $\Cmat$ has full column rank.
\end{assumption}

A necessary condition for \Cref{assump:wcmdp:non-degeneracy} is $d \leq |\sspa||\aspa|$: $\Cmat$ must be square or tall.
Under \Cref{assump:wcmdp:aperiodic-unichain}, this dimensional condition holds when $|\sempty|=0$ and $y^*$ is a non-degenerate basic feasible solution of \eqref{eq:wcmdp-lp} (see \fullversionref[prop:wcmdp:cmat-dim]{subsec:wcmdp:non-degeneracy-validity}).

\Cref{assump:wcmdp:non-degeneracy} can be viewed as a strengthened version of $d \leq |\sspa||\aspa|$, as it additionally requires the columns of $\Cmat$ to be linearly independent.

\begin{remark}[Restless-bandit special case]
To gain intuition for \eqref{eq:wcmdp:active-system} and \Cref{assump:wcmdp:non-degeneracy}, we consider the RB specialization $|\aspa|=2$, $\Kset=\{1\}$, $c_1(s,a)=a$, with $\sempty=\emptyset$. The state space partition as $\sspa = S^+ \cup S^- \cup S^0$, where $S^+ = \{s\colon y^*(s,1)>0,\,y^*(s,0)=0\}$, $S^- = \{s\colon y^*(s,0)>0,\,y^*(s,1)=0\}$, and the neutral states $S^0 = \{s\colon y^*(s,0)>0,\,y^*(s,1)>0\}$. Then $\Uset = \{(s,0)\colon s\in S^+\}\cup\{(s,1)\colon s\in S^-\}$, and the linear system \eqref{eq:wcmdp:active-system} specializes into
\[
\begin{aligned}
y(s,0)&=0\;(s\in S^+), & y(s,1)&=0\;(s\in S^-), \\
\textstyle\sum_s y(s,1)&=\alpha, & y(s,0)+y(s,1)&=x(s)\;(s\in\sspa).
\end{aligned}
\]
The first two set of equtions pin down $y(s,1) = x(s)$ for $s\in S^+$ and $y(s,0) = x(s)$ for $s\in S^-$. To find $y(s,a)$ for $s\in S^0$, we discuss based on the number of neutral states. 
\begin{itemize}
    \item When there is exactly one neutral state $\sneu$ (the usual non-degenerate RB case), we can use the budget constraint to obtain $y(\sneu,1) = \alpha - \sum_{s\in S^+} x(s)$ and use the marginal constraint to obtain $y(\sneu, 0) = x(\sneu) - y(\sneu, 1)$. In this case, the solution $y$ is uniquely determined for any $x$, implying that $\Cmat$ is an invertible matrix and thus has full column rank (i.e., \Cref{assump:wcmdp:non-degeneracy} holds). This is consistent with the dimensionality of $\Cmat$: one can verify that $d = 2|\sspa| = |\sspa||\aspa|$, so $\Cmat$ is a square matrix.
    \item When there is no neutral state, the linear system is overdetermined with $|S^+| + |S^-| + 1 + |\sspa| = 2|\sspa|+1 > |\sspa||\aspa|$ equations, so it does not have solutions for all $x$; $\Cmat$ is wide and cannot have full column rank.
    \item When there is more than one neutral state for RBs, the solution $y$ always exists but is not unique, so $\Cmat$ still has full column rank and \Cref{assump:wcmdp:non-degeneracy} still holds. In this case, we will select a fixed solution $y$, as we discuss immediately below while setting up for the next assumption.
\end{itemize}
When there is exactly one neutral state, the affine target allocates action $1$ to the states in $S^+$ and uses the neutral state to meet the budget constraint.

\end{remark}

Next, we prepare to state the local stability assumption.  
To this end, we need to fix a solution of \eqref{eq:wcmdp:active-system-vec}. First, we note that at $x=\statdist$, one such solution is $y^*$ by definition, giving
\begin{equation}\label{eq:wcmdp:ystar-cmat}
    y^*\Cmat = \bigl(0,\;0,\;(\alpha_k)_{k\in\Kset},\;\statdist\bigr).
\end{equation}
For general $x$, we need to invert the matrix $\Cmat$. 
Since $\Cmat$ has full column rank, it admits a left inverse $\Cinv\in\mathbb{R}^{d\times|\sspa||\aspa|}$ satisfying $\Cinv\,\Cmat = I_d$.\footnote{When $d < |\sspa||\aspa|$, the left inverse is not unique. In this case, we fix a left inverse such that \Cref{assump:wcmdp:local-stability} holds.} 
We denote by $\Cinvs\in\mathbb{R}^{|\sspa|\times|\sspa||\aspa|}$ the $|\sspa|$ rows of $\Cinv$ corresponding to the marginal-distribution block (the last row of equations in \eqref{eq:wcmdp:active-system}); then
\begin{equation}\label{eq:wcmdp:Cinvs-Cmat}
    \Cinvs\,\Cmat = \bigl(0,\;0,\;0,\;I_{|\sspa|}\bigr),
\end{equation}
Therefore, the following $y$ is a solution of \eqref{eq:wcmdp:active-system-vec}:
\begin{equation}\label{eq:wcmdp:y-linear}
    y \triangleq  y^* + (x - \statdist)\,\Cinvs.
\end{equation}
To verify this, we right-multiply \eqref{eq:wcmdp:y-linear} by $\Cmat$ and apply \eqref{eq:wcmdp:ystar-cmat} together with \eqref{eq:wcmdp:Cinvs-Cmat}, obtaining
\begin{equation*}
\begin{aligned}
y\Cmat &= y^*\Cmat + (x-\statdist)\,\Cinvs\,\Cmat \\
    &= \bigl(0,\,0,\,(\alpha_k)_{k\in\Kset},\,\statdist\bigr) + \bigl(0,\,0,\,0,\,x-\statdist\bigr) \\
    &= \bigl(0,\,0,\,(\alpha_k)_{k\in\Kset},\,x\bigr),
\end{aligned}
\end{equation*}
as required.
We will select this particular $y$ as the solution of \eqref{eq:wcmdp:active-system-vec} in the rest of the subsection and use it to construct the \lpfp subroutine.

Our next assumption concerns the dynamics induced by the state-action frequency $y$ in \eqref{eq:wcmdp:y-linear}. We let $\mathcal{P}\in\mathbb{R}^{(|\sspa||\aspa|)\times|\sspa|}$ be the transition matrix with entries $\mathcal{P}_{(s,a),s'} = P(s,a,s')$, and define
\begin{equation}\label{eq:wcmdp:phi-def}
    \Phi \triangleq  \Cinvs \mathcal{P} - \vone^\top\statdist\,\Cinvs\mathcal{P},
\end{equation}
where $\vone^\top\in\mathbb{R}^{|\sspa|}$ is the all-ones column vector. We will show in \Cref{sec:wcmdp:olc:dynamics} that $\Phi$ captures the one-step transition of $X_t([N]) - \statdist$ under the subroutine. 
We assume the following:

\begin{assumption}[Local stability]\label{assump:wcmdp:local-stability}
    The spectral radius of $\Phi$ defined in \eqref{eq:wcmdp:phi-def} is strictly less than $1$.
\end{assumption}

Intuitively, local stability gives \lpfp a restoring effect near $\statdist$. Under the locally linear mean dynamics, an initial deviation $v$ from $\statdist$ evolves as $v\Phi^t$; the spectral-radius condition ensures that these deviations decay geometrically over time.

The purpose of centering term $-\vone^\top\statdist\,\Cinvs\mathcal{P}$ in $\Phi$ is to replace the trivial eigenvalue $1$ of $\Cinvs\mathcal{P}$ with $0$ while preserving the effect of the operator on the probability simplex, as formalized in \fullversionref[prop:wcmdp:phi-vs-CinvsP]{subsec:wcmdp:local-stability-validity}.

\subsubsection{The subroutine}\label{sec:wcmdp:olc:algorithm}

\begin{algorithmfigure}[t]
\begin{algorithmframe}
\textbf{Input}: A set of $n$ arms with number of arms\\
\hspace*{1em}in each state $(z(s))_{s\in\sspa}$, LP solution $y^*$,\\
\hspace*{1em}left-inverse submatrix $\Cinvs$, feasibility radius $\eta$\\
\textbf{If} $n=0$, \textbf{return} without assigning actions.\\
\textbf{Assert} \Cref{assump:wcmdp:non-degeneracy} and $\norm{z/n - \statdist}_\umat \leq \eta$
\begin{algorithmic}[1]
    \State Compute $y \gets y^* + (z/n - \statdist)\,\Cinvs$ \label{alg:wcmdp:olc:compute}
    \For{each $s\in\sspa$ and each $a\in\aspa\setminus\{0\}$}
        \State Uniformly choose $\lfloor n\,y(s,a)\rfloor$ unassigned \label{alg:wcmdp:olc:assign}
        \LineCont arms in state $s$ and assign them action $a$
    \EndFor
    \State Assign action $0$ to all remaining arms
\end{algorithmic}
\end{algorithmframe}
\caption{\lpfp}
\label{alg:wcmdp:olc}
\end{algorithmfigure}

The \lpfp subroutine is formally stated as the pseudocode in \Cref{alg:wcmdp:olc}: for a nonempty input, it computes $y$ from \eqref{eq:wcmdp:y-linear} and then assigns arms by floor-rounding the target counts $n\,y(s,a)$. Within each state, the resulting allocation is uniform over all assignments realizing those counts. The threshold $\eta$ and the weighted norm $\norm{\cdot}_\umat$ that appear in the assertion are derived in \Cref{sec:wcmdp:olc:feasibility}.

\subsubsection{Linear dynamics under \lpfp}\label{sec:wcmdp:olc:dynamics}

\Cref{alg:wcmdp:olc} induces a linear transition dynamics in expectation, up to an $O(1/N)$ rounding error. The following lemma makes this precise. %

\begin{restatable}[\lpfp transition dynamics]{lemma}{wcmdpolctransition}\label{lem:wcmdp:olc-transition}
    Under \Cref{assump:wcmdp:non-degeneracy,assump:wcmdp:local-stability}, suppose $\norm{X_t([N]) - \statdist}_\umat \leq \eta$ at time $t$ and all arms follow \Cref{alg:wcmdp:olc}. Let $Y_t$ be the realized state-action distribution, $y_t = y^* + (X_t([N]) - \statdist)\,\Cinvs$ as in \eqref{eq:wcmdp:y-linear}, and $\mathcal{P}$ and $\Phi$ be as in \eqref{eq:wcmdp:phi-def}. Then
    \begin{equation}\label{eq:wcmdp:olc-transition}
\begin{aligned}
& \E{X_{t+1}([N]) - \statdist \givenplain X_t} \\
    & \quad = (X_t([N]) - \statdist)\,\Phi \;+\; (Y_t - y_t)\,\mathcal{P},
\end{aligned}
\end{equation}
    where $\|(Y_t - y_t)\,\mathcal{P}\|_1 \leq \norm{Y_t - y_t}_1 \leq 2|\sspa|(|\aspa|-1)/N$.
\end{restatable}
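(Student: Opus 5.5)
The proof splits into two parts. First we compute the conditional mean transition exactly in terms of the realized state-action frequency $Y_t$. Then we rewrite $Y_t$ as the linear target $y_t$ plus a rounding residual. The local stability assumption is not needed for the identity itself. It enters only through the definition of $\eta$, which we take as given so that \Cref{alg:wcmdp:olc} is well defined. In particular, we assume the assertion guarantees that $y_t\ge 0$ entrywise, so the floor counts are nonnegative and feasible.

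\textbf{Step 1: exact mean transition.} Conditional on $X_t$ and the chosen actions, arms transition independently according to $P$. Hence
\[
\E{X_{t+1}([N]) \givenplain X_t, \veA_t} = Y_t\,\mathcal{P},
\]
and since $\mathcal{P}$ is a fixed matrix, conditioning only on $X_t$ gives $\E{X_{t+1}([N])\givenplain X_t}=\E{Y_t\givenplain X_t}\mathcal{P}$. \Cref{alg:wcmdp:olc} makes $Y_t$ deterministic given $X_t$: the counts $\lfloor N y_t(s,a)\rfloor$ depend only on $X_t$, and the uniform choice of arms does not affect $Y_t$. So we may write $Y_t\mathcal{P}=y_t\mathcal{P}+(Y_t-y_t)\mathcal{P}$. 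Next we expand
\[
y_t\mathcal{P}=y^*\mathcal{P}+(X_t([N])-\statdist)\Cinvs\mathcal{P}.
\]
Flow balance \eqref{eq:wcmdp:flow} gives $y^*\mathcal{P}=\statdist$. Subtracting $\statdist$, it remains to show that $(X_t([N])-\statdist)\Cinvs\mathcal{P}=(X_t([N])-\statdist)\Phi$. This holds because $(X_t([N])-\statdist)\vone^\top=0$: both $X_t([N])$ and $\statdist$ are probability vectors. Therefore the centering term $-\vone^\top\statdist\Cinvs\mathcal{P}$ in \eqref{eq:wcmdp:phi-def} contributes nothing. This proves \eqref{eq:wcmdp:olc-transition}.

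\textbf{Step 2: rounding bound.} The inequality $\|v\mathcal{P}\|_1\le\|v\|_1$ holds because each row of $\mathcal{P}$ is a probability distribution:
\[
\|v\mathcal{P}\|_1\le\sum_{(s,a)}|v(s,a)|\sum_{s'}P(s,a,s').
\]
For the bound on $\|Y_t-y_t\|_1$, take $a\ne 0$. By construction $Y_t(s,a)=\lfloor Ny_t(s,a)\rfloor/N$, so $0\le y_t(s,a)-Y_t(s,a)<1/N$. This requires that each state has enough arms to realize the floor counts. That follows from $\sum_{a\ne0}\lfloor Ny_t(s,a)\rfloor\le N\sum_a y_t(s,a)=Nx(s)$, using the marginal row of \eqref{eq:wcmdp:active-system} and $y_t(s,0)\ge0$. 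For $a=0$, the remaining arms give $Y_t(s,0)=x(s)-\sum_{a\ne0}Y_t(s,a)$, while also $y_t(s,0)=x(s)-\sum_{a\ne0}y_t(s,a)$. Hence
\[
|Y_t(s,0)-y_t(s,0)|\le\sum_{a\ne0}|y_t(s,a)-Y_t(s,a)|<(|\aspa|-1)/N.
\]
Summing over all states and actions gives $\|Y_t-y_t\|_1\le 2|\sspa|(|\aspa|-1)/N$.

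\textbf{Main obstacle.} The only delicate point is feasibility of the rounding, that is, nonnegativity of $y_t$ within the $\eta$-ball. We defer this to the feasibility discussion in \Cref{sec:wcmdp:olc:feasibility} and take it as part of the hypothesis $\norm{X_t([N])-\statdist}_\umat\le\eta$.
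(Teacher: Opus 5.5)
Your proposal is correct and follows essentially the same route as the paper. Like the paper, you use that $Y_t$ is a deterministic function of $X_t$, split $Y_t\mathcal{P}=y_t\mathcal{P}+(Y_t-y_t)\mathcal{P}$, and apply flow balance $y^*\mathcal{P}=\statdist$ together with $(X_t([N])-\statdist)\vone^\top=0$; you also bound the rounding residual by the same per-state floor argument and row-stochasticity of $\mathcal{P}$, and your explicit check that the floor counts fit within each state (via $y_t(s,0)\ge 0$ and the marginal row) is a detail the paper leaves implicit in its feasibility lemma.
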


The proof of \Cref{lem:wcmdp:olc-transition} is given in \fullversionref{app:wcmdp:subroutine-transition-lemmas}. %

\subsubsection{When is the subroutine feasible?}\label{sec:wcmdp:olc:feasibility}

Next, we derive the assertion on \Cref{alg:wcmdp:olc}, the sufficient condition for the \lpfp subroutine to be applicable.
Specifically, we find some weighted norm $\norm{\cdot}_\umat$ and a \emph{feasibility radius} $\eta > 0$, such that whenever $\norm{z/n - \statdist}_\umat \leq \eta$, we have
\begin{enumerate}[leftmargin=3em, label=(\roman*)]
    \item \emph{Non-negativity:} $y(s,a) \geq 0$ for all $(s,a)$;
    \item \emph{Budget constraints:} $\sumsa c_k(s,a)\, y(s,a) \leq \alpha_k$ for all $k\in[K]$.
\end{enumerate}
which guarantees that $y$ is a valid probability distribution and that the resulting actions of \lpfp satisfy the budget constraints.
Neither (i) nor (ii) is encoded in the linear system \eqref{eq:wcmdp:y-linear}, and both hold only when $x$ lies sufficiently close to $\statdist$.

\begin{restatable}{definition}{wcmdpwudef}\label{def:wcmdp:w-and-u}
    Assume \Cref{assump:wcmdp:aperiodic-unichain,assump:wcmdp:non-degeneracy,assump:wcmdp:local-stability}. Let $\wmat$ and $\umat$ be the $|\sspa|$-by-$|\sspa|$ matrices given by
    \begin{align}
        \label{eq:wcmdp:w-def}
        \wmat &= \sum_{k=0}^\infty (P_\pibs - \vone^\top \statdist)^k ((P_\pibs - \vone^\top \statdist)^\top)^k, \\
        \label{eq:wcmdp:u-def}
        \umat &= \sum_{k=0}^\infty \Phi^k (\Phi^\top)^k,
    \end{align}
    where $\Phi$ is given by \eqref{eq:wcmdp:phi-def}.
\end{restatable}

We write $\lamw \triangleq \norm{\wmat}_2$ and $\lamu \triangleq \norm{\umat}_2$, and let $\norm{\cdot}_\wmat$ and $\norm{\cdot}_\umat$ denote the $\wmat$- and $\umat$-weighted $L_2$ norms on $\R^{|\sspa|}$, defined by $\norm{u}_\wmat = \sqrt{u\,\wmat\,u^\top}$ and $\norm{u}_\umat = \sqrt{u\,\umat\,u^\top}$ for any row vector $u$.

Under \Cref{assump:wcmdp:local-stability} ($\rho(\Phi)<1$), $\umat$ is well-defined and satisfies the contraction property in \fullversionref{app:wcmdp:weighted-l2-norm-lemmas}. The feasibility radius $\eta$ is then
\begin{equation}\label{eq:wcmdp:eta-def}
\begin{aligned}
\eta &\triangleq \frac{\epsilon}{\opnorm{\Cinvs}_{\umat^{-1}}}, \qquad \text{where} \\
    \opnorm{\Cinvs}_{\umat^{-1}} &\triangleq \max_{(s,a)}\norm{(\Cinvs\, e_{(s,a)})^\top}_{\umat^{-1}},
\end{aligned}
\end{equation}
$e_{(s,a)}\in \R^{|\sspa||\aspa|}$ is the one-hot column vector with $1$ at entry $(s,a)$, and the constant $\epsilon$ is given by
\begin{equation}\label{eq:wcmdp:eps-eta}
\begin{aligned}
\epsilon &\triangleq \min\!\Big\{ \min_{\substack{(s,a)\colon y^*(s,a)>0}} y^*(s,a), \\
    &\quad \min_{k\notin\Kset} \frac{\alpha_k - \sumsa c_k(s,a)\,y^*(s,a)}{|\sspa||\aspa|} \Big\}.
\end{aligned}
\end{equation}

The inner minimum over $k\notin\Kset$ is interpreted as $+\infty$ when all budgets are tight. The denominator in \eqref{eq:wcmdp:eta-def} is positive because \eqref{eq:wcmdp:Cinvs-Cmat} implies $\Cinvs\ne 0$.

We now argue that whenever $\norm{x - \statdist}_\umat \leq \eta$, the target $y = y^* + (x - \statdist)\,\Cinvs$ satisfies (i) and (ii) above. For every $(s,a)\in\sspa\times\aspa$, the Cauchy--Schwarz inequality gives
\begin{equation}\label{eq:wcmdp:y-bound}
\abs{y(s,a)-y^*(s,a)}\leq\opnorm{\Cinvs}_{\umat^{-1}}\norm{x-\statdist}_\umat\leq\epsilon.
\end{equation}
Since $y^*(s,a) \geq \epsilon$ for all $(s,a)\notin\Uset\cup(\sempty\times\aspa)$ (by definition of $\epsilon$), it follows that $y(s,a) \geq 0$, verifying (i). The budget constraints (ii) is verified similarly. The formal statement is given in \fullversionref{lem:wcmdp:linear-feasibility}.

The feasibility condition $\norm{x-\statdist}_\umat\leq\eta$ uses floor rounding for every nonzero action, with remaining arms assigned action $0$.

\subsection{Two-set policy and feasibility}\label{sec:wcmdp:two-set}
The two-set policy (\Cref{alg:wcmdp:two-set}) is defined as follows: at each time step, it selects two subsets, $\Db_t$ and $\Da_t$, letting them follow the two subroutines, \lpfp and \frcontrol, respectively. The rest of the arms can take arbitrary actions as long as the budget constraints are satisfied, and we simply assign them action $0$.\footnote{Unspecified choices in \Cref{alg:wcmdp:two-set} are made uniformly at random among the admissible alternatives, using fresh randomness at each step.}

To select the subset $\Db_t$, we define the \emph{slack function}
\begin{equation}\label{eq:wcmdp:slack-def}
    \slkb(X_t, D) \triangleq  \eta\,m(D) \;-\; \norm{X_t(D) - m(D)\statdist}_\umat,
\end{equation}
For $D\ne\emptyset$, this slack is non-negative if and only if $\norm{X_t(D)/m(D) - \statdist}_\umat \leq \eta$, i.e., the empirical state distribution of the arms in $D$ lies within the feasibility radius $\eta$ of $\statdist$. For $D=\emptyset$, the slack is zero and the subroutine performs no assignments. Consequently, whenever $\slkb(X_t, D) \geq 0$, \lpfp is applicable to the arms in $D$.
\begin{definition}[$\errtol$-maximal feasible set]
\label{def:wcmdp:maximal-feasible-set}
    Given the current system state $x$ and $\errtol\geq0$,
    a set of arms $D\subseteq[N]$ is $\errtol$-maximal feasible if the two conditions hold:
    (1) $\slk(x, D)\geq 0$ or $D=\emptyset$;
    (2) for any $D'$ such that $D \subseteq D'\subseteq [N]$ and $\slkb(x, D') \geq \errtol$, we have $m(D') \leq m(D) + \errtol$.
\end{definition}

We choose $\Db_t$ to be a $\errtol$-maximal feasible set in the sense of \Cref{def:wcmdp:maximal-feasible-set} for some predetermined $\errtol\geq0$ with $\errtol=O(1/N)$, with some additional requirements specified in Lines \ref{alg:wcmdp:two-set:db-begin}--\ref{alg:wcmdp:two-set:db-end} of \Cref{alg:wcmdp:two-set}.

The subset $\Da_t\subseteq[N]\setminus\Db_t$ is chosen with size $\lfloor\alpha_{\min}(N-|\Db_t|)\rfloor$ and the nesting condition on Line~\ref{alg:wcmdp:two-set:da}, where $\alpha_{\min} \triangleq \min_{k\in[K]} \alpha_k$ ensures that the arms in $\Da_t$ can follow \frcontrol while satisfying the budget constraints. Both set selections admit at least one admissible choice, as shown in \fullversionref[lem:wcmdp:subroutine-conform]{sec:wcmdp:subroutine-conform}.

\begin{algorithmfigure}[t]
\begin{algorithmframe}
\textbf{Input}: number of arms $N$, budgets $(\alpha_k N)_{k\in[K]}$,\\
\hspace*{1em}an optimal solution of LP-relaxation $y^*$,\\
\hspace*{1em}left-inverse submatrix $\Cinvs$, feasibility radius $\eta$,\\
\hspace*{1em}$\alpha_{\min} \triangleq \min_{k\in[K]} \alpha_k$,\\
\hspace*{1em}error tolerance $\errtol\geq0$ with $\errtol = O(1/N)$,\\
\hspace*{1em}initial system state $X_0$, initial state vector $\veS_0$,\\
\hspace*{1em}initial subsets $\Db_{-1} = \Da_{-1} = \emptyset$
\begin{algorithmic}[1]
    \For{$t = 0, 1, \ldots$}
        \If{$\slkb(X_t, [N]) \geq 0$} \label{alg:wcmdp:two-set:db-begin}
            \State Let $\Db_t = [N]$
        \ElsIf{$\slkb(X_t, \Db_{t-1}) \geq 0$}
            \State Let $\Db_t$ be any $\errtol$-maximal
            \LineCont feasible subset such that $\Db_t \supseteq \Db_{t-1}$
        \Else
            \State Let $\Db_t$ be any $\errtol$-maximal \label{alg:wcmdp:two-set:db-end}
            \LineCont feasible subset
        \EndIf
        \State Let $\Da_t \subseteq [N]\setminus\Db_t$ with \label{alg:wcmdp:two-set:da}
        \LineCont $|\Da_t| = \lfloor\alpha_{\min}(N-|\Db_t|)\rfloor$,
        \LineCont s.t.\ either $\Da_t \supseteq \Da_{t-1}\setminus\Db_t$
        \LineCont or $\Da_t \subseteq \Da_{t-1}\setminus\Db_t$
        \State Set $A_t(i)$ for $i\in\Db_t$ using \Cref{alg:wcmdp:olc} \label{alg:wcmdp:two-set:db-action}
        \State Set $A_t(i)$ for $i\in\Da_t$ using \Cref{alg:wcmdp:uoc} \label{alg:wcmdp:two-set:da-action}
        \State Set $A_t(i) = 0$ for all $i\notin\Db_t\cup\Da_t$ \label{alg:wcmdp:two-set:remaining-action}
        \State Apply $(A_t(i))_{i\in[N]}$; observe $\veS_{t+1}$
    \EndFor
\end{algorithmic}
\end{algorithmframe}
\caption{Two-Set Policy for WCMDPs}
\label{alg:wcmdp:two-set}
\end{algorithmfigure}

\subsubsection*{Feasibility under the global budget constraint.}
By construction, whenever $\slkb(X_t, \Db_t) \geq 0$, \Cref{alg:wcmdp:olc} applied to $\Db_t$ is well-defined, and the type-$k$ cost of arms in $\Db_t$ satisfies $\frac{1}{N}\sum_{i\in\Db_t} c_k(S_t(i), A_t(i)) \leq m(\Db_t)\,\alpha_k$ for each $k\in[K]$.
The \frcontrol on $\Da_t$ incurs at most $|\Da_t|/N \leq \alpha_{\min}(1 - m(\Db_t)) \leq \alpha_k(1 - m(\Db_t))$ type-$k$ cost for every $k$ (we recall that $c_k(s,a) \leq 1$ for all $s\in\sspa$ and $a\in\aspa$).
Since the remaining arms take the zero-cost action $0$, the full system satisfies 
\[
\begin{aligned}
& \frac{1}{N}\sum_{i=1}^N c_k(S_t(i), A_t(i)) \\
    & \quad \leq \alpha_k m(\Db_t) + \alpha_k\big(1-m(\Db_t)\big) = \alpha_k, \quad \forall k\in[K].
\end{aligned}
\] 
The full subroutine conformity argument appears in \fullversionref[lem:wcmdp:subroutine-conform]{sec:wcmdp:subroutine-conform}.

\section{Optimality gap}\label{sec:wcmdp:proof-outline}

\begin{theorem}[Optimality gap for the WCMDP two-set policy]\label{thm:wcmdp:achievability}
    Suppose \Cref{assump:wcmdp:aperiodic-unichain}, \Cref{assump:wcmdp:non-degeneracy}, and \Cref{assump:wcmdp:local-stability} hold. Let $\pi$ denote the two-set policy given in \Cref{alg:wcmdp:two-set} with error tolerance $\errtol=O(1/N)$. Then $\pi$ satisfies
    \begin{equation}\label{eq:wcmdp:main-result}
        \rrel - \rsysn = O\!\left(\frac{1}{N}\right).
    \end{equation}
\end{theorem}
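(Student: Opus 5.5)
We bound the gap through a decomposition of the per-step reward. The approach has three steps, carried out in order: (a) a drift (Lyapunov) argument showing the system reaches and stays near the regime where $\Db_t=[N]$; (b) a bound on the probability of leaving that regime; (c) a local linearity argument inside the regime that turns the expected deviation from $\statdist$ into an $O(1/N)$ reward loss.

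\textbf{Reward decomposition.} Write the reward at time $t$ as $\sumsa r(s,a)Y_t(s,a)$ and compare it with $\rrel=\sumsa r(s,a)y^*(s,a)$. On the event $E_t=\{\slkb(X_t,[N])\ge 0\}$, all arms follow \lpfp. Then $Y_t=y^*+(X_t([N])-\statdist)\Cinvs+(Y_t-y_t)$, and by \Cref{lem:wcmdp:olc-transition} the rounding residual satisfies $\norm{Y_t-y_t}_1\le 2|\sspa|(|\aspa|-1)/N$. On $E_t^c$, the per-step loss is at most $2\rmax$. Thus
\[
\rrel-\E{\textstyle\sumsa r Y_t}\le \Bigl|\E{(X_t([N])-\statdist)\indibrac{E_t}}\Cinvs r^\top\Bigr| + 2\rmax\Prob{E_t^c}+O(1/N).
\]
It therefore suffices to show, in steady state (via Ces\`aro averages, or stationarity of the finite-state chain induced by the policy), that $\Prob{E_t^c}=O(1/N)$ and that the expected deviation restricted to $E_t$ is $O(1/N)$.

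\textbf{Step (a): drift.} I would construct a Lyapunov function $V$ in the spirit of the two-set analysis. Its main part is $\Vds=\norm{X_t(\Da_t)-m(\Da_t)\statdist}_\wmat$-type terms, which contract under \frcontrol by the $\wmat$-norm contraction implied by \Cref{assump:wcmdp:aperiodic-unichain}. It also includes a term $1-m(\Db_t)$, which drifts down because the $\errtol$-maximal set grows as $\Da_t$-arms mix toward $\statdist$ and can be absorbed into $\Db_{t+1}$. Here the nesting rules in \Cref{alg:wcmdp:two-set} are used to control $V(t+1)-V(t)$. The $\umat$-norm slack of $\Db_t$ contracts by \Cref{assump:wcmdp:local-stability}, up to $O(1/\sqrt N)$ conditional fluctuations and an $O(1/N)$ rounding error. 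This yields $\E{V(t+1)\mid \cdot}\le (1-\rho)V(t)+O(1/\sqrt N)$, so $\E{V}=O(1/\sqrt N)$ in steady state. Combined with exponential-type concentration of the martingale increments (bounded differences of order $1/N$ per arm), it yields $\Prob{E_t^c}\le$ something like $\exp(-cN)$, or at least $O(1/N)$. This concentration step is the main obstacle. A plain second-moment bound gives $\Prob{\norm{X_t-\statdist}_\umat>\eta}=O(1/N)$ only if $\E{\norm{X_t-\statdist}_\umat^2}=O(1/N)$. So I would aim for the second-moment version of the drift: the squared $\umat$-norm contracts as $\norm{v\Phi}_\umat^2\le (1-\rho)\norm{v}_\umat^2$ plus conditional variance $O(1/N)$, and the same holds for the $\wmat$ part. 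Markov's inequality then suffices.

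\textbf{Step (c): local linearity.} Let $Z_t=(X_t([N])-\statdist)\indibrac{E_t}$. On $E_t$, \Cref{lem:wcmdp:olc-transition} gives $\E{X_{t+1}-\statdist\mid X_t}=(X_t-\statdist)\Phi+O(1/N)$. Taking expectations and accounting for the indicator switches, which cost $O(\Prob{E_t^c}+\Prob{E_{t+1}^c})$ since deviations are bounded, gives
\[
\E{Z_{t+1}}=\E{Z_t}\Phi+O(1/N)+O\bigl(\Prob{E_t^c}+\Prob{E_{t+1}^c}\bigr).
\]
In steady state $\bar z=\bar z\Phi+O(1/N)$. Since $\rho(\Phi)<1$, $I-\Phi$ is invertible, so $\bar z=O(1/N)$. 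Note that $\bar z$ need not lie in the mean-zero subspace exactly, but invertibility makes this irrelevant. Substituting $\bar z=O(1/N)$ and $\Prob{E^c}=O(1/N)$ into the decomposition gives $\rrel-\rsysn=O(1/N)$.
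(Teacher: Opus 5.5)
Your proposal is correct. It reaches the theorem by a route that differs from the paper's in how excursions from the $\threshbar$-ball are handled.

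\textbf{How the paper proceeds.} The paper never estimates $\Prob{E^c}$ separately. It uses the combined function $\Vll=(\Vds-\threshbar/2)^+ + \Lvll\locapprox$, with $\locapprox(\fullstate)=(\statdist-x([N]))Q\gvec$ and $Q=(I-\Phi)^{-1}$. The drift of $\locapprox$ reproduces $-(\rrel-r^\pi)$ up to the rounding residual and a bounded indicator of leaving the $\threshbar$-ball. That indicator is cancelled by the $-\tfrac{(1-\rhoFinal)\threshbar}{2}\indibrac{\Vds>\threshbar}$ drift of the truncated $\Vds$, whose remainder is $O(e^{-CN})$ by the tail bound \eqref{eq:wcmdp:Vds-tail}.

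\textbf{Your step (c).} This is the same linear algebra in stationary-mean form. Since $\bar z\vone^\top=0$, we have $\bar z\Cinvs\rvec^\top=\bar z\gvec=\bar z(I-\Phi)Q\gvec$. This is exactly what $\E{\Delta\locapprox(\Fullstate_\infty)}=0$ encodes.

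\textbf{The genuine difference.} You settle for $\Prob{E^c}=O(1/N)$ via a second moment and Markov's inequality, instead of an exponential tail. This suffices, because rounding already costs $O(1/N)$. It is also more elementary: variance bounds replace Hoeffding, and the truncation lemma becomes unnecessary. The trade-off is that excursions now contribute at the same order as rounding. The paper's exponential tail isolates rounding as the only $O(1/N)$ source, which is what one would need to push below $1/N$ when the rounding residual vanishes.

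\textbf{One point to do carefully.} Run the second-moment recursion on the full two-set function $\Vds$, not componentwise on squared $\umat$- and $\wmat$-norms, because the set updates couple the terms.
\begin{itemize}
\item Start from $\Vds(\Fullstate_{t+1})\le\rhoFinal\Vds(\Fullstate_t)+\xi_{t+1}$. Here $\xi_{t+1}$ collects the two subset-norm fluctuations, the lost-mass term $4(\lamu^{1/2}+\lamw^{1/2})m(\Db_t\setminus\Db_{t+1})$, and $\Kdrift/N$.
\item You need $\E{(\xi_{t+1}^+)^2\mid\Fullstate_t}=O(1/N)$. The only nonobvious piece is the lost mass. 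Its second moment is at most $m(\Db_t)^2\,\Prob{\hu(X_{t+1},\Db_t)>\eta m(\Db_t)\mid\Fullstate_t}$. This is $O(1/N)$ by \Cref{lem:wcmdp:non-shrink}, or by Chebyshev at margin $(1-\rhou)\eta m(\Db_t)$.
\item Since $\Vds\ge 0$, this gives $\E{\Vds(\Fullstate_{t+1})^2\mid\Fullstate_t}\le\rhoFinal^2\Vds^2+2\rhoFinal\Vds\,O(1/\sqrt N)+O(1/N)$. Hence $\E{\Vds^2}=O(1/N)$ in stationarity.
\item By \eqref{eq:wcmdp:Vds-dom}, $\Prob{E^c}\le\Prob{\Vds>\eta}=O(1/N)$.
\end{itemize}

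\textbf{What step (a) leaves implicit.} Sufficient coverage (\Cref{lem:wcmdp:sufficient-coverage}) and almost non-shrinking of $\Db_t$ are only implicit in your step (a). You need both, exactly as the paper does.
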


The proof uses \Cref{lem:wcmdp:olc-transition,lem:wcmdp:inst-reward}, which characterize the mean transitions and instantaneous reward under \lpfp. Both include an $O(1/N)$ floor-rounding residual that contributes the $O(1/N)$ term in our bound. The proof outline appears in \Cref{sec:wcmdp:analysis}, with details in \fullversionref{app:wcmdp:upper}.

\section{Analysis of the two-set policy}\label{sec:wcmdp:analysis}

This section outlines the proof of \Cref{thm:wcmdp:achievability}; detailed arguments appear in \fullversionref{app:wcmdp:upper}.

Throughout the analysis we assume \Cref{assump:wcmdp:aperiodic-unichain,assump:wcmdp:non-degeneracy,assump:wcmdp:local-stability} and set $\threshbar\triangleq\eta$. If $\rmax=0$, the optimality gap is zero, so we assume $\rmax>0$ below.

Under the two-set policy, $\Fullstate_t\triangleq(X_t,\Db_t,\Da_t)$ is a time-homogeneous finite-state Markov chain. We let $\fullssp$ denote its state space and write $\fullstate=(x,\Db,\Da)$ for a generic element of $\fullssp$.
For ease of presentation, we suppose that $\Fullstate_t$ is aperiodic. For any fixed initial state $(X_0,\Db_0,\Da_0)$, we let $\Fullstate_\infty=(X_\infty,\Db_\infty,\Da_\infty)$ have its limiting distribution. For a general finite-state chain, we instead let $\Fullstate_\infty$ have the limit of the time-averaged state distributions for the chosen initial state. This distribution is stationary and satisfies the same long-run reward identity below, so the argument applies unchanged.

We define the one-step drift operator
\[
    \Delta f(\fullstate) \triangleq  \E{f(\Fullstate_{t+1}) \given \Fullstate_t = \fullstate} - f(\fullstate)
\]
for any function $f\colon\fullssp\to\R$. Stationarity gives $\E{\Delta f(\Fullstate_\infty)}=0$.

We define the expected instantaneous reward under the two-set policy as
\[
    r^\pi(\fullstate) \triangleq  \sumsa r(s,a)\,\Ebig{Y_t(s,a) \givenbig \Fullstate_t=\fullstate}.
\]
The long-run average reward of the policy then satisfies $\rsysn = \E{r^\pi(\Fullstate_\infty)}$.

\subsection{Proof outline}\label{sec:wcmdp:proof-outline-app}

\subsubsection{Lyapunov framework}\label{sec:wcmdp:proof-outline:framework}
To show $\rrel - \rsysn \leq \loworder$ for some low-order term $\loworder \geq 0$,
it suffices to find $V, \drftup\colon\fullssp\to\R$ satisfying the drift condition $\Delta V(\fullstate) \leq -\drftup(\fullstate) + \loworder$ and the dominance condition $\drftup(\fullstate) \geq \rrel - r^\pi(\fullstate)$ for all $\fullstate\in\fullssp$.

\subsubsection{Proving \texorpdfstring{$O(1/\sqrt{N})$}{O(1/sqrt(N))} optimality gap}\label{sec:wcmdp:proof-outline:warmup}
As a warm-up, we bound the reward gap by the distance from $\statdist$, allowing for rounding:
\begin{equation}\label{eq:wcmdp:warmup-reward-bound}
\begin{aligned}
\rrel-r^\pi(\fullstate)
    &\leq K_0\norm{\statdist-x([N])}_\umat \\
    &\quad + \frac{2\rmax|\sspa|(|\aspa|-1)}{N}.
\end{aligned}
\end{equation}
Here $K_0>0$ is a constant independent of $N$.
We control this distance using two \emph{subset Lyapunov functions}:
\begin{align}
    \label{eq:hw-def}
    \hw(x, D) &= \norm{x(D) - m(D)\statdist}_\wmat \\
    \label{eq:hu-def}
    \hu(x, D) &= \norm{x(D) - m(D)\statdist}_\umat,
\end{align}
where matrices $\wmat, \umat$ are defined via $P_\pibs, \Phi$ as in \Cref{def:wcmdp:w-and-u}.

We then set
\begin{equation}\label{eq:wcmdp:sketch:Vds}
\Vds(\fullstate)\triangleq\hu(x,\Db)+\hw(x,\Da)+\Lds(1-m(\Db)),
\end{equation}
where $\Lds = 2\lamu^{1/2} + 4\lamw^{1/2} - 2\lamw^{1/2}\beta$ with $\beta \triangleq \alpha_{\min}$ and $\lamw, \lamu$ being the spectral radii of $\wmat$ and $\umat$. %

\begin{restatable}[Properties of $\Vds$]{lemma}{wcmdpvdsproperties}\label{lem:wcmdp:Vds-properties}
    Assume \Cref{assump:wcmdp:aperiodic-unichain,assump:wcmdp:non-degeneracy,assump:wcmdp:local-stability} hold, and let $\threshbar = \eta$. There exist $\rhoFinal\in(0,1)$ and $\KVds,\KVdst,\CVds > 0$ independent of $N$ such that, for all $t\geq 0$ and all $\fullstate=(x,\Db,\Da)\in\fullssp$, we have
    \begin{align}
    & \E{\bigl(\Vds(\Fullstate_{t+1}) - \rhoFinal\,\Vds(\Fullstate_t)\bigr)^{\!+} \given \Fullstate_t=\fullstate} \leq \frac{\KVds}{\sqrt{N}}, \label{eq:wcmdp:Vds-mean} \\
    & \E{\Bigl(\Vds(\Fullstate_{t+1}) - \rhoFinal\,\Vds(\Fullstate_t) - \tfrac{(1-\rhoFinal)\threshbar}{2}\Bigr)^{\!+} \given \Fullstate_t=\fullstate} \nonumber \\
    & \quad \leq \KVdst\exp(-\CVds N), \label{eq:wcmdp:Vds-tail}
\end{align}
    and
    \begin{equation}\label{eq:wcmdp:Vds-dom}
        \Vds(\fullstate) \geq  \norm{x([N]) - \statdist}_\umat.
    \end{equation}
\end{restatable}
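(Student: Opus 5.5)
Dominance \eqref{eq:wcmdp:Vds-dom} comes first and is the easy part. By the triangle inequality,
\[
\norm{x([N])-\statdist}_\umat\le \hu(x,\Db)+\norm{x([N]\setminus\Db)-(1-m(\Db))\statdist}_\umat .
\]
Write the complement $[N]\setminus\Db$ as $\Da$ together with the rest $R$. Convert the $\Da$ term to the $W$-norm using the norm-equivalence constant $\lamu^{1/2}\norm{\cdot}_2\le\lamu^{1/2}\lamw^{1/2}\cdots$; more precisely, use $\norm{v}_\umat\le\lamu^{1/2}\norm{v}_2$ and $\norm{v}_2\le\norm{v}_\wmat$, which holds because $\wmat\succeq I$ (its $k=0$ term is $I$). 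For $R$, bound $\norm{x(R)-m(R)\statdist}_\umat\le 2\lamu^{1/2}m(R)$. We have $m(R)\le 1-m(\Db)$ and $m(\Da)\le 1-m(\Db)$. The choice of $\Lds$ contains enough multiples of $\lamu^{1/2}$ and $\lamw^{1/2}$ to absorb all of these, and the triangle step only needs coefficient matching.

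For the drift bounds \eqref{eq:wcmdp:Vds-mean}--\eqref{eq:wcmdp:Vds-tail}, I would split $\Vds(\Fullstate_{t+1})-\rhoFinal\Vds(\Fullstate_t)$ into the three components and bound each conditionally on $\Fullstate_t=\fullstate$.

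\textbf{(a)} On $\Db_t$, apply the subset version of \Cref{lem:wcmdp:olc-transition}. The conditional mean of $X_{t+1}(\Db_t)-m(\Db_t)\statdist$ is $(X_t(\Db_t)-m(\Db_t)\statdist)\Phi+O(1/N)$. The contraction $\norm{v\Phi}_\umat\le\rhou\norm{v}_\umat$ holds with $\rhou<1$, since $\umat=I+\Phi\umat\Phi^\top$. Conditionally independent transitions give fluctuations of order $\sqrt{m/N}$ in mean and Hoeffding/McDiarmid tails of order $e^{-CN}$ at constant deviation. This yields $\hu(X_{t+1},\Db_t)\le\rhou\hu(X_t,\Db_t)+O(1/\sqrt N)$ in mean, and exceedance of a constant margin with probability only $e^{-CN}$.

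\textbf{(b)} On $\Da_t$, the analogous bound uses $P_\pibs$ and $\wmat$, giving contraction rate $\rhow$.

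\textbf{(c)} Account for the set changes $\Db_{t+1}$ versus $\Db_t$ and $\Da_{t+1}$ versus $\Da_t$. I expect this to be the main obstacle. Three cases follow the branches of \Cref{alg:wcmdp:two-set}.

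If the slack at $t+1$ of $\Db_t$ stays nonnegative, then $\Db_{t+1}\supseteq\Db_t$. Adding arms does not increase $\Lds(1-m(\Db))$. Each added arm costs at most $2\lamu^{1/2}/N$ in $\hu$, and transferring arms out of $\Da$ or from the rest perturbs $\hw$ by at most $2\lamw^{1/2}$ per unit mass. These increments are paid by the decrease of $\Lds(1-m(\Db))$, which is why $\Lds$ carries the $2\lamu^{1/2}+4\lamw^{1/2}$ terms.

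Progress in the $1-m(\Db)$ term comes from maximality. By \Cref{def:wcmdp:maximal-feasible-set}, any contracted slack can be converted into growth of $\Db$. Specifically, because $\hu(X_{t+1},\Db_t)\le\rhou\hu+\text{noise}$, the slack $\eta m(\Db_t)-\hu$ grows by roughly $(1-\rhou)\hu$. The $\Da$ arms contract toward $\statdist$ at rate $\rhow$, so a fraction of them can be merged into $\Db$ while keeping the slack nonnegative. The $\errtol$-maximality then forces $m(\Db_{t+1})$ to increase by an amount proportional to $(1-\rhoFinal)(1-m(\Db_t))$, up to $O(1/N)+O(\text{noise})$. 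The nesting rule on $\Da_t$ (grow or shrink monotonically) keeps the $\hw$ bookkeeping consistent, and the $-2\lamw^{1/2}\beta$ term uses $|\Da|\approx\beta(N-|\Db|)$.

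If the slack becomes negative, which by (a) requires a deviation of constant size, this happens with probability at most $e^{-CN}$. $\Vds$ is bounded by a constant, so this contributes only $O(e^{-CN})$, and this gives \eqref{eq:wcmdp:Vds-tail}.

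Finally, choose $\rhoFinal=\max(\rhou,\rhow,1-c)$ for a suitable small $c$. Combine the positive parts via $(a+b)^+\le a^++b^+$. Sum the $O(1/\sqrt N)$ mean fluctuations to get $\KVds$, and the exponential tails to get $\KVdst,\CVds$.
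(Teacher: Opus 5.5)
\textbf{Your steps (a) and (b) match the paper, but step (c) rests on a false claim, and the real mechanism is a different one.}

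You assert that $\errtol$-maximality forces $m(\Db_{t+1})$ to grow, in one step, by an amount proportional to $1-m(\Db_t)$. This fails in general. Suppose that after one transition no subset of arms has empirical distribution within $\eta$ of $\statdist$, which can happen when the chain mixes slowly (e.g.\ $\Db_t=\emptyset$ and all arms start in one state far from $\statdist$). Then $\Db_{t+1}=\emptyset$ even though $1-m(\Db_t)=1$.

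The paper never needs $\Db$ to grow. The missing idea is a \emph{static} use of maximality at time $t$ (\Cref{lem:wcmdp:sufficient-coverage}). If $\lamu^{1/2}\hw(X_t,\Da_t)\le\eta\, m(\Da_t)-\errtol$, then $\Db_t\cup\Da_t$ has slack at least $\errtol$ and mass exceeding $m(\Db_t)+\errtol$, which contradicts maximality. Together with $m(\Da_t)\ge\beta(1-m(\Db_t))-1/N$, this yields $1-m(\Db_t)\le\KscOne\hw(X_t,\Da_t)+\KscTwo/N$.

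Consequently $(1-\rhoFinal)\Vds(\Fullstate_t)\le(1-\rhou)\hu(X_t,\Db_t)+(1-\rhow)\hw(X_t,\Da_t)+O(1/N)$. Here $\rhoFinal=1-1/K$ with $K(1-\rhou)\ge1$ and $K(1-\rhow)\ge1+\Lds\KscOne$. So the $\Lds(1-m(\Db))$ term is paid for by the contraction of $\hw$ on $\Da_t$, and your ``suitable small $c$'' cannot be chosen until you have this comparison.

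For the set update, you then only need an upper bound valid for \emph{arbitrary} $\Db_{t+1}$, not just supersets. The Lipschitz bound plus the nesting rule for $\Da$, treating both cases $\Da_{t+1}\supseteq\Da_t\setminus\Db_{t+1}$ and $\Da_{t+1}\subseteq\Da_t\setminus\Db_{t+1}$, gives an increment of at most $4(\lamu^{1/2}+\lamw^{1/2})\,m(\Db_t\setminus\Db_{t+1})+2\lamw^{1/2}/N$. The specific value of $\Lds$ is exactly what cancels the $m(\Db_{t+1}\setminus\Db_t)$ terms.

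\textbf{Your treatment of the loss event is also wrong.} You say $\Db_t$ becomes infeasible only with probability $e^{-CN}$ because this needs a constant-size deviation. In fact the margin after contraction is $(1-\rhou)\eta\,m(\Db_t)$, which is proportional to $m(\Db_t)$. The probability is therefore only $\lesssim\exp(-CNm(\Db_t)^2)$, which is of order one when $m(\Db_t)\sim1/\sqrt N$. Multiplying it by the uniform bound on $\Vds$ proves neither \eqref{eq:wcmdp:Vds-mean} nor \eqref{eq:wcmdp:Vds-tail}.

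The fix uses the set-update bound above: the damage is at most $4(\lamu^{1/2}+\lamw^{1/2})\,m(\Db_t)$. In mean, this contributes at most a constant times $\sup_{m\ge0} m\,e^{-CNm^2}=O(1/\sqrt N)$. For the tail bound, the damage cannot exceed a fixed fraction of the margin $(1-\rhoFinal)\threshbar/2$ unless $m(\Db_t)$ exceeds a constant. In that regime the loss probability is $e^{-C'N}$.

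\textbf{A minor point on dominance.} Converting the $\Da$ part to $\lamu^{1/2}\hw$ leaves an excess that $\Lds$ would have to absorb, requiring roughly $\Lds\gtrsim 2\lamu^{1/2}\lamw^{1/2}$, which need not hold. Instead, bound the whole complement term directly by $2\lamu^{1/2}(1-m(\Db))\le\Lds(1-m(\Db))$, as the Lipschitz argument does.
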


The proof of \Cref{lem:wcmdp:Vds-properties}, including the transition residual from \Cref{lem:wcmdp:olc-transition}, is given in \fullversionref{sec:wcmdp:pf-Vds-properties}.

Combining \eqref{eq:wcmdp:Vds-mean} with \eqref{eq:wcmdp:Vds-dom} gives $\Delta\Vds(\fullstate) \leq -(1-\rhoFinal)\norm{x([N])-\statdist}_\umat + \KVds/\sqrt{N}$. Taking stationary expectations and applying \eqref{eq:wcmdp:warmup-reward-bound} yields
\[
\begin{aligned}
\rrel-\rsysn &\leq \frac{K_0\KVds}{(1-\rhoFinal)\sqrt{N}} + \frac{2\rmax|\sspa|(|\aspa|-1)}{N}\\
    &= O(1/\sqrt{N}).
\end{aligned}
\]

\subsubsection{Proving \texorpdfstring{$O(1/N)$}{O(1/N)} optimality gap}\label{sec:wcmdp:proof-outline:exponential}
To sharpen the optimality gap, we use a second Lyapunov function:
\begin{equation}\label{eq:wcmdp:sketch:Vll}
    \Vll(\fullstate) \triangleq  \Bigl(\Vds(\fullstate) - \tfrac{\threshbar}{2}\Bigr)^{\!+} + \Lvll\,\locapprox(\fullstate),
\end{equation}
with $\locapprox(\fullstate) \triangleq (\statdist - x([N]))\,Q\,\gvec$, $Q \triangleq (I-\Phi)^{-1}$ (well-defined by \Cref{assump:wcmdp:local-stability}), $\lamq$ as in \eqref{eq:lamq-ef}, and $\Lvll \triangleq (1-\rhoFinal)\threshbar/(4\lamq\Kg + 4\rmax + 4\Kg)$, where the vector $\gvec$ and the constant $\Kg$ are defined in \Cref{lem:wcmdp:inst-reward} below. Here
\begin{equation}\label{eq:lamq-ef}
\lamq\triangleq\max_{v\in\simplex(\sspa)}\max\{\norm{(v-\statdist)Q}_1,\norm{(v-\statdist)\Phi Q}_1\}.
\end{equation}

Near $\statdist$, the local-approximation term $\locapprox$ produces a drift equal to the negative instantaneous reward gap up to an $O(1/N)$ rounding residual.
The truncated $\Vds$ term offsets the additional error outside this local region.
The following three lemmas establish the resulting drift bound for $\Vll$.

The first lemma characterizes the instantaneous expected reward near $\statdist$. Its proof requires new arguments specific to \lpfp; see \fullversionref{sec:wcmdp:pf-inst-reward}.

\begin{restatable}[Instantaneous reward under \lpfp]{lemma}{wcmdpinstrew}\label{lem:wcmdp:inst-reward}
    Under the WCMDP two-set policy, for any $\fullstate=(x,\Db,\Da)\in\fullssp$ with $\norm{x([N]) - \statdist}_\umat \leq \threshbar \triangleq \eta$, we have $\Db=[N]$, and
    \[
        r^\pi(\fullstate) = \rhat(x([N])) \;+\; \errrew(\fullstate),
    \]
    where $\rhat(v) \triangleq \rrel + (v-\statdist)\,\gvec$ with $\gvec \triangleq \Cinvs\,\rvec^\top - (\statdist\,\Cinvs\,\rvec^\top)\,\vone^\top \in \R^{|\sspa|}$, $\rvec$ is the row vector $(r(s,a))_{s\in\sspa,a\in\aspa}$, and $\errrew \colon \fullssp\to\R$ satisfies $|\errrew(\fullstate)| \leq 2\rmax\,|\sspa|(|\aspa|-1)/N$. Moreover, $\norm{\gvec}_\infty \leq \Kg \triangleq 2\rmax\bigl(1 + \sqrt{2}\,\lamu^{1/2}/\eta\bigr)$.
\end{restatable}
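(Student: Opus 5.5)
The argument has three parts. First, the condition $\norm{x([N])-\statdist}_\umat\le\threshbar=\eta$ forces $\Db=[N]$. Second, under \lpfp the expected reward is an exact linear function of the target $y$, plus a rounding error. Third, the gradient $\gvec$ admits the stated bound.

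For the first part, the slack in \eqref{eq:wcmdp:slack-def} satisfies $\slkb(x,[N])=\eta-\norm{x([N])-\statdist}_\umat\ge 0$, since $m([N])=1$. The first branch of \Cref{alg:wcmdp:two-set} therefore fires and sets $\Db_t=[N]$. Because $\Db_t=[N]$, we have $\Da_t=\emptyset$, since its size is $\lfloor\alpha_{\min}\cdot 0\rfloor=0$. All arms then follow \Cref{alg:wcmdp:olc} with $n=N$ and $z/n=x([N])$. One caveat: $\fullstate$ records $\Db$ from the chain. Since the chain state at time $t$ is produced by the policy, any reachable $\fullstate$ with this property has $\Db=[N]$, so I state the claim for reachable states.

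For the second part, write $v=x([N])$. Then $r^\pi(\fullstate)=\E{Y_t\given\Fullstate_t=\fullstate}\rvec^\top$, which I decompose as $y\rvec^\top+(\E{Y_t}-y)\rvec^\top$ with $y=y^*+(v-\statdist)\Cinvs$. The realized $Y_t$ is deterministic given the state counts, apart from which arms receive which actions. By \Cref{lem:wcmdp:olc-transition}, $\norm{Y_t-y}_1\le 2|\sspa|(|\aspa|-1)/N$. Hence $\errrew\triangleq(\E{Y_t}-y)\rvec^\top$ satisfies $|\errrew|\le\rmax\norm{Y_t-y}_1\le 2\rmax|\sspa|(|\aspa|-1)/N$. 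The main term is $y\rvec^\top=y^*\rvec^\top+(v-\statdist)\Cinvs\rvec^\top$, and $y^*\rvec^\top=\rrel$. Since both $v$ and $\statdist$ lie in the simplex, $(v-\statdist)\vone^\top=0$. Subtracting the constant $(\statdist\Cinvs\rvec^\top)\vone^\top$ therefore does not change the product, and $(v-\statdist)\Cinvs\rvec^\top=(v-\statdist)\gvec$. This gives $r^\pi=\rhat(v)+\errrew$.

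The step I expect to be most delicate is the bound $\norm{\gvec}_\infty\le\Kg$. The direct estimate is $|(\Cinvs\rvec^\top)_s|\le\rmax\sum_{(s',a)}|\Cinvs(s,(s',a))|$, which is not what $\Kg$ suggests. Instead I will evaluate $\gvec$ through directions in the feasibility ball. For each state $s$, take $v=\statdist+\theta(e_s-\statdist)$ with $\theta$ chosen so that $\norm{\theta(e_s-\statdist)}_\umat\le\eta$. Since $\norm{e_s-\statdist}_2\le\sqrt2$, the choice $\theta=\eta/(\sqrt2\lamu^{1/2})$ works (or $\theta=1$ if that is smaller). At such a $v$, the target $y$ is a genuine probability distribution by \eqref{eq:wcmdp:y-bound} and the non-negativity argument, so $|y\rvec^\top|\le\rmax$. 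Then $\theta(e_s-\statdist)\gvec=y\rvec^\top-\rrel$ has absolute value at most $2\rmax$. Because $\statdist\gvec=0$ by construction, this gives $|\gvec_s|\le 2\rmax/\theta\le 2\rmax(1+\sqrt2\lamu^{1/2}/\eta)$, which is $\Kg$.
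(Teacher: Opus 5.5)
Your proposal is correct and follows the paper's proof essentially step for step: the first branch of \Cref{alg:wcmdp:two-set} forces $\Db=[N]$; the reward then splits as $y^*\rvec^\top+(x([N])-\statdist)\Cinvs\rvec^\top$ plus a floor-rounding residual bounded by $\rmax\norm{Y_t-y_t}_1$; and $\norm{\gvec}_\infty$ is bounded by testing $\gvec$ along $e_s-\statdist$ inside the feasibility ball and using $\statdist\gvec=0$. The differences are cosmetic: you fix the uniform step $\theta=\min\{1,\eta/(\sqrt{2}\,\lamu^{1/2})\}$ up front, whereas the paper uses the state-dependent $\theta=\min\{1,\eta/\norm{e_s-\statdist}_\umat\}$ and applies $\norm{e_s-\statdist}_\umat\le\sqrt{2}\,\lamu^{1/2}$ only at the end; your reachable-states caveat makes explicit what the paper leaves implicit.
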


The next two lemmas control the truncated and local-approximation terms; their proofs appear in \fullversionref{sec:wcmdp:pf-drift-truncate,sec:wcmdp:pf-drift-local}.

\begin{restatable}[Drift of truncated $\Vds$]{lemma}{wcmdpdrifttruncate}\label{lem:wcmdp:drift-Vds-truncate-term}
    For any $\fullstate,\fullstate'\in\fullssp$,
    \begin{align*}
    & \bigl(\Vds(\fullstate') - \tfrac{\threshbar}{2}\bigr)^{\!+} - \bigl(\Vds(\fullstate) - \tfrac{\threshbar}{2}\bigr)^{\!+} \\
    & \quad \leq -\frac{(1-\rhoFinal)\threshbar}{2}\indibrac{\Vds(\fullstate) > \threshbar} \\
    & \qquad + \Bigl(\Vds(\fullstate') - \rhoFinal\Vds(\fullstate) - \tfrac{(1-\rhoFinal)\threshbar}{2}\Bigr)^{\!+}.
\end{align*}
\end{restatable}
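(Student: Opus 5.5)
This is a deterministic inequality, so I will prove it pointwise by splitting on whether $\Vds(\fullstate) > \threshbar$. Write $a \triangleq \Vds(\fullstate)$, $b \triangleq \Vds(\fullstate')$, and $c \triangleq (1-\rhoFinal)\threshbar/2$. The claim becomes
\[
(b - \threshbar/2)^+ - (a-\threshbar/2)^+ \leq -c\,\indibrac{a>\threshbar} + (b - \rhoFinal a - c)^+ .
\]
The two facts I need are that $x\mapsto x^+$ is monotone and satisfies $x^+ \le y^+ + (x-y)^+$. I also use $a\ge 0$, since $\Vds$ is a sum of norms plus $\Lds(1-m(\Db))\ge 0$, provided $\Lds\ge0$. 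That holds because $\beta=\alpha_{\min}<1$, so $4\lamw^{1/2}-2\lamw^{1/2}\beta>0$.

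\emph{Case $a>\threshbar$.} Here $(a-\threshbar/2)^+ = a-\threshbar/2$. I decompose
\[
b-\threshbar/2 = \bigl(b-\rhoFinal a - c\bigr) + \bigl(\rhoFinal a + c - \threshbar/2\bigr).
\]
The second bracket equals $\rhoFinal a - \rhoFinal\threshbar/2 = \rhoFinal(a-\threshbar/2)$, which is positive. Subadditivity of $(\cdot)^+$ then gives
\[
(b-\threshbar/2)^+ \le (b-\rhoFinal a-c)^+ + \rhoFinal(a-\threshbar/2).
\]
Subtracting $a-\threshbar/2$ leaves $-(1-\rhoFinal)(a-\threshbar/2)$. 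Since $a>\threshbar$, we have $a-\threshbar/2 > \threshbar/2$, so this term is at most $-c$, which is exactly the required bound.

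\emph{Case $a\le\threshbar$.} The indicator vanishes, and I must show $(b-\threshbar/2)^+ \le (a-\threshbar/2)^+ + (b-\rhoFinal a-c)^+$. It suffices to check $b-\threshbar/2 \le (a-\threshbar/2)^+ + (b-\rhoFinal a - c)$, i.e.
\[
\rhoFinal a + c - \threshbar/2 \le (a-\threshbar/2)^+ .
\]
If $a\le \threshbar/2$, the left side is $\rhoFinal a - \rhoFinal\threshbar/2\le 0$, which is at most the right side. If $\threshbar/2<a\le\threshbar$, the left side is $\rhoFinal(a-\threshbar/2)\le a-\threshbar/2$. When $b-\threshbar/2\le 0$, the left-hand side of the target inequality is zero and the right-hand side is nonnegative, so that subcase is trivial.

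The only delicate point is the identity $\rhoFinal a + c - \threshbar/2 = \rhoFinal(a-\threshbar/2)$, which is what makes the choice of $c$ work. Everything else is routine manipulation of positive parts, so I expect no real obstacle.
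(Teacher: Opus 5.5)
Your proof is correct and follows essentially the same route as the paper. Both use the decomposition $b-\threshbar/2=(b-\rhoFinal a-c)+\rhoFinal(a-\threshbar/2)$ with subadditivity of $(\cdot)^+$ and a case split on $a>\threshbar$; the only difference is that the paper derives a single inequality containing the extra term $-(1-\rhoFinal)(a-\threshbar/2)^+$ and drops it when $a\le\threshbar$, instead of checking that case directly, and the nonnegativity of $a$ and $\Lds$ you invoke is never needed, only $\rhoFinal\in(0,1)$.
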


\begin{restatable}[Drift of $\locapprox$]{lemma}{wcmdpdriftlocal}\label{lem:wcmdp:drift-local-approx-term}
    For any $\fullstate=(x,\Db,\Da)\in\fullssp$,
    \begin{equation}\label{eq:wcmdp:drift-locapprox}
\begin{aligned}
& \Delta\locapprox(\fullstate) \\
    & \leq -(\rrel-r^\pi(\fullstate)) + \frac{\Kla}{N} \\
    & \quad + (2\lamq\Kg+2\rmax+2\Kg)\indibrac{\norm{x([N])-\statdist}_\umat>\threshbar},
\end{aligned}
\end{equation}
    where $\Kla \geq 0$ is a constant independent of $N$ and $\fullstate$.
\end{restatable}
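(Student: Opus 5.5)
We need to prove Lemma (drift of Ψ). Ψ(σ) = (μ* − x([N])) Q g. The drift is ΔΨ(σ) = (x([N]) − E[X_{t+1}([N]) | σ]) Q g. We will split into two cases according to whether ‖x([N]) − μ*‖_U ≤ η.

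\emph{Local case} ($\|x([N])-\statdist\|_\umat\le\threshbar$). By \Cref{lem:wcmdp:inst-reward}, $\Db=[N]$, so all arms follow \lpfp. Then \Cref{lem:wcmdp:olc-transition} gives
\[
\E{X_{t+1}([N])-\statdist\givenplain\fullstate}=v\Phi+(Y_t-y_t)\mathcal{P},\qquad v\triangleq x([N])-\statdist .
\]
Hence
\[
\Delta\locapprox(\fullstate)=\bigl(v-v\Phi\bigr)Q\gvec-\E{(Y_t-y_t)\mathcal{P}}Q\gvec .
\]
Since $Q=(I-\Phi)^{-1}$, we have $(v-v\Phi)Q=v$, so the first term equals $v\gvec$. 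The second term is at most $\norm{\E{(Y_t-y_t)\mathcal{P}}Q}_1\Kg$. To make this $O(1/N)$ uniformly, we need $\norm{Q}_{1\to1}$ to be finite, which holds because $\rho(\Phi)<1$; the resulting constant is independent of $N$. By \Cref{lem:wcmdp:inst-reward}, $v\gvec=\rhat(x([N]))-\rrel=r^\pi(\fullstate)-\errrew(\fullstate)-\rrel$. Therefore
\[
\Delta\locapprox\le -(\rrel-r^\pi)+\frac{2\rmax|\sspa|(|\aspa|-1)}{N}+\norm{Q}\Kg\,\frac{2|\sspa|(|\aspa|-1)}{N},
\]
which defines $\Kla$. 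One subtlety is that the rounding residual vector has zero total mass, since $Y_t$ and $y_t$ both sum to 1. This is not needed if we just use the operator norm bound.

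\emph{Outside case.} We bound everything crudely. First, $\rrel-r^\pi\le 2\rmax$. Next, $\abs{\Delta\locapprox}=\abs{(x([N])-\E{X_{t+1}([N])})Q\gvec}$. We write
\[
x([N])-\E{X_{t+1}([N])}=(x([N])-\statdist)-(\E{X_{t+1}([N])}-\statdist),
\]
where both $x([N])$ and $\E{X_{t+1}([N])}$ lie in the simplex. By the definition of $\lamq$, each of $\norm{(x([N])-\statdist)Q}_1$ and $\norm{(\E{X_{t+1}([N])}-\statdist)Q}_1$ is at most $\lamq$, so $\abs{\Delta\locapprox}\le 2\lamq\Kg$. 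Adding and subtracting the reward gap gives
\[
\Delta\locapprox\le-(\rrel-r^\pi)+2\rmax+2\lamq\Kg,
\]
which is within the stated indicator coefficient $2\lamq\Kg+2\rmax+2\Kg$.

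The main obstacle is the local-case cancellation: checking that the $\Phi$-dynamics from \Cref{lem:wcmdp:olc-transition} and the linear reward $\gvec$ from \Cref{lem:wcmdp:inst-reward} combine exactly through $Q$. One potential issue is that $\Phi$ has a centering term. However, $v$ has zero total mass, so $v\vone^\top=0$, and the centering only matters on vectors off the zero-mass subspace; the identity $(v-v\Phi)Q=v$ holds for all $v$ regardless. The spare $2\Kg$ in the coefficient can absorb any stray term from the centering in $\gvec$ if needed.
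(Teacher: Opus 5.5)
Your proof is correct and takes essentially the same approach as the paper. In the region $\norm{x([N])-\statdist}_\umat\le\threshbar$ you combine $(I-\Phi)Q=I$ with \Cref{lem:wcmdp:olc-transition} and \Cref{lem:wcmdp:inst-reward}, and bound the rounding residual by $\norm{Q}\Kg\cdot 2|\sspa|(|\aspa|-1)/N$, which gives the same constant $\Kla$ as the paper. Outside that region you take a slightly simpler route: you bound the raw drift $(x([N])-\E{X_{t+1}([N])})Q\gvec$ directly by $2\lamq\Kg$, whereas the paper splits $\Delta\locapprox$ into a $\Phi Q$-term and the extrapolated reward term $\rrel-\rhat(x([N]))$, which requires $|\rhat|\le\rmax+2\Kg$. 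Your version yields the smaller coefficient $2\lamq\Kg+2\rmax$, which is dominated by the stated one.
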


\subsubsection{Proof of \hcref{thm:wcmdp:achievability}}\label{sec:wcmdp:proof-outline:proof}
Combining \Cref{lem:wcmdp:drift-Vds-truncate-term} with \eqref{eq:wcmdp:Vds-tail} of \Cref{lem:wcmdp:Vds-properties} and adding $\Lvll$ times \eqref{eq:wcmdp:drift-locapprox}, we get
\[
\Delta\Vll(\fullstate)\leq-\Lvll(\rrel-r^\pi(\fullstate))+\frac{\Lvll\Kla}{N}+\KVdst\exp(-\CVds N).
\]
Taking the expectation over $\Fullstate_\infty$ and using $\E{\Delta\Vll(\Fullstate_\infty)}=0$ gives $\rrel - \rsysn \leq \Kla/N + (\KVdst/\Lvll)\exp(-\CVds N) = O(1/N)$. The dominant term $\Kla/N$ comes from \Cref{lem:wcmdp:drift-local-approx-term}.

\bibliographystyle{IEEEtran}
\bibliography{bib/refs}

\iffullversion
\clearpage
\newpage
\appendices
\section{Auxiliary facts about the assumptions}\label{app:wcmdp:assumptions}

This appendix shows some auxiliary facts about \Cref{assump:wcmdp:non-degeneracy,assump:wcmdp:local-stability}.
In \Cref{subsec:wcmdp:non-degeneracy-validity},
we compute the dimension of $\Cmat$ under the additional condition $|\sempty|=0$, establishing a sufficient condition for $d \leq |\sspa||\aspa|$. This dimensional condition is necessary for \Cref{assump:wcmdp:non-degeneracy}. 
Then, in \Cref{subsec:wcmdp:local-stability-validity},
we investigate properties of the matrix $\Phi$ (defined in \eqref{eq:wcmdp:phi-def}) used in the local stability assumption (\Cref{assump:wcmdp:local-stability}).
These properties explain the centering in the definition of $\Phi$ and the role of the local stability assumption.

We use the notation $\rho(M)$ to refer to the spectral radius of a square matrix $M$. We continue using $\vone\in\R^{|\sspa|}$ to denote the all-one row vector indexed by states, consistent with other parts of the paper. We write $\vone_{|\sspa||\aspa|}\in\R^{|\sspa||\aspa|}$ with explicit subscript to denote the all-one row vector indexed by state-action pairs.

\subsection{Dimension of \texorpdfstring{$\Cmat$}{C*} and validity of \hcref{assump:wcmdp:non-degeneracy}}\label{subsec:wcmdp:non-degeneracy-validity}

\Cref{assump:wcmdp:non-degeneracy} requires $\Cmat\in\R^{|\sspa||\aspa|\times d}$ to have full column rank, which requires $d$ to be no larger than $|\sspa||\aspa|$. Under \Cref{assump:wcmdp:aperiodic-unichain}, the proposition below establishes this dimensional condition when $|\sempty|=0$ (no transient states) and $y^*$ is a non-degenerate basic feasible solution (BFS) of \eqref{eq:wcmdp-lp}. We account for redundant equality constraints by calling a BFS $y^*$ non-degenerate if the \emph{rank} of the equality-constraint coefficient matrix plus the \emph{number} of tight inequality constraints at $y^*$ equals the number of variables, $\abs{\sspa}\abs{\aspa}$.

\begin{proposition}[Dimension of $\Cmat$]\label{prop:wcmdp:cmat-dim}
    Suppose \Cref{assump:wcmdp:aperiodic-unichain} holds, $|\sempty|=0$, and $y^*$ is a non-degenerate basic feasible solution of \eqref{eq:wcmdp-lp}. Then $d = |\sspa||\aspa|$, so $\Cmat$ is square.
\end{proposition}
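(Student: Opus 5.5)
The plan is to count $d$ directly and compare it with the non-degeneracy identity for $y^*$. Since $\sempty=\emptyset$, the definition gives $d=|\Uset|+|\Kset|+|\sspa|$. On the other side, non-degeneracy of the BFS $y^*$ says that the rank of the equality-constraint matrix of \eqref{eq:wcmdp-lp}, plus the number of tight inequality constraints at $y^*$, equals $|\sspa||\aspa|$.

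The tight inequalities are of two kinds. The tight budget constraints \eqref{eq:wcmdp:budget-lp} are exactly the $|\Kset|$ indices in $\Kset$. The tight nonnegativity constraints are the pairs $(s,a)$ with $y^*(s,a)=0$. Because $\sempty=\emptyset$, every state satisfies $s\notin\sempty$, so this set is precisely $\Uset$. Hence $|\sspa||\aspa| = \operatorname{rank}(E)+|\Kset|+|\Uset|$, where $E$ collects the $|\sspa|$ flow-balance rows \eqref{eq:wcmdp:flow} and the normalization row in \eqref{eq:wcmdp:simplex}. It therefore suffices to show $\operatorname{rank}(E)=|\sspa|$.

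For the upper bound, sum the flow-balance rows over $s$. The coefficient of $y(s',a)$ in the sum is $\sum_s P(s',a,s)-1=0$, so the $|\sspa|$ flow rows are linearly dependent. Consequently $\operatorname{rank}(E)\le |\sspa|+1-1=|\sspa|$.

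For the lower bound, I would show that the column map $y\mapsto yE^\top$, which sends $y$ to its vector of flow residuals and its total mass, has image of dimension at least $|\sspa|$. To do this, restrict it to the $|\sspa|$-dimensional subspace $\{y(s,a)=v(s)\pibs(a\mid s)\}$. This uses that $\pibs$ is defined at every state by \eqref{eq:wcmdp:single-arm-opt-def}. On this subspace the map becomes $v\mapsto \bigl(v(P_\pibs-I),\,v\vone^\top\bigr)$.

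I expect the main step to be showing this restricted map is injective, and this is where \Cref{assump:wcmdp:aperiodic-unichain} enters. If $v(P_\pibs-I)=0$, then $v$ is a left eigenvector of $P_\pibs$ for eigenvalue $1$. Since $1$ is a simple eigenvalue, its geometric multiplicity is also one, so $v=c\,\statdist$. Then $v\vone^\top=c\,\statdist\vone^\top=c$, and setting this to $0$ forces $c=0$. Injectivity gives an image of dimension $|\sspa|$, hence $\operatorname{rank}(E)\ge|\sspa|$.

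Combining the two bounds gives $\operatorname{rank}(E)=|\sspa|$, so $d=|\Uset|+|\Kset|+|\sspa|=|\sspa||\aspa|$. Thus $\Cmat$ is square.
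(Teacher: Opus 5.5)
Your proof is correct and follows the paper's structure: the non-degeneracy count reduces everything to showing that the equality constraints have rank $|\sspa|$, with the upper bound coming from the flow rows summing to zero and the lower bound from the simplicity of the eigenvalue $1$. The only difference is in how the lower bound is certified. You work on the left side, showing injectivity on $\{y(s,a)=v(s)\pibs(a\mid s)\}$ via the left eigenvector $\statdist$; this is valid because $y^*(s,a)=\statdist(s)\pibs(a\mid s)$ satisfies flow balance, so $\statdist P_\pibs=\statdist$. The paper instead computes $\ker B=\operatorname{span}\{\vone^\top\}$ on the right and then uses $y^*$ to show that the normalization column lies outside the column space of $B$.
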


\begin{proof}
We write the flow-balance equations \eqref{eq:wcmdp:flow} as $yB=0$, where $B\in\R^{|\sspa||\aspa|\times|\sspa|}$ has entries
\[
    B_{(s,a),s'}=\indibrac{s=s'}-P(s,a,s').
\]
Since $B\vone^\top=0$, its rank is at most $|\sspa|-1$. Conversely, if a column vector $v$ satisfies $Bv=0$, then
\[
    v(s)=\sum_{s'\in\sspa}P(s,a,s')v(s')
    \quad\forall (s,a)\in\sspa\times\aspa.
\]
Averaging over $\pibs(a\mid s)$ gives $P_\pibs v=v$. By \Cref{assump:wcmdp:aperiodic-unichain}, the eigenspace at $1$ is spanned by $\vone^\top$. Thus $\ker B=\operatorname{span}\{\vone^\top\}$ and $\operatorname{rank}B=|\sspa|-1$.

The equation $\sumsa y(s,a)=1$ in \eqref{eq:wcmdp:simplex} adds one independent equality: its coefficient vector $\vone^\top_{|\sspa||\aspa|}$ cannot lie in the column space of $B$, because $y^*B=0$ whereas $y^*\vone^\top_{|\sspa||\aspa|}=1$. Thus the LP has $|\sspa|$ independent equality constraints.

By the definition of non-degenerate BFS, the number of tight inequality constraints at $y^*$ equals $|\sspa||\aspa|$ minus the rank of the equality constraints, $|\sspa|$, so there are $|\sspa||\aspa| - |\sspa|$ tight inequality constraints. 
With $|\sempty|=0$, the binding inequalities at $y^*$ are the $|\Kset|$ tight budget constraints and the $|\Uset|$ non-negativity constraints on $\Uset$, giving $|\Kset| + |\Uset| = |\sspa||\aspa| - |\sspa|$. Consequently, we have $|\Uset| = |\sspa|(|\aspa|-1) - |\Kset|$. Substituting into the definition of $d$, we get
\[
\begin{aligned}
d &= |\Uset| + |\Kset| + |\sspa| \\
    &= \bigl(|\sspa|(|\aspa|-1) - |\Kset|\bigr) + |\Kset| + |\sspa| \\
    &= |\sspa||\aspa|.
\end{aligned}\qedhere
\]
\end{proof}

\begin{remark}[Transient states and degeneracy]\label{rem:wcmdp:transient-degeneracy}
Under \Cref{assump:wcmdp:aperiodic-unichain}, a non-degenerate BFS cannot have transient states. To prove this claim, we consider a BFS $y^*$ with $\sempty\ne\emptyset$ and define the column vector $u$ by $u(s)=1$ for $s\in\sempty$ and $u(s)=0$ otherwise. Summing flow balance over $\sempty$ gives $yBu=0$, with $B$ as defined in the preceding proof. Since $\sempty$ is a nonempty proper subset of $\sspa$, $u$ is nonconstant, so the identity $\ker B=\operatorname{span}\{\vone^\top\}$ from that proof implies $Bu\ne0$.

The equation $yBu=0$ involves only variables $y(s,a)$ such that $y^*(s,a)=0$. Indeed, zero stationary mass in $\sempty$ and the flow-balance equations \eqref{eq:wcmdp:flow} evaluated at $y^*$ give
\[
\begin{aligned}
0 &= \sum_{s'\in\sempty}\sum_{a\in\aspa}y^*(s',a) \\
  &= \sumsa y^*(s,a)\sum_{s'\in\sempty}P(s,a,s').
\end{aligned}
\]
Nonnegativity of the terms in the last sum implies $\sum_{s'\in\sempty}P(s,a,s')=0$ whenever $y^*(s,a)>0$. For such a pair $(s,a)$, we also have $u(s)=0$, so $(Bu)_{(s,a)}=0$.

To conclude degeneracy, we replace the LP equalities by an independent spanning set of $|\sspa|$ equations, using the equality rank established in the preceding proof. The retained equalities have the same linear consequences as the original LP equalities, including $yBu=0$. The equation $yBu=0$ is also a linear combination of the tight non-negativity constraints, written as $y(s,a)=0$ at pairs $(s,a)$ where $y^*(s,a)=0$. The two expressions for the nonzero vector $Bu$, one using the retained equalities and the other using the tight non-negativity constraints, give a nontrivial linear dependence among the constraint coefficient vectors. Since the retained LP equalities are independent and $y^*$ is a BFS, this linear dependence proves that $y^*$ is degenerate.
\end{remark}

The dimensional condition $d \leq |\sspa||\aspa|$ is a necessary dimension check for \Cref{assump:wcmdp:non-degeneracy}. Full column rank requires the columns of $\Cmat$ (i.e., the constraints in \eqref{eq:wcmdp:active-system}) to be linearly independent in $\R^{|\sspa||\aspa|}$; the dimension calculation alone does not establish this independence.

\subsection{Properties of matrix \texorpdfstring{$\Phi$}{Phi} and validity of \hcref{assump:wcmdp:local-stability}}\label{subsec:wcmdp:local-stability-validity}

In this subsection, we investigate properties of the matrix $\Phi$ given by
\begin{equation}\tag{\ref{eq:wcmdp:phi-def}}
    \Phi \triangleq  \Cinvs \mathcal{P} - \vone^\top\statdist\,\Cinvs\mathcal{P}.
\end{equation}
We show that the first term, $\Cinvs \mathcal{P}$, has an eigenvalue $1$, while the second term $\vone^\top\statdist\,\Cinvs\mathcal{P}$ replaces this eigenvalue with $0$, without changing the right-multiplication on the centered probability simplex $\{v-\statdist \colon v\in\simplex(\sspa)\}$.
Therefore, the local stability assumption $\rho(\Phi)<1$ is fully determined by the behavior of the matrix $\Cinvs \mathcal{P}$ restricted to the centered probability simplex. The lemma and proposition that formalize these facts are as follows.

\begin{lemma}\label{lem:wcmdp:Cinvs-ones}
    The matrix $\Cinvs$ has row sums equal to $1$, and the square matrix $\Cinvs \mathcal{P}$ has eigenvalue $1$ with right eigenvector $\vone^\top$:
    \begin{align}
        \label{eq:Cinvs-eig}
        \Cinvs\,\vone^\top_{|\sspa||\aspa|} &= \vone^\top \\
        \label{eq:Cinvs-P-eig}
        \Cinvs\,\mathcal{P}\,\vone^\top &= \vone^\top.
    \end{align}
\end{lemma}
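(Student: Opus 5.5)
The plan is to read off the row sums of $\Cinvs$ by exhibiting $\vone^\top_{|\sspa||\aspa|}$ as an explicit vector in the column space of $\Cmat$. The starting point is the defining identity \eqref{eq:wcmdp:Cinvs-Cmat}, $\Cinvs\Cmat = (0,0,0,I_{|\sspa|})$.

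The columns of $\Cmat$ in the marginal block (the last row of equations in \eqref{eq:wcmdp:active-system}) are the vectors $m_s\in\R^{|\sspa||\aspa|}$ with $m_s(s',a)=\indibrac{s'=s}$. Their sum over all $s\in\sspa$ is exactly $\vone^\top_{|\sspa||\aspa|}$. In other words, $\vone^\top_{|\sspa||\aspa|}=\Cmat\,w$, where $w\in\R^d$ has zeros in the first three blocks and the all-ones vector $\vone^\top$ in the marginal block. Multiplying on the left by $\Cinvs$ and using \eqref{eq:wcmdp:Cinvs-Cmat} then gives
\[
\Cinvs\vone^\top_{|\sspa||\aspa|}=\Cinvs\Cmat w=(0,0,0,I_{|\sspa|})\,w=\vone^\top,
\]
which is \eqref{eq:Cinvs-eig}. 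This argument does not depend on which left inverse was fixed, since only \eqref{eq:wcmdp:Cinvs-Cmat} is used.

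For \eqref{eq:Cinvs-P-eig}, I will use that $\mathcal{P}$ is row-stochastic. Each row $(s,a)$ of $\mathcal{P}$ is the distribution $P(s,a,\cdot)$, so $\mathcal{P}\vone^\top=\vone^\top_{|\sspa||\aspa|}$. Combining this with \eqref{eq:Cinvs-eig} yields $\Cinvs\mathcal{P}\vone^\top=\Cinvs\vone^\top_{|\sspa||\aspa|}=\vone^\top$. Since $\vone^\top\neq 0$, it is a right eigenvector with eigenvalue $1$.

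There is no real obstacle here. The only care needed is bookkeeping: the column ordering of $\Cmat$ must match the block structure $(0,0,(\alpha_k)_{k\in\Kset},x)$ in \eqref{eq:wcmdp:active-system-vec}, and one must check that the marginal-block columns sum to $\vone^\top_{|\sspa||\aspa|}$ rather than to something weighted.
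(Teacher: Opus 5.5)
Your proposal is correct and is essentially the paper's proof. The paper also writes $\vone^\top_{|\sspa||\aspa|} = M\vone^\top$, where $M$ is the marginal-block columns of $\Cmat$, and applies $\Cinvs M = I_{|\sspa|}$ from \eqref{eq:wcmdp:Cinvs-Cmat} to get the row sums. It then uses row-stochasticity of $\mathcal{P}$ for the eigenvector claim, exactly as you do.
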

\begin{proof}
    We first show \eqref{eq:Cinvs-eig}.
    We recall that
    \begin{equation}\tag{\ref{eq:wcmdp:Cinvs-Cmat}}
        \Cinvs\,\Cmat = \bigl(0,\;0,\;0,\;I_{|\sspa|}\bigr).
    \end{equation}
    We let $M\in\R^{(|\sspa||\aspa|)\times|\sspa|}$ be the last $\abs{\sspa}$ columns of $\Cmat$. Then
    \[
        \Cinvs M = I_{|\sspa|}.
    \]
    Since $\Cmat$ is the coefficient matrix of the linear system \eqref{eq:wcmdp:active-system}, one can read off from the linear system that $M_{(s,a),s'} = \indibrac{s=s'}$ for $s,s'\in\sspa$ and $a\in\aspa$.
    Since $M\,\vone^\top = \vone^\top_{|\sspa||\aspa|}$, we have $\Cinvs\,\vone^\top_{|\sspa||\aspa|} = \Cinvs\,M\,\vone^\top = \vone^\top$.

    Next, we show \eqref{eq:Cinvs-P-eig}. Since $\mathcal{P}_{(s,a),s'} = P(s,a,s')$, we have $\sum_{s'}\mathcal{P}_{(s,a),s'} = 1$ for each row $(s,a)\in\sspa\times\aspa$. Consequently, we have $\mathcal{P}\,\vone^\top = \vone^\top_{|\sspa||\aspa|}$ and thus $\Cinvs\,\mathcal{P}\,\vone^\top = \Cinvs\,\vone^\top_{|\sspa||\aspa|} = \vone^\top$.
\end{proof}

\begin{proposition}\label{prop:wcmdp:phi-vs-CinvsP}
    Let $\Phi$ be defined as in \eqref{eq:wcmdp:phi-def}. Then:
    \begin{enumerate}[leftmargin=2em, label=(\roman*)]
        \item $\Phi$ and $\Cinvs\mathcal{P}$ agree on the hyperplane $H \triangleq \{v\in\R^{|\sspa|} : v\vone^\top = 0\}$, i.e., $v\Phi = v\,\Cinvs\mathcal{P}$ for all $v\in H$.
        \item The eigenvalues of $\Phi$ are exactly the eigenvalues of $\Cinvs\mathcal{P}$ with one occurrence of the eigenvalue $1$ (associated with the right eigenvector $\vone^\top$) replaced by $0$, leaving all other eigenvalues unchanged.
    \end{enumerate}
\end{proposition}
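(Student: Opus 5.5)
Write $A \triangleq \Cinvs\mathcal{P}$, so that $\Phi = A - \vone^\top\statdist A = (I - \vone^\top\statdist)A$. The plan is to prove part (i) directly from this factorization. Part (ii) will follow from a block-triangular form obtained by changing to a basis adapted to $\vone^\top$.

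For (i), we take $v\in H$ and compute $v\Phi = vA - (v\vone^\top)\statdist A = vA$, because $v\vone^\top=0$. This is a one-line check.

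For (ii), we use the right eigenvector $\vone^\top$ from \Cref{lem:wcmdp:Cinvs-ones}, which gives $A\vone^\top=\vone^\top$. Then
\[
\Phi\vone^\top = \vone^\top - \vone^\top(\statdist\vone^\top) = 0,
\]
since $\statdist\vone^\top = 1$. Next we pick an invertible matrix $T$ whose first column is $\vone^\top$ and whose remaining columns complete a basis. Because $A\vone^\top=\vone^\top$ and $\Phi\vone^\top=0$, both $T^{-1}AT$ and $T^{-1}\Phi T$ have first column $e_1$ times $1$ and $0$ respectively; that is, both are block upper triangular with a $1\times1$ upper-left block equal to $1$ (for $A$) and $0$ (for $\Phi$).

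The key step is to show that the lower-right $(|\sspa|-1)\times(|\sspa|-1)$ blocks coincide. For this we use the dual description. The rows of $T^{-1}$ with index $2,\ldots,|\sspa|$ all annihilate $\vone^\top$, so they lie in $H$. By (i), each such row $w$ satisfies $w\Phi = wA$. Hence rows $2,\ldots,|\sspa|$ of $T^{-1}\Phi T$ and of $T^{-1}AT$ are identical, and in particular so are their lower-right blocks.

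Finally, the characteristic polynomial of a block triangular matrix factors over its diagonal blocks. This gives
\[
\chi_A(\lambda) = (\lambda-1)\,q(\lambda), \qquad \chi_\Phi(\lambda) = \lambda\, q(\lambda),
\]
with the same factor $q$. So the spectra agree as multisets except that one copy of $1$ is replaced by $0$. The only mild subtlety, and the place I would be careful, is to use rows of $T^{-1}$ rather than an arbitrary complement basis. This is exactly what lets (i) apply, since (i) concerns left multiplication by vectors in $H$.
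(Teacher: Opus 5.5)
Your proof is correct. Part (i) is the same one-line computation as in the paper.

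For (ii), the paper does not argue directly. It invokes Brauer's rank-one perturbation theorem: since $A\vone^\top=\vone^\top$ for $A=\Cinvs\mathcal{P}$, subtracting the rank-one matrix $\vone^\top(\statdist A)$ replaces one eigenvalue $1$ by $1-\statdist A\vone^\top=0$.

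Your change of basis with $Te_1=\vone^\top$ is essentially the textbook proof of Brauer's theorem, specialized to this case. After the change of basis, the perturbation becomes $T^{-1}\vone^\top\statdist A\,T=e_1(\statdist A\,T)$, which alters only the first row. You reach the same conclusion by noting that rows $2,\dots,|\sspa|$ of $T^{-1}$ lie in $H$ and then applying (i).

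The citation buys brevity. Your version is self-contained and derives (ii) from (i) rather than treating them separately. It also makes explicit that the shared factor $q$ is the characteristic polynomial of $v\mapsto vA$ restricted to $H$; this hyperplane is invariant because $A\vone^\top=\vone^\top$. That is exactly the sense in which the paper says \Cref{assump:wcmdp:local-stability} depends only on how $\Cinvs\mathcal{P}$ acts on centered vectors.
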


\begin{proof}
For (i): For any $v\in H$, $v(\Phi - \Cinvs\mathcal{P}) = -v\,\vone^\top\,\statdist\,\Cinvs\mathcal{P} = -(v\,\vone^\top)\,\statdist\,\Cinvs\mathcal{P} = 0$.

For (ii), we apply Brauer\textquotesingle s rank-one perturbation theorem \cite{Bra_52}. Since $\Cinvs \mathcal{P}$ has an eigenvalue $1$ with right eigenvector $\vone^\top$,
Brauer's theorem implies that
the spectrum of $\Phi =  \Cinvs \mathcal{P} - \vone^\top\statdist\,\Cinvs\mathcal{P}$ is identical to the spectrum of $\Cinvs \mathcal{P}$, with one occurrence of $1$ replaced by $1 - \statdist\,\Cinvs\mathcal{P}\vone^\top = 0$.
\end{proof}

\section{General lemmas on subroutine dynamics and weighted norms}\label{app:wcmdp:preliminary-proofs}

\yigecomment{This appendix is new.}

This appendix collects general-purpose lemmas used throughout the proofs in the WCMDP setting.
\Cref{app:wcmdp:weighted-l2-norm-lemmas} establishes well-definedness of the weight matrices $\wmat$ and $\umat$ and records the pseudo-contraction properties of $P_\pibs$ and $\Phi$ under the corresponding weighted $L_2$ norms.
\Cref{app:wcmdp:proof-of-linear-feasibility} establishes feasibility of the target state-action distribution $y = y^* + (x-\statdist)\,\Cinvs$ used by \Cref{alg:wcmdp:olc}.
Building on these, \Cref{app:wcmdp:subroutine-transition-lemmas} states the transition dynamics under the subroutines \frcontrol (\Cref{alg:wcmdp:uoc}) and \lpfp (\Cref{alg:wcmdp:olc}), and establishes an $O(1/N)$ bound on the rounding error incurred by \lpfp.

\subsection{Weighted \texorpdfstring{$L_2$}{L2} norms for quantifying the convergence of distributions}\label{app:wcmdp:weighted-l2-norm-lemmas}

The matrices $\wmat$, $\umat$ and their weighted norms are defined in \Cref{def:wcmdp:w-and-u}. We establish their well-definedness and contraction properties below.

\begin{lemma}\label{lem:wcmdp:W-U-well-defined}
    Assume \Cref{assump:wcmdp:aperiodic-unichain,assump:wcmdp:non-degeneracy,assump:wcmdp:local-stability}. The matrices $\wmat$ and $\umat$ in \Cref{def:wcmdp:w-and-u} are well-defined and positive definite, and their eigenvalues are lower bounded by $1$.
\end{lemma}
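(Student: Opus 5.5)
The plan is to treat both matrices with one argument. Write $\wmat=\sum_{k\ge0}M^k(M^\top)^k$ with $M=P_\pibs-\vone^\top\statdist$, and $\umat=\sum_{k\ge0}\Phi^k(\Phi^\top)^k$. Each is a series of positive semidefinite matrices, so three things are needed: the series converges, the limit is symmetric positive definite, and its eigenvalues are at least $1$. Once convergence is known, the last two follow at once. Every term $M^k(M^\top)^k=(M^k)(M^k)^\top$ is symmetric positive semidefinite, and the $k=0$ term is $I_{|\sspa|}$. Hence for any nonzero row vector $u$,
\[
u\,\wmat\,u^\top=\sum_{k\ge0}\normbig{u M^k}_2^2\ \ge\ \norm{u}_2^2>0 .
\]
So $\wmat\succeq I$: it is positive definite and every eigenvalue is at least $1$. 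The same computation applies verbatim to $\umat$.

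Convergence reduces to the spectral radii. If $\rho(M)<1$ and $\rho(\Phi)<1$, then Gelfand's formula (or the Jordan form) gives $\normbig{M^k}_2\le C\gamma^k$ for some $\gamma\in(\rho(M),1)$ and a constant $C$. The terms are then bounded in norm by $C^2\gamma^{2k}$, so the series converges absolutely. For $\umat$, \Cref{assump:wcmdp:local-stability} gives $\rho(\Phi)<1$ directly.

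The only step with content is showing $\rho(M)<1$ from \Cref{assump:wcmdp:aperiodic-unichain}; I expect this to be the main obstacle, though it is mild. I would use Brauer's rank-one perturbation theorem, as in the proof of \Cref{prop:wcmdp:phi-vs-CinvsP}. The matrix $P_\pibs$ has eigenvalue $1$ with right eigenvector $\vone^\top$, because it is stochastic. Subtracting $\vone^\top\statdist$ therefore replaces that eigenvalue $1$ by $1-\statdist\vone^\top=0$ and leaves the other eigenvalues unchanged. Under \Cref{assump:wcmdp:aperiodic-unichain}, eigenvalue $1$ is simple and all other eigenvalues have modulus strictly less than $1$, so $\rho(M)<1$.

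A more elementary check is also available. Since $\statdist$ is stationary for $P_\pibs$, we have $\statdist P_\pibs=\statdist$ and $P_\pibs\vone^\top=\vone^\top$. This gives $M^k=P_\pibs^k-\vone^\top\statdist$ for $k\ge1$, which tends to $0$ by aperiodic unichain convergence.

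One technical point needs care. The paper defines $\statdist$ from $y^*$, so I have to confirm that $\statdist$ is the stationary distribution of $P_\pibs$. This holds because, by \eqref{eq:wcmdp:single-arm-opt-def}, $y^*(s,a)=\statdist(s)\pibs(a\mid s)$ for every $(s,a)$, including $s\in\sempty$ where both sides are $0$. Substituting this into the flow equations \eqref{eq:wcmdp:flow} gives $\statdist P_\pibs=\statdist$. Uniqueness of the stationary distribution then follows from the unichain assumption.

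\Cref{assump:wcmdp:non-degeneracy} enters only so that $\Cinvs$, and hence $\Phi$, is defined.
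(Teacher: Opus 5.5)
Your proposal is correct and takes the same route as the paper. The paper also reduces everything to $\rho(P_\pibs-\vone^\top\statdist)<1$ (citing prior work for $\wmat$) and $\rho(\Phi)<1$ (from \Cref{assump:wcmdp:local-stability}), with the $k=0$ term giving $\wmat,\umat\succeq I$; you simply write out the details the paper defers. One small point: the Brauer step uses only $\statdist\vone^\top=1$, so your stationarity check is needed only for the alternative argument via $M^k=P_\pibs^k-\vone^\top\statdist$.
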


\begin{proof}
    The well-definedness and eigenvalue lower bound for $\wmat$ has been proved in \cite{HonXieCheWan_24}. The main fact used in the proof is that all eigenvalues of $P_\pibs - \vone^\top \statdist$ have moduli strictly less than $1$ under \Cref{assump:wcmdp:aperiodic-unichain}.
    Since we have assumed the same things for $\Phi$ in \Cref{assump:wcmdp:local-stability}, the proof for $\umat$ is analogous with $P_\pibs - \vone^\top \statdist$ replaced by  $\Phi$. 
\end{proof}

\begin{restatable}[Pseudo-contraction under the weighted $L_2$ norms]{lemma}{wcmdpwucontraction}\label{lem:wcmdp:one-step-contraction-W-U}
    Assume \Cref{assump:wcmdp:aperiodic-unichain,assump:wcmdp:non-degeneracy,assump:wcmdp:local-stability}. For any distribution $v\in\simplex(\sspa)$,
    \begin{align}
        \label{eq:wcmdp:pibar-contraction}
        \norm{(v - \statdist)P_\pibs}_\wmat &\leq \rhow \norm{v - \statdist}_\wmat, \\
        \label{eq:wcmdp:lp-contraction}
        \norm{(v - \statdist)\Phi}_\umat &\leq \rhou \norm{v - \statdist}_\umat,
    \end{align}
    where $\rhow = 1 - 1/(2\lamw)$ and $\rhou = 1 - 1/(2\lamu)$.
\end{restatable}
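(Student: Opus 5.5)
The plan is to use the identity that defines $\wmat$ and $\umat$ as solutions of a discrete Lyapunov equation, and then compare the weighted norm with the Euclidean norm. Write $A$ for either $P_\pibs-\vone^\top\statdist$ or $\Phi$, and $M=\sum_{k\ge0}A^k(A^\top)^k$ for the corresponding $\wmat$ or $\umat$. By \Cref{lem:wcmdp:W-U-well-defined} these series converge and $M$ is positive definite with eigenvalues at least $1$. Splitting off the $k=0$ term and reindexing gives
\[
M = I + A\,M\,A^\top .
\]

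First I will reduce both claims to a statement about $A$. Fix $v\in\simplex(\sspa)$ and set $u=v-\statdist$, so that $u\vone^\top=0$.
For \eqref{eq:wcmdp:pibar-contraction}, this gives $u\vone^\top\statdist=0$, hence $uP_\pibs=u(P_\pibs-\vone^\top\statdist)=uA$.
For \eqref{eq:wcmdp:lp-contraction}, $u\Phi=uA$ holds directly, and no use of \Cref{prop:wcmdp:phi-vs-CinvsP} is needed.
So in both cases it suffices to show $\norm{uA}_M\le(1-1/(2\lambda))\norm{u}_M$, where $\lambda=\norm{M}_2$.

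Next I compute the squared weighted norm using the Lyapunov identity:
\[
\norm{uA}_M^2 = u\,A M A^\top u^\top = u(M-I)u^\top = \norm{u}_M^2-\norm{u}_2^2 .
\]
Since $M$ is symmetric positive definite with largest eigenvalue $\lambda$, we have $\norm{u}_M^2\le\lambda\norm{u}_2^2$. Therefore
\[
\norm{uA}_M^2\le(1-1/\lambda)\norm{u}_M^2 .
\]

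Finally, \Cref{lem:wcmdp:W-U-well-defined} gives $\lambda\ge1$, so $1-1/\lambda\in[0,1)$. The elementary inequality $\sqrt{1-x}\le1-x/2$ for $x\in[0,1]$ then yields $\norm{uA}_M\le(1-1/(2\lambda))\norm{u}_M$. Taking $\lambda=\lamw$ gives $\rhow$, and taking $\lambda=\lamu$ gives $\rhou$.

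There is no real obstacle here. The only points needing care are the centering step for $P_\pibs$, which uses $u\vone^\top=0$, and confirming the identity $M=I+AMA^\top$ from the convergent series. That convergence is exactly what $\rho(A)<1$ buys, via \Cref{assump:wcmdp:aperiodic-unichain} for $\wmat$ and \Cref{assump:wcmdp:local-stability} for $\umat$.
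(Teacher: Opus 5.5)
Your proof is correct and takes essentially the same approach as the paper. The paper simply defers to the weighted-norm argument of \cite{HonXieCheWan_24}, which is exactly this Lyapunov-identity computation: $M=I+AMA^\top$ gives $\norm{uA}_M^2=\norm{u}_M^2-\norm{u}_2^2\leq(1-1/\lambda)\norm{u}_M^2$, followed by $\sqrt{1-x}\leq 1-x/2$, with $A=P_\pibs-\vone^\top\statdist$ for $\wmat$ and $A=\Phi$ substituted for $\umat$.
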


\begin{proof}
The inequality \eqref{eq:wcmdp:pibar-contraction} follows from the weighted-norm argument in \cite{HonXieCheWan_24}; the same argument applies to \eqref{eq:wcmdp:lp-contraction} with $P_\pibs-\vone^\top\statdist$ replaced by $\Phi$.
\end{proof}

\subsection{Feasibility of \lpfp}\label{app:wcmdp:proof-of-linear-feasibility}
The next lemma shows that the feasibility radius condition $\norm{x - \statdist}_\umat \leq \eta$ implies the feasibility of \lpfp.

\begin{lemma}[Inequality feasibility of the affine target]\label{lem:wcmdp:linear-feasibility}
    With $\epsilon$ as in \eqref{eq:wcmdp:eps-eta} and $\eta$ as in \eqref{eq:wcmdp:eta-def}, for every $x\in\simplex(\sspa)$ with $\norm{x - \statdist}_\umat \leq \eta$, the vector $y_t = y^* + (x-\statdist)\,\Cinvs$ defined in \eqref{eq:wcmdp:y-linear} satisfies
    \begin{enumerate}[leftmargin=2em, label=(\roman*)]
        \item \emph{Non-negativity:} $y_t(s,a) \geq 0$ for all $(s,a)\in\sspa\times\aspa$;
        \item \emph{Budget constraints:} $\sumsa c_k(s,a)\,y_t(s,a) \leq \alpha_k$ for every $k\in[K]$.
    \end{enumerate}
    Therefore, \lpfp is feasible for all arms whenever $\norm{X_t([N]) - \statdist}_\umat \leq \eta$.
\end{lemma}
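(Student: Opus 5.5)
The plan is to bound the entrywise deviation $\abs{y_t(s,a)-y^*(s,a)}$ uniformly by $\epsilon$, and then read off (i) and (ii) from the structure of $y_t$ guaranteed by the linear system \eqref{eq:wcmdp:active-system}.

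\textbf{Uniform entrywise bound.} Fix $(s,a)$. We have $y_t(s,a)-y^*(s,a)=(x-\statdist)\Cinvs e_{(s,a)}$. Write $u=x-\statdist$ and $w=(\Cinvs e_{(s,a)})^\top$, and factor $u w^\top=(u\umat^{1/2})(w\umat^{-1/2})^\top$. Since $\umat$ is symmetric positive definite by \Cref{lem:wcmdp:W-U-well-defined}, Cauchy--Schwarz gives
\[
\abs{u w^\top}\le \norm{u}_\umat\norm{w}_{\umat^{-1}}\le \eta\,\opnorm{\Cinvs}_{\umat^{-1}}=\epsilon .
\]
This is exactly \eqref{eq:wcmdp:y-bound}.

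\textbf{Non-negativity (i).} Split $\sspa\times\aspa$ into three classes.
\begin{itemize}
\item Pairs with $y^*(s,a)>0$. Here $y^*(s,a)\ge\epsilon$ by \eqref{eq:wcmdp:eps-eta}, so $y_t(s,a)\ge 0$.
\item Pairs in $\Uset$. Since $y_t$ solves \eqref{eq:wcmdp:active-system-vec} (verified right after \eqref{eq:wcmdp:y-linear}), its first block of equations gives $y_t(s,a)=0$ exactly.
\item Pairs with $s\in\sempty$. The second block of equations forces $y_t(s,\cdot)$ to be constant in $a$, and the marginal block gives $\sum_a y_t(s,a)=x(s)\ge0$. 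Hence $y_t(s,a)=x(s)/|\aspa|\ge0$.
\end{itemize}
These classes exhaust all pairs, because $s\notin\sempty$ and $y^*(s,a)=0$ is precisely membership in $\Uset$.

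\textbf{Budget constraints (ii).} For $k\in\Kset$, the third block of equations gives equality, $\sumsa c_k(s,a)y_t(s,a)=\alpha_k$. For $k\notin\Kset$, use $0\le c_k\le1$ and the entrywise bound:
\[
\sumsa c_k(s,a)y_t(s,a)\le \sumsa c_k(s,a)y^*(s,a)+|\sspa||\aspa|\epsilon\le\alpha_k ,
\]
where the last step uses the second term in the minimum defining $\epsilon$.

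\textbf{Feasibility of the subroutine.} Combining (i) with the marginal equations, $y_t$ is a nonnegative vector whose row sums are $x(s)$, so it is a valid state-action distribution. Floor rounding in \Cref{alg:wcmdp:olc} can only lower the counts of nonzero actions. Hence each realized nonzero-action count is at most $N y_t(s,a)$, and the remaining arms take the zero-cost action $0$. For this to be well defined, the requested counts in each state must not exceed the arms available. This holds because $\sum_{a\ne0}\lfloor N y_t(s,a)\rfloor\le N x(s)$, using $y_t(s,0)\ge0$. The realized cost is then at most $\sumsa c_k y_t\le\alpha_k$.

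None of these steps is a genuine obstacle. The only point needing care is the case split in (i): entries with $y^*=0$ cannot be handled by the $\epsilon$-bound and must instead be treated via the exact equations of the system.
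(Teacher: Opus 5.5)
Your proof is correct and follows the paper's argument essentially step for step. It uses the same three ingredients: the Cauchy--Schwarz entrywise bound $\abs{y_t(s,a)-y^*(s,a)}\le\epsilon$, the case split for non-negativity that relies on the exact $\Uset$- and $\sempty$-blocks of the linear system, and the tight/slack split for the budget constraints. Your final paragraph on floor rounding goes slightly beyond the paper's proof, which leaves this step implicit when it concludes that \lpfp is feasible. It shows that requested counts never exceed the arms available in each state and that the realized cost is at most $\sum c_k y_t$, using $c_k(s,0)=0$.
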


\begin{proof}
    We first invoke Cauchy--Schwarz to get the following bound that is used for proving both (i) and (ii): for every $(s,a)\in\sspa\times\aspa$, we have
    \begin{equation}\label{eq:wcmdp:y_t-entrywise}
\begin{aligned}
& \abs{y_t(s,a)-y^*(s,a)} \\
    & \quad = \abs{(x-\statdist)\,\Cinvs\,e_{(s,a)}} \\
    & \quad \leq \opnorm{\Cinvs}_{\umat^{-1}}\,\norm{x-\statdist}_\umat \\
    & \quad \leq \opnorm{\Cinvs}_{\umat^{-1}}\,\eta = \epsilon.
\end{aligned}
\end{equation}

    Now we prove (i) and (ii) separately.

    \emph{(i) Non-negativity.} For $(s,a)\notin\Uset\cup(\sempty\times\aspa)$, $y^*(s,a) \geq \epsilon$ by \eqref{eq:wcmdp:eps-eta}, so \eqref{eq:wcmdp:y_t-entrywise} gives $y_t(s,a)\geq 0$. For $(s,a)\in\Uset$, $y_t(s,a) = 0$ by the $\Uset$-block of \eqref{eq:wcmdp:Cinvs-Cmat}. For $s\in\sempty$, the $\sempty$-block of \eqref{eq:wcmdp:active-system} forces $y_t(s,0) = \cdots = y_t(s,|\aspa|-1) = x(s)/|\aspa| \geq 0$.

    \emph{(ii) Budget constraints.}
    For $k\in\Kset$, the linear equations \eqref{eq:wcmdp:active-system} force $\sumsa c_k(s,a)\,y_t(s,a) = \alpha_k$.
    For $k\notin\Kset$, by the definition of $\epsilon$ in \eqref{eq:wcmdp:eps-eta}, we have
    \[
        \sumsa c_k(s,a)\,y^*(s,a) \leq \alpha_k - |\sspa||\aspa|\,\epsilon
    \]
    Combining this bound with \eqref{eq:wcmdp:y_t-entrywise} and using the fact that $c_k(s,a)\leq 1$ for all $(s,a)\in\sspa\times\aspa$, we have
    \[
\begin{aligned}
& \sumsa c_k(s,a)\,y_t(s,a) \\
    & \quad \leq \sumsa c_k(s,a)\,y^*(s,a) + |\sspa||\aspa|\,\epsilon \\
    & \quad \leq \alpha_k.
\end{aligned}\qedhere
\]
\end{proof}

\subsection{Transition dynamics under subroutines}\label{app:wcmdp:subroutine-transition-lemmas}

The lemma below characterizes the dynamics of the empirical distribution of any subset of arms following \frcontrol (\Cref{alg:wcmdp:uoc}).

\begin{lemma}[\frcontrol transition dynamics]\label{lem:wcmdp:fr-transition}
    For any time step $t$ and any $D\subseteq[N]$,
    \begin{equation}
\begin{aligned}
& \E{X_{t+1}(D)\givenplain X_t,\,\textup{arms in $D$ follow \Cref{alg:wcmdp:uoc}}} \\
    & \quad = X_t(D)\,P_\pibs \quad a.s.,
\end{aligned}
\end{equation}
    where the expectation is taken entry-wise and $P_\pibs(s,s') = \sum_{a\in\aspa} \pibs(a|s)\,P(s,a,s')$ for $s,s'\in\sspa$.
\end{lemma}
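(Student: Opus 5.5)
We prove the identity by conditioning on the realized action vector and then using linearity of expectation over the arms in $D$. Write $X_{t+1}(D,s')=\frac{1}{N}\sum_{i\in D}\indibrac{S_{t+1}(i)=s'}$. By the model assumptions, given the states and actions at time $t$, each arm moves independently according to $P$. Hence
\[
\Ebig{\indibrac{S_{t+1}(i)=s'}\givenbig \veS_t,\veA_t}=P\bigl(S_t(i),A_t(i),s'\bigr).
\]
This step uses only the conditional transition structure and is unaffected by what the arms outside $D$ do, because $D$ is fixed once we condition on $X_t$ (and on the subset selection, which depends only on the past).

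Next we integrate out the actions. Under \Cref{alg:wcmdp:uoc}, every arm $i\in D$ draws $A_t(i)\sim\pibs(\cdot\mid S_t(i))$ independently, using fresh randomness. Taking the conditional expectation of $P(S_t(i),A_t(i),s')$ over this draw gives
\[
\sum_{a\in\aspa}\pibs(a\mid S_t(i))\,P(S_t(i),a,s')=P_\pibs(S_t(i),s').
\]
We then sum over $i\in D$ and group the arms by their current state $s$. This gives
\[
\E{X_{t+1}(D,s')\givenplain X_t}=\frac1N\sum_{s\in\sspa}\sum_{i\in D}\indibrac{S_t(i)=s}P_\pibs(s,s')=\sum_{s}X_t(D,s)P_\pibs(s,s'),
\]
which is the $s'$ entry of $X_t(D)P_\pibs$. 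The identity holds almost surely because it is an identity between conditional expectations.

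The only subtle point is measurability. The conditioning must include whatever determines $D$, namely the subset selection in \Cref{alg:wcmdp:two-set}. That selection uses randomness independent of the \frcontrol action draws and of the transitions, so the tower property applies without change. No step is a real obstacle here; the lemma is a direct linearity computation.
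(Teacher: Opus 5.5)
Your proof is correct and follows the same route as the paper. You condition on states and actions, average $P$ over the draws $A_t(i)\sim\pibs(\cdot\mid S_t(i))$ to obtain $P_\pibs$, and sum over arms in $D$ grouped by state; your extra remark about how $D$ is selected is harmless, since the lemma is stated for a fixed $D$.
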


\begin{proof}
For each $s'\in\sspa$, independent action sampling and the transition kernel give
\[
\begin{aligned}
& \E{X_{t+1}(D,s')\given X_t,\ \text{arms in $D$ follow \Cref{alg:wcmdp:uoc}}} \\
    & \quad = \sumsa X_t(D,s)\pibs(a\mid s)P(s,a,s') \\
    & \quad = (X_t(D)P_\pibs)(s').
\end{aligned}
\]
\end{proof}

Next, we state two lemmas for \lpfp (\Cref{alg:wcmdp:olc}). For simplicity, we state and prove them for the case where all $N$ arms follow \lpfp. 
For a nonempty subset $D\subseteq[N]$ satisfying $\norm{X_t(D)/m(D)-\statdist}_\umat\leq\eta$, these lemmas apply verbatim to the standalone system formed by the arms in $D$, with $X_t([N])$ replaced by $X_t(D)/m(D)$ and $N$ replaced by $|D|$. For $D=\emptyset$, the subroutine makes no assignments and its state and state-action counts remain zero.

We first prove the following lemma that bounds the rounding error of \lpfp. 

\begin{lemma}[Rounding error of \lpfp]\label{lem:wcmdp:olc-rounding}
    Suppose all arms follow \lpfp (\Cref{alg:wcmdp:olc}) at time $t$, with target $y_t = y^* + (X_t([N]) - \statdist)\,\Cinvs$ as in \eqref{eq:wcmdp:y-linear}, and let $Y_t$ denote the realized state-action distribution. Then
    \[
        \norm{Y_t - y_t}_1 \leq \frac{2|\sspa|(|\aspa|-1)}{N}.
    \]
\end{lemma}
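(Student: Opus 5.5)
Our plan is to compare the realized counts $N Y_t(s,a)$ with the targets $N y_t(s,a)$ entrywise and to use the fact that the rounding is done one state at a time. Write $x=X_t([N])$ and $n=N$. First, we would check that the floor assignments in Line~\ref{alg:wcmdp:olc:assign} of \Cref{alg:wcmdp:olc} can actually be carried out. Fix a state $s$. By \Cref{lem:wcmdp:linear-feasibility}, which applies because of the assertion $\norm{x-\statdist}_\umat\le\eta$, we have $y_t(s,a)\ge 0$ for every $a$. The marginal row of \eqref{eq:wcmdp:active-system}, enforced through \eqref{eq:wcmdp:Cinvs-Cmat}, gives $\sum_{a} y_t(s,a)=x(s)$. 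Hence
\[
\sum_{a\neq 0}\lfloor N y_t(s,a)\rfloor\le N\sum_{a\ne0}y_t(s,a)\le N x(s),
\]
so there are always enough unassigned arms in state $s$ for the assignments.

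Next we would account for the error on the nonzero actions. For each $a\neq0$ the realized count is exactly $N Y_t(s,a)=\lfloor N y_t(s,a)\rfloor$. This gives
\[
0\le y_t(s,a)-Y_t(s,a)<\frac{1}{N},
\]
so summing over $a\ne0$, state $s$ contributes strictly less than $(|\aspa|-1)/N$ to $\norm{Y_t-y_t}_1$ from nonzero actions.

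Then we would handle action $0$. In state $s$, action $0$ goes to every remaining arm, so $Y_t(s,0)=x(s)-\sum_{a\ne0}Y_t(s,a)$. Using the marginal identity $y_t(s,0)=x(s)-\sum_{a\ne0}y_t(s,a)$, we obtain
\[
\abs{Y_t(s,0)-y_t(s,0)}=\sum_{a\ne0}\bigl(y_t(s,a)-Y_t(s,a)\bigr)<\frac{|\aspa|-1}{N}.
\]
Adding the two contributions, each state contributes at most $2(|\aspa|-1)/N$. Summing over $s\in\sspa$ then gives the claimed bound $2|\sspa|(|\aspa|-1)/N$.

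The only step we expect to need care is confirming that the marginal constraint $\sum_a y_t(s,a)=x(s)$ holds exactly for the specific affine solution \eqref{eq:wcmdp:y-linear}. This follows directly from the verification $y\Cmat=(0,0,(\alpha_k),x)$ given right after \eqref{eq:wcmdp:y-linear}, since the last block of columns of $\Cmat$ encodes those marginal sums. For states in $\sempty$, the uniform split is consistent with this identity and needs no separate treatment. The bound is deterministic, so it holds almost surely regardless of the uniform random choice of which arms receive which action.
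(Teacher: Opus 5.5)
Your proposal is correct and follows the same route as the paper. You bound the floor-rounding error on each nonzero action by $1/N$, transfer the same total error to action $0$ through the exact marginal identity $\sum_{a} y_t(s,a) = X_t([N],s)$, and sum over states. Your extra check that the floor counts in each state never exceed the available arms (using \Cref{lem:wcmdp:linear-feasibility}) is left implicit in the paper's proof, but it does not change the argument.
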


\begin{proof}
    By floor rounding on Line~\ref{alg:wcmdp:olc:assign} of \Cref{alg:wcmdp:olc}, $|N\,Y_t(s,a) - N\,y_t(s,a)| \leq 1$ for each $(s,a)$ with $a\neq 0$, so
    \[
        \sum_{a\neq 0} \abs{Y_t(s,a) - y_t(s,a)} \leq \frac{|\aspa|-1}{N}.
    \]
    For each state $s$, the arms not assigned actions $a \neq 0$ take action $0$, so $Y_t(s,0) = X_t([N],s) - \sum_{a\neq 0} Y_t(s,a)$; since $y_t(s,0) = X_t([N],s) - \sum_{a\neq 0} y_t(s,a)$ as well, we have
    \[
\begin{aligned}
& \abs{Y_t(s,0)-y_t(s,0)} \\
    & \quad \leq \sum_{a\neq0}\abs{Y_t(s,a)-y_t(s,a)} \\
    & \quad \leq \frac{|\aspa|-1}{N}.
\end{aligned}
\]
    Summing the first and second display over $s\in\sspa$ gives $\norm{Y_t - y_t}_1 \leq 2|\sspa|(|\aspa|-1)/N$.
\end{proof}

Next, we restate and prove \Cref{lem:wcmdp:olc-transition} on the transition dynamics of \lpfp. 

\wcmdpolctransition*

\begin{proof}
Since $Y_t$ is deterministic given $X_t$ (floor rounding is a deterministic function of $X_t$), the expected next state counts at each $s'\in\sspa$ are
\[
    \E{X_{t+1}([N], s') \givenplain X_t} = \sum_{s\in\sspa,\,a\in\aspa} Y_t(s,a)\,P(s,a,s').
\]
Aggregating in vector form and separating out $y_t$, we obtain
\begin{align}
    & \E{X_{t+1}([N])\givenplain X_t} \nonumber \\
    & \quad = Y_t\,\mathcal{P} \nonumber \\
    & \quad = y_t\,\mathcal{P} \;+\; (Y_t-y_t)\,\mathcal{P}. \label{eq:wcmdp:olc-decomp-step-1}
\end{align}
It remains to calculate $y_t\,\mathcal{P}$. Substituting the definition of $y_t$ from \eqref{eq:wcmdp:y-linear}, we obtain
\[
    y_t\,\mathcal{P} = y^*\,\mathcal{P} \;+\; (X_t([N]) - \statdist)\,\Cinvs\,\mathcal{P}.
\]
By the flow constraint of \eqref{eq:wcmdp-lp}, $\sum_{s,a}y^*(s,a)P(s,a,s') = \statdist(s')$ for all $s'\in\sspa$, so $y^*\mathcal{P} = \statdist$. We recall that $\Phi = \Cinvs\mathcal{P} - \vone^\top\statdist\,\Cinvs\mathcal{P}$ from \eqref{eq:wcmdp:phi-def}, so
\[
\begin{aligned}
& (X_t([N])-\statdist)\,\Cinvs\,\mathcal{P} \\
    & \quad = (X_t([N])-\statdist)\,\Phi \\
    & \qquad + (X_t([N])-\statdist)\,\vone^\top\statdist\,\Cinvs\mathcal{P} \\
    & \quad = (X_t([N])-\statdist)\,\Phi,
\end{aligned}
\]
where the last equality uses $(X_t([N]) - \statdist)\,\vone^\top = 0$. Hence $y_t\,\mathcal{P} = \statdist + (X_t([N]) - \statdist)\,\Phi$, which combined with \eqref{eq:wcmdp:olc-decomp-step-1} gives \eqref{eq:wcmdp:olc-transition}. The norm bound $\|(Y_t-y_t)\mathcal{P}\|_1 \leq \|Y_t-y_t\|_1 \leq 2|\sspa|(|\aspa|-1)/N$ follows from \Cref{lem:wcmdp:olc-rounding} and row-stochasticity of $\mathcal{P}$.
\end{proof}

\section{Detailed upper-bound proofs}\label{app:wcmdp:upper}

\subsection{Subroutine conformity}\label{sec:wcmdp:subroutine-conform}

The following lemma establishes that the two-set policy is well-defined and feasible.

\begin{restatable}[Subroutine conformity]{lemma}{wcmdpsubroutineconformity}\label{lem:wcmdp:subroutine-conform}
    Assume \Cref{assump:wcmdp:aperiodic-unichain,assump:wcmdp:non-degeneracy,assump:wcmdp:local-stability} hold. 
    Consider the two-set policy described in \Cref{alg:wcmdp:two-set}. Its set-selection steps always admit an admissible choice.
    For any time $t$, we have the following:
    \begin{enumerate}[leftmargin=2em, label=(\roman*)]
        \item The arms in $\Db_t$ can follow \lpfp.
        \item The budget constraints are satisfied, i.e., for every $k\in[K]$, $\sum_{i\in [N]} c_k(S_t(i), A_t(i)) \leq \alpha_k N$.
    \end{enumerate}
\end{restatable}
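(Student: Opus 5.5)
The proof has two parts. First, every set-selection step in \Cref{alg:wcmdp:two-set} admits an admissible choice. Second, given those choices, claims (i) and (ii) follow from \Cref{lem:wcmdp:linear-feasibility} together with a cost-accounting argument.

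\emph{Existence of $\Db_t$.} Consider the three branches of Lines~\ref{alg:wcmdp:two-set:db-begin}--\ref{alg:wcmdp:two-set:db-end}. In the first branch, $[N]$ has nonnegative slack and nothing needs to be shown. For the other two branches, fix a base set $D_0$: take $D_0=\Db_{t-1}$ in the second branch, where its slack is nonnegative by the branch condition, and $D_0=\emptyset$ in the third. In both cases $D_0$ satisfies condition (1) of \Cref{def:wcmdp:maximal-feasible-set}. Among the finitely many supersets $D\supseteq D_0$ that satisfy condition (1), choose one of maximum cardinality, say $D^\dagger$. We claim $D^\dagger$ also satisfies condition (2). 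Take any $D'\supseteq D^\dagger$ with $\slkb(x,D')\ge\errtol\ge0$. Then $D'$ satisfies condition (1) and contains $D_0$, so maximality of $D^\dagger$ gives $|D'|\le|D^\dagger|$. Hence $m(D')\le m(D^\dagger)\le m(D^\dagger)+\errtol$. This argument uses only finiteness and the fact that condition (1) is monotone along the search. Note that the algorithm needs the superset requirement $\Db_t\supseteq\Db_{t-1}$ only in the second branch, which is exactly where we started from $D_0=\Db_{t-1}$.

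\emph{Existence of $\Da_t$.} Set $n=\lfloor\alpha_{\min}(N-|\Db_t|)\rfloor$ and $R=[N]\setminus\Db_t$, so that $n\le|R|$. Let $B=\Da_{t-1}\setminus\Db_t\subseteq R$. If $|B|\ge n$, take any $n$-subset of $B$; this satisfies the nesting option $\Da_t\subseteq B$. Otherwise, take $B$ together with $n-|B|$ further arms from $R\setminus B$. These exist because $|R\setminus B|=|R|-|B|\ge n-|B|$, and the result satisfies the option $\Da_t\supseteq B$.

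\emph{Claim (i).} If $\Db_t=\emptyset$, the subroutine simply returns. Otherwise, condition (1) gives $\slkb(X_t,\Db_t)\ge0$, which is equivalent to $\norm{X_t(\Db_t)/m(\Db_t)-\statdist}_\umat\le\eta$. \Cref{lem:wcmdp:linear-feasibility}, applied to the standalone subsystem with $x=X_t(\Db_t)/m(\Db_t)$, then yields a target $y\ge0$ with $\sumsa c_k(s,a)y(s,a)\le\alpha_k$. We must also check that Line~\ref{alg:wcmdp:olc:assign} never requests more arms in state $s$ than exist. Nonnegativity gives $\sum_{a\ne0}\lfloor n y(s,a)\rfloor\le n\sum_a y(s,a)=n\,x(s)=z(s)$, which is exactly the number of arms available in state $s$.

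\emph{Claim (ii).} Costs are nonnegative and floor rounding only decreases counts, so the realized cost on $\Db_t$ is at most $|\Db_t|\sumsa c_k(s,a)y(s,a)\le\alpha_k|\Db_t|$. We need $c_k\ge0$ here so that rounding down, which also moves arms to action $0$, cannot increase cost; this holds because $c_k(s,0)=0$. The arms in $\Da_t$ incur cost at most $|\Da_t|\le\alpha_k(N-|\Db_t|)$, since $c_k\le1$. All remaining arms take action $0$ and incur zero cost. Summing the three contributions gives total cost at most $\alpha_kN$.

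The main obstacle is the existence of the $\errtol$-maximal set. The key point is to maximize cardinality within the nonnegative-slack class rather than the $\errtol$-slack class, so that condition (2) follows at once. The rest is bookkeeping.
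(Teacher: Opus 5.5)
Your proof is correct and follows the paper's argument essentially step for step. You pick $\Db_t$ as a maximum-cardinality feasible superset of $\Db_{t-1}$ (or of $\emptyset$), obtain $\Da_t$ by truncating or extending $\Da_{t-1}\setminus\Db_t$, and get (i)--(ii) from \Cref{lem:wcmdp:linear-feasibility} plus cost accounting. Your explicit checks fill in details the paper leaves implicit: that floor rounding never requests more arms than a state contains, and that rounding down cannot raise the realized cost (using $c_k\ge0$ and $c_k(s,0)=0$).
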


\begin{proof}[Proof of \Cref{lem:wcmdp:subroutine-conform}]
    We first consider the selection of $\Db_t$. If the full set is feasible, it is $\errtol$-maximal. Otherwise, when $\Db_{t-1}$ is feasible, the family of feasible supersets of $\Db_{t-1}$ is nonempty; when it is infeasible, the family of all feasible sets contains $\emptyset$. In either case, a set of maximum cardinality within that family is $\errtol$-maximal, since $\errtol\geq0$ and every superset with slack at least $\errtol$ is also feasible. Thus an admissible $\Db_t$ exists.

    For the selection of $\Da_t$, we let $B_t=\Da_{t-1}\setminus\Db_t$ and $q_t=\lfloor\alpha_{\min}(N-|\Db_t|)\rfloor$. Both $B_t\subseteq[N]\setminus\Db_t$ and $0\leq q_t\leq N-|\Db_t|$ hold. If $|B_t|\geq q_t$, we take a subset of $B_t$ of size $q_t$; otherwise, we extend $B_t$ to that size within $[N]\setminus\Db_t$. These choices satisfy the required nesting condition. Each set-selection step therefore samples from a finite, nonempty family of admissible choices.

    For (i), if $\Db_t = \emptyset$, \lpfp returns without assigning actions. If $\Db_t \neq \emptyset$, because the two-set policy chooses $\Db_t$ to satisfy $\slkb(X_t, \Db_t) \geq 0$, we have
    \[
        \norm{X_t(\Db_t)/m(\Db_t) - \statdist}_\umat \leq \eta.
    \]
    Since $X_t(\Db_t)/m(\Db_t)$ is the empirical distribution of arms in $\Db_t$, by \Cref{lem:wcmdp:linear-feasibility}, the arms in $\Db_t$ can follow \lpfp.

    For (ii), the arms in $\Db_t$ satisfy the scaled budget constraints
    \[
        \sum_{i\in \Db_t} c_k(S_t(i), A_t(i))
        \leq N m(\Db_t)\,\alpha_k.
    \]
    This follows from \Cref{lem:wcmdp:linear-feasibility} when $\Db_t\ne\emptyset$; for an empty set, both sides are zero.
    The cost in $\Da_t$ is at most $|\Da_t| \leq N \alpha_{\min}(1 - m(\Db_t)) \leq N \alpha_k(1 - m(\Db_t))$ since $c_k(s,a) \in [0,1]$ for all $s\in\sspa$, $a\in\aspa$, and $k\in[K]$. 
    Since the remaining arms take action $0$, by the assumption that $c_k(s,0) = 0$, they do not contribute to the total cost. Therefore, summing the cost of the three sets of arms, we get
    \[
\begin{aligned}
& \sum_{i\in[N]}c_k(S_t(i),A_t(i)) \\
    & \quad \leq N m(\Db_t)\,\alpha_k + N\alpha_k(1-m(\Db_t)) + 0 \\
    & \quad = N\alpha_k.
\end{aligned}\qedhere
\]
\end{proof}

\subsection{Proof of \hcref{lem:wcmdp:Vds-properties}}\label{sec:wcmdp:pf-Vds-properties}

\wcmdpvdsproperties*

The proof uses three supporting lemmas: subset concentration, almost non-shrinking, and sufficient coverage. We state them first, then give the composite drift argument and supporting proofs.

\begin{restatable}[Properties of $\hw$ and $\hu$]{lemma}{wcmdphwhuproperties}\label{lem:wcmdp:hw-hu-properties}
    Assume \Cref{assump:wcmdp:aperiodic-unichain,assump:wcmdp:non-degeneracy,assump:wcmdp:local-stability} hold. For any $t$ and $D\subseteq[N]$, let $X_{t+1}'$ be the system state at time $t+1$ if the arms in $D$ follow \Cref{alg:wcmdp:uoc}. Then
    \begin{align}
    & \Ebig{(\hw(X_{t+1}',D)-\rhow\,\hw(X_t,D))^+\givenbig X_t} \nonumber \\
    & \quad \leq \frac{\KhwOne}{\sqrt{N}} \quad a.s., \label{eq:wcmdp:hw:drift} \\
    & \Probbig{\hw(X_{t+1}',D)>\rhow\,\hw(X_t,D)+r\givenbig X_t} \nonumber \\
    & \quad \leq \KhwTwo\exp(-\Chw N r^2) \quad a.s.,\;\forall r\geq0, \label{eq:wcmdp:hw:high-prob}
\end{align}
    where $\rhow = 1 - 1/(2\lamw)$ and $\KhwOne, \KhwTwo, \Chw$ are positive constants independent of $N$. Moreover, for any $D, D' \subseteq [N]$,
    \begin{align}
    & \hw(x,D)\geq\frac{1}{|\sspa|^{1/2}}\norm{x(D)-m(D)\statdist}_1, \label{eq:wcmdp:hw:strength} \\
    & \abs{\hw(x,D)-\hw(x,D')} \nonumber \\
    & \quad \leq 2\lamw^{1/2}\bigl(m(D'\setminus D)+m(D\setminus D')\bigr). \label{eq:wcmdp:hw:lipschitz}
\end{align}
    For $\hu$, take a nonempty subset $D\subseteq[N]$ and suppose that
    \[
        \norm{X_t(D)/m(D)-\statdist}_\umat\leq\eta.
    \]
    If the arms in $D$ follow \Cref{alg:wcmdp:olc}, then $\hu$ satisfies the analogous drift bound \eqref{eq:wcmdp:hw:drift} and tail bound \eqref{eq:wcmdp:hw:high-prob}, with constants $\KhuOne,\KhuTwo,\Chu$ and contraction rate $\rhou=1-1/(2\lamu)$. Here $X_{t+1}'$ denotes the resulting next state. For $D=\emptyset$, both subset functions vanish and these drift and tail bounds hold trivially. The strength and Lipschitz inequalities \eqref{eq:wcmdp:hw:strength}--\eqref{eq:wcmdp:hw:lipschitz} hold for $\hu$ for all $D,D'\subseteq[N]$, with $\lamw$ replaced by $\lamu$.
\end{restatable}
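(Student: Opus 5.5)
The drift bounds \eqref{eq:wcmdp:hw:drift}--\eqref{eq:wcmdp:hw:high-prob} come from splitting the next-step deviation into a conditional mean plus independent noise. Fix $t$ and $D$, and write $v_t \triangleq X_t(D)-m(D)\statdist$; this vector has zero sum. When the arms in $D$ follow \Cref{alg:wcmdp:uoc}, \Cref{lem:wcmdp:fr-transition} gives
\[
X'_{t+1}(D)-m(D)\statdist = v_t P_\pibs + \xi_{t+1},
\]
where $\xi_{t+1}\triangleq X'_{t+1}(D)-X_t(D)P_\pibs$ has conditional mean zero. I use $\statdist P_\pibs=\statdist$ here. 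For nonempty $D$, $v_t = m(D)(X_t(D)/m(D)-\statdist)$ with $X_t(D)/m(D)\in\simplex(\sspa)$. Scaling \eqref{eq:wcmdp:pibar-contraction} of \Cref{lem:wcmdp:one-step-contraction-W-U} by $m(D)$ then gives $\norm{v_tP_\pibs}_\wmat\le\rhow\hw(X_t,D)$. The triangle inequality yields
\[
\bigl(\hw(X'_{t+1},D)-\rhow\hw(X_t,D)\bigr)^+\le\norm{\xi_{t+1}}_\wmat\le\lamw^{1/2}\norm{\xi_{t+1}}_2 .
\]
Conditional on $X_t$, $N\xi_{t+1}$ is a sum of $|D|$ independent, centered random vectors, one per arm, each of $L_2$ norm at most $2$. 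Hence $\E{\norm{\xi_{t+1}}_2^2\given X_t}\le 4|D|/N^2\le 4/N$, and Jensen's inequality gives \eqref{eq:wcmdp:hw:drift} with $\KhwOne=2\lamw^{1/2}$.

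For \eqref{eq:wcmdp:hw:high-prob}, I will view $\norm{\xi_{t+1}}_\wmat$ as a function of the independent per-arm randomness (the action draw and the next-state draw of each arm). Changing one arm's randomness moves $\xi_{t+1}$ by at most $2/N$ in $L_2$, so it moves $\norm{\xi_{t+1}}_\wmat$ by at most $2\lamw^{1/2}/N$. McDiarmid's inequality then gives
\[
\Prob{\norm{\xi}_\wmat>\E{\norm{\xi}_\wmat}+s}\le\exp\bigl(-Ns^2/(2\lamw)\bigr).
\]
Since $\E{\norm{\xi}_\wmat}\le\KhwOne/\sqrt N$, two cases cover all $r\ge0$. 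For $r\ge2\KhwOne/\sqrt N$, apply the bound with $s=r/2$. For smaller $r$, the bound is trivial once $\KhwTwo\ge\exp(4\Chw\KhwOne^2)$. So \eqref{eq:wcmdp:hw:high-prob} holds with constants independent of $N$ and of $|D|$. The case $D=\emptyset$ is trivial.

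For $\hu$, take $D$ nonempty with $\norm{X_t(D)/m(D)-\statdist}_\umat\le\eta$, and apply \Cref{lem:wcmdp:olc-transition} to the subsystem $D$, in the rescaled form described before \Cref{lem:wcmdp:olc-rounding}. Because floor rounding makes the actions deterministic given $X_t$, the next states of the arms are conditionally independent. This gives
\[
X'_{t+1}(D)-m(D)\statdist=v_t\Phi+\bigl(Y_t(D)-m(D)y_t\bigr)\mathcal P+\xi'_{t+1}.
\]
In this decomposition, the rounding term has $L_1$ norm at most $2|\sspa|(|\aspa|-1)/N$ (counts are scaled by $N$, not $|D|$), and $\xi'_{t+1}$ is centered independent noise as before. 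The contraction \eqref{eq:wcmdp:lp-contraction}, scaled by $m(D)$, handles $v_t\Phi$. The rounding term contributes a deterministic $O(\lamu^{1/2}/N)\le O(1/\sqrt N)$ to the drift bound. In the tail bound it is absorbed by shifting $r$ by $O(1/N)$, and the small-$r$ case is again handled by enlarging $\KhuTwo$. The noise is handled exactly as for $\hw$.

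The strength and Lipschitz properties are deterministic. Since all eigenvalues of $\wmat$ are at least $1$ (\Cref{lem:wcmdp:W-U-well-defined}), $\norm{u}_\wmat\ge\norm{u}_2\ge\norm{u}_1/|\sspa|^{1/2}$, which gives \eqref{eq:wcmdp:hw:strength}. For \eqref{eq:wcmdp:hw:lipschitz}, the reverse triangle inequality bounds the difference by $\lamw^{1/2}\norm{x(D)-x(D')-(m(D)-m(D'))\statdist}_1$. Here $x(D)-x(D')=x(D\setminus D')-x(D'\setminus D)$ has $L_1$ norm at most $m(D\setminus D')+m(D'\setminus D)$, and $|m(D)-m(D')|$ obeys the same bound, so the total is at most twice the symmetric-difference mass. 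The same arguments apply to $\hu$ with $\lamu$ in place of $\lamw$. I expect the main care point to be the tail bound: the constants must be uniform in $|D|$ and in $r\ge0$, and the $O(1/N)$ rounding offset for $\hu$ must be absorbed correctly.
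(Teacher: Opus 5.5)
Your proof is correct and follows the paper's argument. The pseudo-contraction of $P_\pibs$ (resp.\ $\Phi$) plus the triangle inequality reduces both bounds to the conditionally independent transition noise, plus the deterministic $O(1/N)$ rounding residual for $\hu$. The one real difference is how you concentrate that noise: you use an $L_2$ variance bound and McDiarmid's inequality on $\norm{\xi}_\wmat$, whereas the paper uses coordinatewise variance and Hoeffding bounds with a union bound over states; your route also gives constants free of $|\sspa|$.

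One small fix is needed. Under \lpfp only the state-action counts are deterministic given $X_t$, not the individual arms' actions, since arms in the same state receive a uniformly random assignment. So, as the paper does, condition on $X_t$ and the realized actions $(A_t(i))_{i\in D}$ to get independence across arms, apply your bounds, and then average.
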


\begin{restatable}[Almost non-shrinking]{lemma}{wcmdpalmostnonshrinking}\label{lem:wcmdp:non-shrink}
    Under \Cref{assump:wcmdp:aperiodic-unichain,assump:wcmdp:non-degeneracy,assump:wcmdp:local-stability}, we have that for any $t\geq 0$, 
    \begin{align*}
    & \Ebig{m(\Db_t\setminus\Db_{t+1})\givenbig\Fullstate_t} \\
    & \quad \leq \KnsOne\,m(\Db_t)\exp(-\Cns N m(\Db_t)^2) \quad a.s., \\
    & \Probbig{m(\Db_t\setminus\Db_{t+1})>0\givenbig\Fullstate_t} \\
    & \quad \leq \KnsTwo\exp(-\Cns N m(\Db_t)^2) \quad a.s.,
\end{align*}
    where $\KnsOne, \KnsTwo, \Cns > 0$ are constants independent of $N$.
\end{restatable}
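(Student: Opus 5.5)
The idea is to show that $\Db_t$ can lose arms only when the empirical distribution of the arms in $\Db_t$ leaves the feasibility ball at time $t+1$. That event is controlled by the tail bound for $\hu$ in \Cref{lem:wcmdp:hw-hu-properties}.

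\emph{Step 1 (reduction to a slack event).} If $\Db_t=\emptyset$, both bounds are trivial because $m(\Db_t\setminus\Db_{t+1})=0$. Assume $\Db_t\neq\emptyset$. I will read off from Lines~\ref{alg:wcmdp:two-set:db-begin}--\ref{alg:wcmdp:two-set:db-end} of \Cref{alg:wcmdp:two-set} that $\Db_{t+1}\supseteq\Db_t$ whenever $\slkb(X_{t+1},\Db_t)\ge 0$. In the first branch $\Db_{t+1}=[N]$. Otherwise the second branch applies, since its condition is exactly $\slkb(X_{t+1},\Db_t)\ge0$, and it selects a superset of $\Db_t$. Hence
\[
\{m(\Db_t\setminus\Db_{t+1})>0\}\subseteq\{\slkb(X_{t+1},\Db_t)<0\}=\{\hu(X_{t+1},\Db_t)>\eta\, m(\Db_t)\}.
\]
Since $m(\Db_t\setminus\Db_{t+1})\le m(\Db_t)$, the first inequality follows from the second, with $\KnsOne=\KnsTwo$. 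It therefore suffices to bound the conditional probability of this event.

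\emph{Step 2 (apply the $\hu$ tail bound).} By \Cref{lem:wcmdp:subroutine-conform}, $\slkb(X_t,\Db_t)\ge0$, so $\hu(X_t,\Db_t)\le\eta\,m(\Db_t)$ and the arms of $\Db_t$ follow \Cref{alg:wcmdp:olc}. The quantity $X_{t+1}(\Db_t)$ depends only on the states and actions of the arms in $\Db_t$. The actions of arms outside $\Db_t$ do not matter, because arms transition conditionally independently. Consequently, the next state of the arms in $\Db_t$ is exactly the $X'_{t+1}$ appearing in the $\hu$ part of \Cref{lem:wcmdp:hw-hu-properties}, which already absorbs the $O(1/N)$ rounding residual of \Cref{lem:wcmdp:olc-transition}. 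Now
\[
\hu(X_{t+1},\Db_t)>\eta\,m(\Db_t)
\quad\Longrightarrow\quad
\hu(X_{t+1},\Db_t)>\rhou\,\hu(X_t,\Db_t)+(1-\rhou)\eta\, m(\Db_t).
\]
Applying the tail bound with $r=(1-\rhou)\eta\,m(\Db_t)$ gives probability at most $\KhuTwo\exp\bigl(-\Chu(1-\rhou)^2\eta^2 N m(\Db_t)^2\bigr)$. This yields the lemma with $\Cns=\Chu(1-\rhou)^2\eta^2$ and $\KnsOne=\KnsTwo=\KhuTwo$.

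\emph{Main obstacle.} The delicate point is checking that the $\hu$ tail bound really holds in the form stated, with exponent $N r^2$ in terms of the scaled quantity $\hu(\cdot,D)$ rather than the normalized distribution $X(D)/m(D)$. If the subsystem-level concentration is naturally stated for $|D|$ arms and the normalized distribution, the two scalings must match. Multiplying the normalized deviation $r'$ by $m(D)$ gives $r=m(D)r'$, and $|D|\,r'^2=N r^2/m(D)$, so the bound is even stronger than $\exp(-CNr^2)$ since $m(D)\le1$. This conversion, together with the conditioning on the full $\Fullstate_t$ (which contributes only the fresh randomness of transitions), is where I would be careful.
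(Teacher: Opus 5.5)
Your proposal is correct and follows the paper's argument. You reduce the loss event to $\slkb(X_{t+1},\Db_t)<0$ using the branch structure of \Cref{alg:wcmdp:two-set}, bound the mean by $m(\Db_t)$ times that probability, and apply the $\hu$ tail bound of \Cref{lem:wcmdp:hw-hu-properties} at $r=(1-\rhou)\eta\,m(\Db_t)$; this gives the same constants $\KnsOne=\KnsTwo=\KhuTwo$ and $\Cns=\Chu(1-\rhou)^2\eta^2$. The scaling concern you flag is already settled inside that lemma: its proof lower-bounds the Hoeffding exponent $N^2r^2/(2|D||\sspa|^2)$ by $C'Nr^2$ using $|D|\le N$, which is the conversion you describe.
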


\begin{restatable}[Sufficient coverage]{lemma}{wcmdpsufficientcoverage}\label{lem:wcmdp:sufficient-coverage}
    Assuming \Cref{assump:wcmdp:aperiodic-unichain,assump:wcmdp:non-degeneracy,assump:wcmdp:local-stability} and $\errtol=O(1/N)$, we have that for any $t\geq 0$, 
    \begin{align*}
        1 - m(\Db_t) &\leq \KscOne\,\hw(X_t, \Da_t) + \frac{\KscTwo}{N} \quad a.s., \\
        m(\Db_t) &= 1 \quad \text{ if } \hu(X_t, [N]) \leq \eta,
    \end{align*}
    where $\KscOne, \KscTwo > 0$ are constants independent of $N$.
\end{restatable}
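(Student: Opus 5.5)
The second claim follows directly from the first branch of \Cref{alg:wcmdp:two-set}. The condition $\hu(X_t,[N])\le\eta$ is exactly $\slkb(X_t,[N]) = \eta - \norm{X_t([N])-\statdist}_\umat \ge 0$, since $m([N])=1$. In that case Line~\ref{alg:wcmdp:two-set:db-begin} sets $\Db_t=[N]$, so $m(\Db_t)=1$. The first claim is trivial when $\Db_t=[N]$, so I will assume $\Db_t\ne[N]$. Then $\Db_t$ was chosen as an $\errtol$-maximal feasible set, so $\slkb(X_t,\Db_t)\ge 0$ (or $\Db_t=\emptyset$, which also has slack $0$). The plan is to exhibit the explicit superset $D'=\Db_t\cup\Da_t$ and compare its slack with the maximality condition (2) of \Cref{def:wcmdp:maximal-feasible-set}.

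\textbf{Step 1: norm comparison.} By \Cref{lem:wcmdp:W-U-well-defined}, both $\wmat$ and $\umat$ have eigenvalues in $[1,\lamw]$ and $[1,\lamu]$ respectively. Hence for any row vector $v$,
\[
\norm{v}_\umat \le \lamu^{1/2}\norm{v}_2 \le \lamu^{1/2}\norm{v}_\wmat .
\]

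\textbf{Step 2: slack of $D'$.} Since $\Da_t\subseteq[N]\setminus\Db_t$, the sets are disjoint. Therefore $X_t(D') = X_t(\Db_t)+X_t(\Da_t)$ and $m(D')=m(\Db_t)+m(\Da_t)$. The triangle inequality and Step 1 then give
\[
\slkb(X_t,D') \ge \slkb(X_t,\Db_t) + \eta\, m(\Da_t) - \norm{X_t(\Da_t)-m(\Da_t)\statdist}_\umat \ge \eta\, m(\Da_t) - \lamu^{1/2}\hw(X_t,\Da_t).
\]

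\textbf{Step 3: case split using maximality.} If $\eta\, m(\Da_t) - \lamu^{1/2}\hw(X_t,\Da_t)\ge\errtol$, then $D'\supseteq\Db_t$ has slack at least $\errtol$. Maximality then forces $m(D')\le m(\Db_t)+\errtol$, that is, $m(\Da_t)\le\errtol$. Otherwise $m(\Da_t) < (\lamu^{1/2}\hw(X_t,\Da_t)+\errtol)/\eta$. In both cases,
\[
m(\Da_t)\le \frac{\lamu^{1/2}}{\eta}\hw(X_t,\Da_t) + \errtol\Bigl(1+\frac1\eta\Bigr).
\]

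\textbf{Step 4: lower bound on $m(\Da_t)$.} By Line~\ref{alg:wcmdp:two-set:da}, $m(\Da_t)=\lfloor \alpha_{\min}(N-|\Db_t|)\rfloor/N \ge \alpha_{\min}(1-m(\Db_t)) - 1/N$. Combining with Step 3 yields
\[
1-m(\Db_t)\le \frac{\lamu^{1/2}}{\alpha_{\min}\eta}\hw(X_t,\Da_t) + \frac{1}{\alpha_{\min}}\Bigl(\errtol(1+\eta^{-1}) + \frac1N\Bigr).
\]
With $\errtol=O(1/N)$, this has the claimed form with $\KscOne = \lamu^{1/2}/(\alpha_{\min}\eta)$ and a constant $\KscTwo$, both independent of $N$.

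The only delicate point is Step 2, where the slack of the union must be bounded below using only the $\wmat$-norm of the $\Da_t$ part. This works because the slack is subadditive in the right direction and because of the norm equivalence in Step 1. The remaining steps are bookkeeping with the floor rounding and $\errtol$.
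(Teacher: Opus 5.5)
Your proof is correct and follows essentially the same route as the paper. It adjoins $\Da_t$ to $\Db_t$ and lower-bounds the slack of the union using the triangle inequality together with $\norm{v}_\umat\le\lamu^{1/2}\norm{v}_\wmat$. It then invokes condition (2) of $\errtol$-maximality and converts the resulting bound on $m(\Da_t)$ into a bound on $1-m(\Db_t)$ via the floor formula for $|\Da_t|$. The only difference is cosmetic: the paper runs the case split on $m(\Da_t)\le\errtol$ as a proof by contradiction, whereas you argue it directly, which yields the same constants up to harmless rearrangement.
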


To prove \Cref{lem:wcmdp:Vds-properties}, we apply the drift and tail bounds \eqref{eq:wcmdp:hw:drift}--\eqref{eq:wcmdp:hw:high-prob} and their $\hu$ analogues conditional on $\Fullstate_t$, with $D=\Da_t$ for $\hw$ and $D=\Db_t$ for $\hu$.

\begin{proof}[Proof of \Cref{lem:wcmdp:Vds-properties}]
    The domination and preliminary drift calculation in \Cref{app:wcmdp:lyapunov-calculations} gives the following bound, with $\rhoFinal\in(0,1)$ and $\Kdrift>0$ as defined there:
    \begin{align}
    & \Vds(\Fullstate_{t+1})-\rhoFinal\Vds(\Fullstate_t) \nonumber \\
    & \quad \leq \big(\hu(X_{t+1},\Db_t)-\rhou\hu(X_t,\Db_t)\big) \nonumber \\
    & \qquad + \big(\hw(X_{t+1},\Da_t)-\rhow\hw(X_t,\Da_t)\big) \label{eq:two-set:thm:v-diff-final-term-1-wcmdp} \\
    & \qquad + 4\big(\lamu^{1/2}+\lamw^{1/2}\big)m(\Db_t\backslash\Db_{t+1}) \nonumber \\
    & \qquad + \frac{\Kdrift}{N}. \label{eq:two-set:thm:v-diff-final-term-2-wcmdp}
\end{align}
    It remains to prove \eqref{eq:wcmdp:Vds-mean} and \eqref{eq:wcmdp:Vds-tail} using \eqref{eq:two-set:thm:v-diff-final-term-1-wcmdp}--\eqref{eq:two-set:thm:v-diff-final-term-2-wcmdp}. 

    \medskip
    \textbf{Proof of \eqref{eq:wcmdp:Vds-mean}.}
    To prove \eqref{eq:wcmdp:Vds-mean}, we take the conditional expectation of $\Bigl(\Vds(\Fullstate_{t+1}) - \rhoFinal\,\Vds(\Fullstate_t) \Bigr)^{\!+}$ and apply \eqref{eq:two-set:thm:v-diff-final-term-1-wcmdp}--\eqref{eq:two-set:thm:v-diff-final-term-2-wcmdp}, which yields
    \begin{align*}
    & \E{\Bigl(\Vds(\Fullstate_{t+1})-\rhoFinal\,\Vds(\Fullstate_t)\Bigr)^{\!+}\given\Fullstate_t=\fullstate} \\
    & \quad \leq \mathbb{E}\Bigl[\big(\hu(X_{t+1},\Db_t)-\rhou\hu(X_t,\Db_t)\big)^+ \\
    & \qquad + \big(\hw(X_{t+1},\Da_t)-\rhow\hw(X_t,\Da_t)\big)^+ \\
    & \qquad + 4\big(\lamu^{1/2}+\lamw^{1/2}\big)m(\Db_t\backslash\Db_{t+1}) \givenBig\Fullstate_t=\fullstate\Bigr] + \frac{\Kdrift}{N} \\
    & \quad \leq \frac{\KhuOne+\KhwOne}{\sqrt{N}} +\frac{4\big(\lamu^{1/2}+\lamw^{1/2}\big)\KnsOne}{\sqrt{2e\Cns N}} + \frac{\Kdrift}{N},
\end{align*}
    where the last step uses \Cref{lem:wcmdp:hw-hu-properties,lem:wcmdp:non-shrink} and
    \[
        \sup_{m\geq0}m\exp(-\Cns Nm^2)=\frac{1}{\sqrt{2e\Cns N}}.
    \]
    Since $1/N\leq1/\sqrt{N}$, we obtain \eqref{eq:wcmdp:Vds-mean} by choosing
    \[
    \begin{aligned}
        \KVds &= \KhuOne+\KhwOne \\
        &\quad + \frac{4\big(\lamu^{1/2}+\lamw^{1/2}\big)\KnsOne}{\sqrt{2e\Cns}}+\Kdrift.
    \end{aligned}
    \]

    \medskip
    \textbf{Proof of \eqref{eq:wcmdp:Vds-tail}.}
    Let $r_0\triangleq(1-\rhoFinal)\threshbar/2$ and $M\triangleq2\lamu^{1/2}+2\lamw^{1/2}+\Lds$. The bounds $\norm{x(D)-m(D)\statdist}_1\leq2m(D)\leq2$ imply $\Vds(\fullstate)\leq M$ for every $N$ and $\fullstate\in\fullssp$. Since $\bigl(\Vds(\Fullstate_{t+1}) - \rhoFinal\Vds(\Fullstate_t) - r_0\bigr)^{\!+} \leq M$ a.s., we have
    \begin{equation}\label{eq:wcmdp:Vds-tail:pf-prob-to-exp}
\begin{aligned}
& \E{\Bigl(\Vds(\Fullstate_{t+1})-\rhoFinal\Vds(\Fullstate_t)-r_0\Bigr)^{\!+}\given\Fullstate_t} \\
    & \quad \leq M\cdot\ProbBig{\Vds(\Fullstate_{t+1})>\rhoFinal\Vds(\Fullstate_t)+r_0\givenBig\Fullstate_t} \quad a.s.,
\end{aligned}
\end{equation}
    so it suffices to show the probability on the right decays as $\exp(-CN)$ for some $C > 0$ independent of $N$ and $\fullstate$.

    By \eqref{eq:two-set:thm:v-diff-final-term-1-wcmdp}--\eqref{eq:two-set:thm:v-diff-final-term-2-wcmdp} and the union bound,
    \begin{align}
    & \ProbBig{\Vds(\Fullstate_{t+1})>\rhoFinal\Vds(\Fullstate_t)+r_0\givenBig\Fullstate_t} \nonumber \\
    & \quad \leq \Probbig{\hu(X_{t+1},\Db_t)>\rhou\hu(X_t,\Db_t)+r_0/4\givenbig\Fullstate_t} \nonumber \\
    & \qquad + \Probbig{\hw(X_{t+1},\Da_t)>\rhow\hw(X_t,\Da_t)+r_0/4\givenbig\Fullstate_t} \nonumber \\
    & \qquad + \Probbig{4\big(\lamu^{1/2}+\lamw^{1/2}\big)m(\Db_t\backslash\Db_{t+1})>r_0/4\givenbig\Fullstate_t} \nonumber \\
    & \qquad + \indibrac{\Kdrift/N>r_0/4}. \label{eq:wcmdp:Vds-tail:pf-union-bound}
\end{align}
    The first two probabilities are bounded by \Cref{lem:wcmdp:hw-hu-properties} at the fixed threshold $r_0/4$. For the third probability, the event is impossible if $m(\Db_t)\leq r_0/[16(\lamu^{1/2}+\lamw^{1/2})]$. Otherwise, the event implies that at least one old arm is lost, so \Cref{lem:wcmdp:non-shrink} bounds its probability by
    \[
        \KnsTwo\exp\!\left(-\Cns N\left[\frac{r_0}{16(\lamu^{1/2}+\lamw^{1/2})}\right]^2\right).
    \]
    The last indicator vanishes for $N>4\Kdrift/r_0$. Combining these bounds gives constants $\tilde K,\tilde C>0$, independent of $N$ and $\fullstate$, such that for $N>4\Kdrift/r_0$,
    \begin{equation}\label{eq:wcmdp:Vds-tail:pf-prob-bound}
\begin{aligned}
& \ProbBig{\Vds(\Fullstate_{t+1})>\rhoFinal\Vds(\Fullstate_t)+r_0\givenBig\Fullstate_t} \\
    & \quad \leq \tilde K\exp\!\big(-\tilde C N\big) \quad a.s.
\end{aligned}
\end{equation}
    For $1\leq N\leq4\Kdrift/r_0$, the probability is at most $1$. Increasing $\tilde K$ to be at least $\exp(4\tilde C \Kdrift/r_0)$ therefore makes \eqref{eq:wcmdp:Vds-tail:pf-prob-bound} valid for every $N\geq1$. Substituting this bound into \eqref{eq:wcmdp:Vds-tail:pf-prob-to-exp} and setting $\KVdst=M\tilde K$ and $\CVds=\tilde C$ establishes \eqref{eq:wcmdp:Vds-tail}.
\end{proof}

\begin{proof}[Proof of \Cref{lem:wcmdp:hw-hu-properties}]
    The drift bound \eqref{eq:wcmdp:hw:drift} and tail bound \eqref{eq:wcmdp:hw:high-prob}, together with their $\hu$ analogues, are immediate for $D=\emptyset$. For the rest of the proof, we assume $D\ne\emptyset$; for $\hu$, we also assume the feasibility condition stated in the lemma.

    \textbf{Proving \eqref{eq:wcmdp:hw:drift} and \eqref{eq:wcmdp:hw:high-prob}.}
    The subroutine \frcontrol (\Cref{alg:wcmdp:uoc}) samples each arm\textquotesingle s action independently, so there is no rounding error to control.

    By the pseudo-contraction property of $P_\pibs$ under the $\wmat$-weighted norm given in \Cref{lem:wcmdp:one-step-contraction-W-U},
    \begin{align*}
    & \rhow\,\hw(X_t,D) \\
    & \quad = \rhow\norm{X_t(D)-m(D)\statdist}_\wmat \\
    & \quad \geq \norm{(X_t(D)-m(D)\statdist)\,P_\pibs}_\wmat \\
    & \quad = \norm{X_t(D)\,P_\pibs-m(D)\statdist}_\wmat,
\end{align*}
    so by the triangle inequality and the relation between the $\wmat$-weighted norm and the $L_1$ norm,
    \[
\begin{aligned}
& \hw(X_{t+1}',D)-\rhow\,\hw(X_t,D) \\
    & \quad \leq \norm{X_{t+1}'(D)-X_t(D)\,P_\pibs}_\wmat \\
    & \quad \leq \lamw^{1/2}\norm{X_{t+1}'(D)-X_t(D)\,P_\pibs}_1.
\end{aligned}
\]
    Therefore, it suffices to upper bound
\[\norm{X_{t+1}'(D) - X_t(D)\,P_\pibs}_1\] in expectation and in probability.

    Under \Cref{alg:wcmdp:uoc}, each arm $i\in D$ independently samples an action $A_t(i) \sim \pibs(\cdot \,|\, S_t(i))$ and then transitions according to $P$. Hence, given $X_t$, the indicators $\indibrac{S_{t+1}'(i) = s}$ for $i\in D$ are independent Bernoulli random variables with means $P_\pibs(S_t(i), s)$, where $S_{t+1}'(i)$ denotes the state of arm $i$ at time $t+1$ if the arms in $D$ follow \Cref{alg:wcmdp:uoc}. In particular, by \Cref{lem:wcmdp:fr-transition}, $\E{X_{t+1}'(D) \given X_t} = X_t(D)\,P_\pibs$. 

    For each $s\in\sspa$, \[X_{t+1}'(D, s) = (1/N)\sum_{i\in D} \indibrac{S_{t+1}'(i) = s}\] is a sum of independent random variables taking values in $[0, 1/N]$. By Cauchy--Schwarz,
    \[
\begin{aligned}
& \EBig{\absBig{X_{t+1}'(D,s)-\E{X_{t+1}'(D,s)\given X_t}}\givenBig X_t} \\
    & \quad \leq \Var{X_{t+1}'(D,s)\givenBig X_t}^{1/2} \\
    & \quad \leq \frac{|D|^{1/2}}{N}\leq\frac{1}{\sqrt{N}},
\end{aligned}
\]
    and by Hoeffding's inequality, for all $r\geq 0$,
    \[
\begin{aligned}
& \ProbBig{\absBig{X_{t+1}'(D,s)-\E{X_{t+1}'(D,s)\givenplain X_t}}>r\givenBig X_t} \\
    & \quad \leq 2\exp\big(-2N r^2\big).
\end{aligned}
\]
    Summing the expectation bound over $s\in\sspa$, and applying the union bound over $s\in\sspa$ (with $r$ replaced by $r/|\sspa|$) to the probability bound, we get
    \begin{align*}
    & \Ebig{\norm{X_{t+1}'(D)-X_t(D)\,P_\pibs}_1\givenbig X_t} \\
    & \quad \leq \frac{|\sspa|}{\sqrt{N}} \quad a.s., \\
    & \Probbig{\norm{X_{t+1}'(D)-X_t(D)\,P_\pibs}_1>r\givenbig X_t} \\
    & \quad \leq 2|\sspa|\exp\Big(-\frac{2N r^2}{|\sspa|^2}\Big) \quad a.s.,\;\forall r\geq0.
\end{align*}
    Combining the two displays with the bound $\hw(X_{t+1}', D) - \rhow\,\hw(X_t, D) \leq \lamw^{1/2}\norm{X_{t+1}'(D) - X_t(D)\,P_\pibs}_1$ established earlier, we obtain \eqref{eq:wcmdp:hw:drift} and \eqref{eq:wcmdp:hw:high-prob} with $\KhwOne = \lamw^{1/2}|\sspa|$, $\KhwTwo = 2|\sspa|$, and $\Chw = 2/(\lamw|\sspa|^2)$.

    \textbf{Proving the analogs of \eqref{eq:wcmdp:hw:drift} and \eqref{eq:wcmdp:hw:high-prob} for $\hu$.}
    For $\hu$, we must account for the rounding residual in \Cref{lem:wcmdp:olc-transition}. We write $K_{\mathrm{rnd}}\triangleq 2|\sspa|(|\aspa|-1)$ for the constant in the rounding bound of \Cref{lem:wcmdp:olc-rounding}.

    By the pseudo-contraction property of $\Phi$ under the $\umat$-weighted norm given in \Cref{lem:wcmdp:one-step-contraction-W-U},
    \begin{align*}
    \rhou\hu(X_t,D) & = \rhou\norm{X_t(D)-m(D)\statdist}_\umat \\
    & \quad \geq \norm{(X_t(D)-m(D)\statdist)\,\Phi}_\umat.
\end{align*}
    Consequently,
    \begin{align*}
    & \hu(X_{t+1}',D)-\rhou\hu(X_t,D) \\
    & \quad \leq \norm{X_{t+1}'(D)-m(D)\statdist}_\umat \\
    & \qquad - \norm{(X_t(D)-m(D)\statdist)\,\Phi}_\umat \\
    & \quad \leq \norm{X_{t+1}'(D)-m(D)\statdist-(X_t(D)-m(D)\statdist)\,\Phi}_\umat \\
    & \quad \leq \lamu^{1/2}\Bigl\lVert X_{t+1}'(D)-m(D)\statdist \\
    & \qquad - (X_t(D)-m(D)\statdist)\,\Phi\Bigr\rVert_1.
\end{align*}
    Therefore, it suffices to show that, with $C'=1/(2|\sspa|^2)$, we have
    \begin{align}
    & \mathbb{E}\Bigl[\norm{X_{t+1}'(D)-m(D)\statdist-(X_t(D)-m(D)\statdist)\,\Phi}_1 \nonumber \\
    & \qquad \givenplain X_t\Bigr] \nonumber \\
    & \quad = O(1/\sqrt{N}) \quad a.s., \label{eq:wcmdp:hu:interm-goal-1} \\
    & \mathbb{P}\Bigl[\Bigl\lVert X_{t+1}'(D)-m(D)\statdist \nonumber \\
    & \qquad - (X_t(D)-m(D)\statdist)\,\Phi\Bigr\rVert_1>r\givenplain X_t\Bigr] \nonumber \\
    & \quad = O(\exp(-C'N r^2)) \quad a.s.,\;\forall r\geq0. \label{eq:wcmdp:hu:interm-goal-2}
\end{align}

    To control the rounding residual, we define the LP target for the arms in $D$ by
    \[
        y_t^D\triangleq y^*+\bigl(X_t(D)/m(D)-\statdist\bigr)\Cinvs.
    \]
    Applying \Cref{lem:wcmdp:olc-transition} to the arms in $D$ and multiplying by $m(D)$ gives
    \begin{equation}\label{eq:wcmdp:hu:cond-mean}
\begin{aligned}
& \E{X_{t+1}'(D)-m(D)\statdist\given X_t,(A_t(i))_{i\in D}} \\
    & \quad = (X_t(D)-m(D)\statdist)\,\Phi \\
    & \qquad + (Y_t(D)-m(D)\,y_t^D)\,\mathcal{P}.
\end{aligned}
\end{equation}
    We decompose $X_{t+1}'(D) - m(D)\statdist - (X_t(D) - m(D)\statdist)\,\Phi$ by adding and subtracting the conditional mean $\E{X_{t+1}'(D) \givenplain X_t, (A_t(i))_{i\in D}}$:
    \begin{align}
    & X_{t+1}'(D)-m(D)\statdist-(X_t(D)-m(D)\statdist)\,\Phi \nonumber \\
    & \quad = \Bigl(X_{t+1}'(D)-\E{X_{t+1}'(D)\givenplain X_t,(A_t(i))_{i\in D}}\Bigr) \nonumber \\
    & \qquad + \Bigl(\E{X_{t+1}'(D)\givenplain X_t,(A_t(i))_{i\in D}} \nonumber \\
    & \qquad - m(D)\statdist-(X_t(D)-m(D)\statdist)\,\Phi\Bigr) \nonumber \\
    & \quad = \bigl(X_{t+1}'(D)-\E{X_{t+1}'(D)\givenplain X_t,(A_t(i))_{i\in D}}\bigr) \nonumber \\
    & \qquad + (Y_t(D)-m(D)\,y_t^D)\,\mathcal{P}. \nonumber
\end{align}
    The rounding bound for $|D|$ arms, scaled by $m(D)=|D|/N$, is $K_{\mathrm{rnd}}/N$. Since $\mathcal{P}$ is row-stochastic,
    \[
    \begin{aligned}
    & \norm{(Y_t(D)-m(D)y_t^D)\mathcal{P}}_1 \\
    & \quad \leq \norm{Y_t(D)-m(D)y_t^D}_1
        \leq \frac{K_{\mathrm{rnd}}}{N}.
    \end{aligned}
    \]
    The triangle inequality therefore gives
    \begin{equation}\label{eq:wcmdp:hu:decomp}
\begin{aligned}
& \norm{X_{t+1}'(D)-m(D)\statdist-(X_t(D)-m(D)\statdist)\,\Phi}_1 \\
    & \quad \leq \norm{X_{t+1}'(D)-\E{X_{t+1}'(D)\givenplain X_t,(A_t(i))_{i\in D}}}_1 \\
    & \qquad + \frac{K_{\mathrm{rnd}}}{N}.
\end{aligned}
\end{equation}
    Given $X_t$ and $(A_t(i))_{i\in D}$, the next-state indicators are independent across arms. For each state, the centered coordinate on the right-hand side of \eqref{eq:wcmdp:hu:decomp} is a sum of $|D|$ centered Bernoulli variables divided by $N$. The same variance estimate used for $\hw$ bounds the expected $L_1$ norm by $|\sspa|\sqrt{|D|}/N\leq |\sspa|/\sqrt{N}$. Averaging over the actions and adding the rounding bound gives an upper bound of $(|\sspa|+K_{\mathrm{rnd}})/\sqrt{N}$ in \eqref{eq:wcmdp:hu:interm-goal-1}.

    To prove the tail bound, first suppose $r\geq 2K_{\mathrm{rnd}}/N$. By \eqref{eq:wcmdp:hu:decomp}, a deviation larger than $r$ requires the centered $L_1$ norm to exceed $r-K_{\mathrm{rnd}}/N\geq r/2$. Hoeffding's inequality and a union bound over states give
    \begin{align*}
    & \mathbb{P}\Bigl[\Bigl\lVert X_{t+1}'(D)-m(D)\statdist - (X_t(D)-m(D)\statdist)\,\Phi\Bigr\rVert_1>r\\
    & \qquad\qquad   \givenBig X_t,(A_t(i))_{i\in D}\Bigr] \\
    & \quad \leq \mathbb{P}\Bigl[\Bigl\lVert X_{t+1}'(D) - \E{X_{t+1}'(D)\givenplain X_t,(A_t(i))_{i\in D}}\Bigr\rVert_1 > r/2 \\
    & \qquad\qquad \givenBig X_t,(A_t(i))_{i\in D}\Bigr] \\
    & \quad \leq 2|\sspa|\exp\!\left(-\frac{N^2r^2}{2|D||\sspa|^2}\right) \\
    & \quad \leq 2|\sspa|\exp(-C'Nr^2).
\end{align*}
    For $0\leq r<2K_{\mathrm{rnd}}/N$, we instead use the probability bound $1$. Since $Nr^2\leq 4K_{\mathrm{rnd}}^2$ for $N\geq1$, this bound is at most $\exp(4C'K_{\mathrm{rnd}}^2)\exp(-C'Nr^2)$. Thus, setting
    \[
        \KhuTwo=\max\bigl\{2|\sspa|,\exp(4C'K_{\mathrm{rnd}}^2)\bigr\}
    \]
    and averaging over the actions gives, for every $r\geq0$,
    \begin{equation}\label{eq:pf-hu-drift-tail-1}
\begin{aligned}
& \mathbb{P}\Bigl[\Bigl\lVert X_{t+1}'(D)-m(D)\statdist \\
    & \qquad - (X_t(D)-m(D)\statdist)\,\Phi\Bigr\rVert_1>r\givenBig X_t\Bigr] \\
    & \quad \leq \KhuTwo\exp(-C'Nr^2) \quad a.s.
\end{aligned}
\end{equation}
    This proves \eqref{eq:wcmdp:hu:interm-goal-2}. Using the factor $\lamu^{1/2}$ in the earlier bound for $\hu(X_{t+1}',D)-\rhou\hu(X_t,D)$ establishes the claimed drift and tail bounds with
    \[
        \KhuOne=\lamu^{1/2}(|\sspa|+K_{\mathrm{rnd}}),
        \qquad \Chu=\frac{C'}{\lamu}.
    \]
    All constants are independent of $N$, $D$, and the current state.

    The inequalities \eqref{eq:wcmdp:hw:strength}--\eqref{eq:wcmdp:hw:lipschitz} and their $\hu$ analogs follow from the definitions of the weighted norms; the corresponding norm estimates are also discussed in \cite{HonXieCheWan_24}. \halmos
\end{proof}

\begin{proof}[Proof of \Cref{lem:wcmdp:non-shrink}]
    We condition throughout on $\Fullstate_t$, which fixes $X_t$ and $\Db_t$. If $\Db_t=\emptyset$, both quantities in the lemma are zero. We therefore assume $\Db_t\ne\emptyset$.
    By the definition of $\Db_{t+1}$ in \Cref{alg:wcmdp:two-set}, if $\slkb(X_{t+1},\Db_t)\geq0$, then $\Db_{t+1}$ contains $\Db_t$. Thus $\Db_t\backslash\Db_{t+1}\ne\emptyset$ can occur only when $\slkb(X_{t+1},\Db_t)<0$. Since $m(\Db_t\backslash\Db_{t+1})\leq m(\Db_t)$, we have
    \begin{align}
    & \Ebig{m(\Db_t\backslash\Db_{t+1})\givenbig\Fullstate_t} \nonumber \\
    & \quad \leq m(\Db_t)\Probbig{\slkb(X_{t+1},\Db_t)<0\givenbig\Fullstate_t} \label{eq:non-shrink:mean-interm-bound} \\
    & \Probbig{m(\Db_t\backslash\Db_{t+1})>0\givenbig\Fullstate_t} \nonumber \\
    & \quad \leq \Probbig{\slkb(X_{t+1},\Db_t)<0\givenbig\Fullstate_t}. \label{eq:non-shrink:tail-interm-bound}
\end{align}
    To bound the probability that the old set becomes infeasible, we recall from \eqref{eq:wcmdp:slack-def} that $\slkb(x,D)=\eta m(D)-\hu(x,D)$. The policy chooses $\Db_t$ to be feasible at time $t$, so $\hu(X_t,\Db_t)\leq\eta m(\Db_t)$. Thus
    \begin{align}
    & \Prob{\slkb(X_{t+1},\Db_t)<0\givenbig\Fullstate_t} \nonumber \\
    & \quad = \Prob{\hu(X_{t+1},\Db_t)>\eta m(\Db_t)\givenbig\Fullstate_t} \nonumber \\
    & \quad \leq \mathbb{P}\Bigl[\hu(X_{t+1},\Db_t)-\rhou\hu(X_t,\Db_t) \nonumber \\
    & \qquad\qquad > (1-\rhou)\eta m(\Db_t)\givenBig\Fullstate_t\Bigr] \nonumber \\
    & \quad \leq \KhuTwo\exp\!\big(-\Chu N(1-\rhou)^2\eta^2m(\Db_t)^2\big). \label{eq:pf-almost-nonshrink:interm-1}
\end{align}
    The last inequality applies the $\hu$ tail bound of \Cref{lem:wcmdp:hw-hu-properties} at $r=(1-\rhou)\eta m(\Db_t)$. Conditional on $\Fullstate_t$, the arms in the fixed set $\Db_t$ follow \lpfp with fresh randomness, so the lemma's conditional calculation applies. Combining \eqref{eq:pf-almost-nonshrink:interm-1} with \eqref{eq:non-shrink:mean-interm-bound}--\eqref{eq:non-shrink:tail-interm-bound} proves both claims with
    \[
        \KnsOne=\KnsTwo=\KhuTwo,
        \qquad \Cns=\Chu(1-\rhou)^2\eta^2.
    \]
    These constants are positive and independent of $N$ and the current state.
\end{proof}

Finally, we restate and prove \Cref{lem:wcmdp:sufficient-coverage}.

\wcmdpsufficientcoverage*

\begin{proof}[Proof of \Cref{lem:wcmdp:sufficient-coverage}]
    The second claim of \Cref{lem:wcmdp:sufficient-coverage} follows directly from the definition of the policy, so we only need to prove its first claim.
    We claim that either $m(\Da_t) \leq \errtol$, or
    \begin{equation}
        \label{eq:two-set:sufficient-coverage:interm-goal-1}
        \lamu^{1/2} \hw(X_t, \Da_t) >  \eta m(\Da_t) - \errtol - 0.
    \end{equation}
    We first show \Cref{lem:wcmdp:sufficient-coverage} assuming this claim.
    We recall that $m(\Da_{t})= \big\lfloor \beta (N -|\Db_{t}|)\big\rfloor \big/ N$. If $m(\Da_t) \leq \errtol$,
    \[
        \beta(1-m(\Db_t)) - \frac{1}{N} \leq m(\Da_t) \leq \errtol,
    \]
    so $1-m(\Db_t) \leq 1/(\beta N) + \errtol/\beta$, which implies \Cref{lem:wcmdp:sufficient-coverage} by the non-negativity of $\hw(X_t, \Da_t)$.
    If $m(\Da_t) > \errtol$, \eqref{eq:two-set:sufficient-coverage:interm-goal-1} holds. Because $m(\Da_t) = \floor{\beta (N - |\Db_t|)} /N \geq \big(\beta (N - |\Db_t|) - 1\big) / N$, we have
    \begin{align*}
    & \lamu^{1/2}\hw(X_t,\Da_t) \\
    & \quad > \eta\big(\beta(N-|\Db_t|)-1\big)\frac{1}{N}-\errtol-0 \\
    & \quad = \eta\beta(1-m(\Db_t))-\frac{\eta}{N}-\errtol-0.
\end{align*}
    Rearranging gives
    \[
        1 - m(\Db_t) < \frac{\lamu^{1/2}}{\eta\beta} \hw(X_t, \Da_t) + \frac{1}{\beta N} + \frac{\errtol+0}{\eta\beta}.
    \]
    Combining the two cases, we have
    \[
\begin{aligned}
1-m(\Db_t) &\leq \frac{\lamu^{1/2}}{\eta\beta}\hw(X_t,\Da_t) + \frac{1}{\beta N} \\
    &\quad + \frac{\errtol+0}{\min(\eta,1)\beta}.
\end{aligned}
\]
    Since $\errtol, 0 = O(1/N)$, the additive term is $O(1/N)$, and we can write the bound as $\KscOne \hw(X_t, \Da_t) + \KscTwo/N$ for positive constants $\KscOne, \KscTwo$ independent of $N$, establishing \Cref{lem:wcmdp:sufficient-coverage}.

    Now we prove the claim by contradiction. We suppose that, at a certain time $t$, we have $m(\Da_t) > \errtol$ and $ \lamu^{1/2}\hw(X_t, \Da_t) \leq \eta m(\Da_t) - \errtol - 0$. Because $\norm{v}_\umat \leq \lamu^{1/2}\norm{v}_2 \leq \lamu^{1/2}\norm{v}_\wmat$ for any $v\in \R^{|\sspa|}$,
    \begin{align}
    & \normbig{X_t(\Da_t)-m(\Da_t)\statdist}_\umat \nonumber \\
    & \quad \leq \lamu^{1/2}\normbig{X_t(\Da_t)-m(\Da_t)\statdist}_\wmat \nonumber \\
    & \quad \leq \eta m(\Da_t)-\errtol-0. \nonumber
\end{align}
    Combined with the triangular inequality and the definition of $\Db_t$, we have
    \begin{align*}
    & \normbig{X_t(\Db_t\cup\Da_t)-m(\Db_t\cup\Da_t)\statdist}_\umat \\
    & \quad \leq \normbig{X_t(\Db_t)-m(\Db_t)\statdist}_\umat \\
    & \qquad + \normbig{X_t(\Da_t)-m(\Da_t)\statdist}_\umat \\
    & \quad \leq \eta m(\Db_t)+\eta m(\Da_t)-\errtol-0.
\end{align*}
    Consequently, $\Db_t\cup\Da_t$ is a superset of $\Db_t$ such that $\slkb(X_t, \Db_t\cup\Da_t) \geq \errtol$ and $m(\Db_t \cup \Da_t) = m(\Db_t) + m(\Da_t) > m(\Db_t) + \errtol$, contradicting the $\errtol$-maximality of $\Db_t$.
    We have thus proved the claim that implies \Cref{lem:wcmdp:sufficient-coverage}.
\end{proof}

\subsection{Proof of \hcref{lem:wcmdp:inst-reward}}\label{sec:wcmdp:pf-inst-reward}

\wcmdpinstrew*

\begin{proof}[Proof of \Cref{lem:wcmdp:inst-reward}]
    When $\norm{x([N]) - \statdist}_\umat \leq \threshbar$, \lpfp is feasible for all arms. Since the two-set policy chooses $\Db_t$ to be a maximal set that can follow \lpfp, we have $\Db_t = [N]$ and $\Da_t = \emptyset$. 
    Letting $y_t = y^* + (x([N]) - \statdist)\Cinvs$, we perform the following decomposition of the instantaneous expected reward $r^\pi(\fullstate)$:
    \begin{align*}
    r^\pi(\fullstate) 
    &= \E{\sumsa r(s,a)\,Y_t(s,a)\given\Fullstate_t=\fullstate} \\
    &  = \sumsa r(s,a)\,Y_t(s,a) \\
    &  = y^*\rvec^\top + (y_t-y^*)\rvec^\top + (Y_t-y_t)\rvec^\top \\
    &  = y^*\rvec^\top + (x([N])-\statdist)\,\Cinvs\rvec^\top + \errrew(\fullstate),
\end{align*}
    Since $y^*  \rvec^\top = \rrel$ and $(x([N]) - \statdist)\,\Cinvs\,\rvec^\top = (x([N]) - \statdist)\,\gvec$ (the shift term in the definition of $\gvec$ does not contribute because $(x([N]) - \statdist)\,\vone^\top = 0$), this gives $r^\pi(\fullstate) = \rhat(x([N])) + \errrew(\fullstate)$.
    To bound $\errrew(\fullstate)=(Y_t - y_t) \rvec^\top$, we note that by \Cref{lem:wcmdp:olc-rounding}, we have $\norm{Y_t - y_t}_1 \leq  2|\sspa|(|\aspa|-1)/N$. Therefore,
    \begin{equation*}
        |\errrew(\fullstate)| \leq \rmax \norm{Y_t-y_t}_1
        \leq \frac{2\rmax |\sspa|(|\aspa|-1)}{N}.
    \end{equation*}

    Finally, we show $\norm{\gvec}_\infty \leq \Kg$. By construction, $\statdist\,\gvec = \statdist\,\Cinvs\,\rvec^\top - (\statdist\,\Cinvs\,\rvec^\top)(\statdist\,\vone^\top) = 0$. Hence, for each $s\in\sspa$,
    \begin{equation}\label{eq:wcmdp:gvec-entry}
        \gvec(s) = (e_s - \statdist)\,\gvec,
    \end{equation}
    where $e_s$ denotes the point mass at state $s$. If $e_s=\statdist$, then \eqref{eq:wcmdp:gvec-entry} gives $\gvec(s)=0$, so the desired bound holds. We therefore assume $e_s\ne\statdist$ below.
    To bound the right-hand side of \eqref{eq:wcmdp:gvec-entry}, we let $\theta \triangleq \min\bigl(1, \eta/\norm{e_s - \statdist}_\umat\bigr)$ and $v \triangleq (1-\theta)\,\statdist + \theta\,e_s$, so that $v \in \simplex(\sspa)$ and $\norm{v - \statdist}_\umat \leq \eta$. By \Cref{lem:wcmdp:linear-feasibility}, $y(v) \triangleq y^* + (v - \statdist)\,\Cinvs$ is entrywise non-negative; since its state-marginals equal $v$ by \eqref{eq:wcmdp:active-system}, $y(v)$ is a probability distribution over $\sspa\times\aspa$. Therefore, $\rhat(v) = y(v)\,\rvec^\top \in [-\rmax, \rmax]$. Combining this with $\rhat(v) - \rrel = \theta\,(e_s - \statdist)\,\gvec$ and $|\rrel| \leq \rmax$, we get
    \[
\begin{aligned}
\abs{\gvec(s)} &= \frac{\abs{\rhat(v)-\rrel}}{\theta} \leq \frac{2\rmax}{\theta} \\
    &\leq 2\rmax\max\Bigl(1,\frac{\norm{e_s-\statdist}_\umat}{\eta}\Bigr) \\
    &\leq 2\rmax\Bigl(1+\frac{\sqrt{2}\,\lamu^{1/2}}{\eta}\Bigr) = \Kg,
\end{aligned}
\]
    where in the last inequality, we use the facts that $\norm{e_s - \statdist}_\umat \leq \lamu^{1/2}\,\norm{e_s - \statdist}_2$ and $\norm{e_s - \statdist}_2^2 \leq \norm{e_s - \statdist}_1 \norm{e_s - \statdist}_\infty \leq 2$. \halmos
\end{proof}

To prove the global reward bound \eqref{eq:wcmdp:warmup-reward-bound}, we first consider $\norm{x([N])-\statdist}_\umat\leq\eta$. Since $\rrel-\rhat(v)=(\statdist-v)\gvec$, \Cref{lem:wcmdp:inst-reward} gives
\[
\rrel - r^\pi(\fullstate)
    \leq (\statdist - x([N]))\gvec + \frac{2\rmax|\sspa|(|\aspa|-1)}{N}.
\]
The first term is at most $\Kg\norm{\statdist-x([N])}_1\leq\sqrt{|\sspa|}\Kg\norm{\statdist-x([N])}_\umat$, because $\umat\succeq I$. Outside this neighborhood, $\rrel-r^\pi(\fullstate)\leq 2\rmax\leq(2\rmax/\eta)\norm{\statdist-x([N])}_\umat$. Choosing $K_0\geq\max\{\sqrt{|\sspa|}\Kg,2\rmax/\eta\}$, we therefore get \eqref{eq:wcmdp:warmup-reward-bound} in both cases.

\subsection{Proof of \hcref{lem:wcmdp:drift-Vds-truncate-term}}\label{sec:wcmdp:pf-drift-truncate}

\wcmdpdrifttruncate*

\begin{proof}
We use the abbreviation that $a \triangleq \Vds(\fullstate)$, $b \triangleq \Vds(\fullstate')$, and $r_0 \triangleq (1-\rhoFinal)\threshbar/2$.
The key algebraic identity
\begin{equation}\label{eq:trunc-key-identity}
    b - \frac{\threshbar}{2} = \Bigl(b - \rhoFinal a - r_0\Bigr) + \rhoFinal\Bigl(a - \frac{\threshbar}{2}\Bigr),
\end{equation}
together with subadditivity $(x+y)^+ \leq x^+ + y^+$, yields
\begin{equation}\label{eq:trunc-unified}
\begin{aligned}
& \Bigl(b-\frac{\threshbar}{2}\Bigr)^{\!+}-\Bigl(a-\frac{\threshbar}{2}\Bigr)^{\!+} \\
    & \quad \leq \Bigl(b-\rhoFinal a-r_0\Bigr)^{\!+} \\
    & \qquad - (1-\rhoFinal)\Bigl(a-\frac{\threshbar}{2}\Bigr)^{\!+}.
\end{aligned}
\end{equation}
We derive the claim from \eqref{eq:trunc-unified} by cases on $a$.

\medskip
\noindent\textbf{Case 1: $a \leq \threshbar$.}
Since $(1-\rhoFinal)(a-\threshbar/2)^+ \geq 0$, \eqref{eq:trunc-unified} gives
\[
    \Bigl(b-\frac{\threshbar}{2}\Bigr)^{\!+} - \Bigl(a-\frac{\threshbar}{2}\Bigr)^{\!+}
    \leq  \Bigl(b - \rhoFinal a - r_0\Bigr)^{\!+},
\]
which matches the RHS of \Cref{lem:wcmdp:drift-Vds-truncate-term} because $\indibrac{a>\threshbar}=0$.

\medskip
\noindent\textbf{Case 2: $a > \threshbar$.}
Then $(a-\threshbar/2)^+ = a - \threshbar/2 > \threshbar/2$, so $(1-\rhoFinal)(a-\threshbar/2)^+ > (1-\rhoFinal)\threshbar/2 = r_0$.
Hence \eqref{eq:trunc-unified} gives
\[
    \Bigl(b-\frac{\threshbar}{2}\Bigr)^{\!+} - \Bigl(a-\frac{\threshbar}{2}\Bigr)^{\!+}
    \leq  \Bigl(b - \rhoFinal a - r_0\Bigr)^{\!+} - r_0,
\]
which matches the RHS of \Cref{lem:wcmdp:drift-Vds-truncate-term} because $\indibrac{a>\threshbar}=1$.
\end{proof}

\subsection{Proof of \hcref{lem:wcmdp:drift-local-approx-term}}\label{sec:wcmdp:pf-drift-local}

\wcmdpdriftlocal*

\begin{proof}[Proof of \Cref{lem:wcmdp:drift-local-approx-term}]
    Using $(I-\Phi)Q=I$, $\Delta\locapprox$ admits the decomposition
    \begin{equation}\label{eq:wcmdp:drift-locapprox:step-1}
\begin{aligned}
& \Delta\locapprox(\fullstate) \\
    & \quad = \Bigl(\Ebig{(\statdist-X_{t+1}([N]))\,Q\gvec\givenbig\Fullstate_t=\fullstate} \\
    & \qquad - (\statdist-x([N]))\,\Phi Q\gvec\Bigr) \\
    & \qquad - \bigl(\rrel-\rhat(x([N]))\bigr).
\end{aligned}
\end{equation}

    The first term on the right-hand side of \eqref{eq:wcmdp:drift-locapprox:step-1} can be bounded as
    \begin{equation}\label{eq:wcmdp:drift-locapprox-claim}
\begin{aligned}
& \Ebig{(\statdist-X_{t+1}([N]))\,Q\gvec\givenbig\Fullstate_t=\fullstate} \\
    & \quad - (\statdist-x([N]))\,\Phi Q\gvec \\
    & \quad \leq 2\lamq\Kg\,\indibrac{\Db_t\neq[N]} \;+\; \frac{\tilde K}{N}.
\end{aligned}
\end{equation}
    where $\tilde{K} = 2|\sspa|(|\aspa|-1)\,\norm{Q}_\infty\,\Kg$. Here the term $2\lamq\Kg$ comes from bounding the left-hand side by $2\lamq\norm{\gvec}_\infty$ on the event $\Db_t \neq [N]$, and the term $\tilde{K}/N$ comes from bounding the rounding error term in \Cref{lem:wcmdp:olc-transition} by $\norm{Y_t - y_t}_1\,\norm{Q}_\infty\norm{\gvec}_\infty$ on the event $\Db_t = [N]$; both bounds use the property $\norm{\gvec}_\infty \leq \Kg$ from \Cref{lem:wcmdp:inst-reward}.

    For the second term on the right-hand side of \eqref{eq:wcmdp:drift-locapprox:step-1}, we have
    \begin{equation}\label{eq:wcmdp:drift-locapprox:rhat-conversion}
\begin{aligned}
& -(\rrel-\rhat(x([N]))) \\
    & \quad \leq -(\rrel-r^\pi(\fullstate)) \\
    & \qquad + (2\rmax+2\Kg)\,\indibrac{\norm{x([N])-\statdist}_\umat>\threshbar} \\
    & \qquad + \frac{2\rmax|\sspa|(|\aspa|-1)}{N}
\end{aligned}
\end{equation}
    where the term $2\rmax|\sspa|(|\aspa|-1) / N$ arises from the application of \Cref{lem:wcmdp:inst-reward}, which states that $|\rhat(x([N])) - r^\pi(\fullstate)| \leq 2\rmax|\sspa|(|\aspa|-1)/N$ when $\Db_t = [N]$; the factor $(2\rmax + 2\Kg)$ on the event $\norm{x([N]) - \statdist}_\umat > \threshbar$ comes from $|\rhat(v)| \leq |\rrel| + \norm{v - \statdist}_1 \norm{\gvec}_\infty \leq \rmax + 2\Kg$ for all $v\in\simplex(\sspa)$, together with $|r^\pi(\fullstate)| \leq \rmax$. 

    Substituting \eqref{eq:wcmdp:drift-locapprox-claim} and \eqref{eq:wcmdp:drift-locapprox:rhat-conversion} into \eqref{eq:wcmdp:drift-locapprox:step-1} and using $\indibrac{\Db_t \neq [N]} = \indibrac{\norm{x([N]) - \statdist}_\umat > \threshbar}$, we obtain
    \[
\begin{aligned}
& \Delta\locapprox(\fullstate) \\
    & \quad \leq -(\rrel-r^\pi(\fullstate)) + \frac{\Kla}{N} \\
    & \qquad + (2\lamq\Kg+2\rmax+2\Kg)\,\indibrac{\norm{x([N])-\statdist}_\umat>\threshbar},
\end{aligned}
\]
    with $\Kla \triangleq \tilde K + 2\rmax|\sspa|(|\aspa|-1) = 2|\sspa|(|\aspa|-1)\,(\norm{Q}_\infty\Kg + \rmax)$.
\end{proof}

\subsection{Supporting Lyapunov calculations}\label{app:wcmdp:lyapunov-calculations}

We recall that $\Vds(\fullstate)$ is defined as
\begin{equation}\label{eq:two-set:full-lyapunov}
\begin{aligned}
\Vds(\fullstate) &= \hu(x,\Db) + \hw(x,\Da) \\
    &\quad + \Lds(1-m(\Db)).
\end{aligned}
\end{equation}

\medskip
\textbf{Proof of \eqref{eq:wcmdp:Vds-dom}.}
By \Cref{lem:wcmdp:hw-hu-properties}, $\hu(x, D)$ is Lipschitz continuous in $D$, so $\hu(x, \Db)$ changes by at most $2\lamu^{1/2}(1-m(\Db))$ when we replace $\Db$ with $[N]$. Consequently, 
\[
\begin{aligned}
\Vds(\fullstate) &\geq \hu(x,\Db) + 2\lamu^{1/2}(1-m(\Db)) \\
    &\geq \hu(x,[N]) \\
    &= \norm{x([N])-\statdist}_\umat.
\end{aligned}
\]

\medskip
\textbf{A preliminary bound for proving \eqref{eq:wcmdp:Vds-mean} and \eqref{eq:wcmdp:Vds-tail}.}
We prove the following bound for constants $\rhoFinal\in(0,1)$ and $\Kdrift>0$, both independent of $N$ and specified below:
\begin{align}
    & \Vds(\Fullstate_{t+1}) - \rhoFinal\Vds(\Fullstate_t) \nonumber \\
    & \quad \leq \big(\hu(X_{t+1},\Db_t) - \rhou\hu(X_t,\Db_t)\big) \nonumber \\
    & \qquad + \big(\hw(X_{t+1},\Da_t) - \rhow\hw(X_t,\Da_t)\big) \label{eq:wcmdp:calc:v-diff-final-1} \\
    & \qquad + 4\big(\lamu^{1/2}+\lamw^{1/2}\big)m(\Db_t\backslash\Db_{t+1}) \nonumber \\
    & \qquad + \frac{\Kdrift}{N}. \label{eq:wcmdp:calc:v-diff-final-2}
\end{align}
To show this, we start with the decomposition:
\begin{align}
    & \Vds(\Fullstate_{t+1})-\Vds(\Fullstate_t) \nonumber \\
    & \quad = \big(\Vds(X_{t+1},\Db_t,\Da_t)-\Vds(X_t,\Db_t,\Da_t)\big) \label{eq:two-set:thm:state-change-diff-term} \\
    & \qquad + \big(\Vds(X_{t+1},\Db_{t+1},\Da_{t+1})-\Vds(X_{t+1},\Db_t,\Da_t)\big). \label{eq:two-set:thm:set-change-diff-term}
\end{align}
We refer to the difference term in right-hand side of \eqref{eq:two-set:thm:set-change-diff-term} the \emph{set-update term}, and the difference term in \eqref{eq:two-set:thm:state-change-diff-term} the \emph{state-transition term}. 
We calculate these terms separately. 

For the state-transition term in \eqref{eq:two-set:thm:state-change-diff-term}, it follows directly from the definition of $\Vds$ that 
\begin{align}
    & \Vds(X_{t+1},\Db_t,\Da_t) - \Vds(X_t,\Db_t,\Da_t) \nonumber \\
    & \quad \leq \big(\hu(X_{t+1},\Db_t) - \hu(X_t,\Db_t)\big) \nonumber \\
    & \qquad + \big(\hw(X_{t+1},\Da_t) - \hw(X_t,\Da_t)\big) \label{eq:two-set:thm:state-transition-term-1}
\end{align}

For the set-update term in \eqref{eq:two-set:thm:set-change-diff-term}, we can derive a bound fully in terms of the sizes of sets, using the Lipschitz continuity of $\hw(x, D)$ and $\hu(x, D)$ with respect to $D$ given in \eqref{eq:wcmdp:hw:lipschitz} of \Cref{lem:wcmdp:hw-hu-properties}:
\begin{align}
    & \Vds(X_{t+1},\Db_{t+1},\Da_{t+1}) - \Vds(X_{t+1},\Db_t,\Da_t) \nonumber \\
    & \quad = \hu(X_{t+1},\Db_{t+1}) - \hu(X_{t+1},\Db_t) \nonumber \\
    & \qquad + \hw(X_{t+1},\Da_{t+1}) - \hw(X_{t+1},\Da_t) \nonumber \\
    & \qquad - \Lds\big(m(\Db_{t+1})-m(\Db_t)\big) \nonumber \\
    & \quad \leq 2\lamu^{1/2}\big(m(\Db_{t+1}\backslash\Db_t) + m(\Db_t\backslash\Db_{t+1})\big) \nonumber \\
    & \qquad + 2\lamw^{1/2}\big(m(\Da_{t+1}\backslash\Da_t) + m(\Da_t\backslash\Da_{t+1})\big) \label{eq:two-set:thm:set-update-term-1} \\
    & \qquad - \Lds\big(m(\Db_{t+1}\backslash\Db_t)-m(\Db_t\backslash\Db_{t+1})\big).
\end{align}
Next, we show that 
\begin{equation}\label{eq:two-set:thm:da-diff-bound}
\begin{aligned}
& m(\Da_{t+1}\backslash\Da_t) + m(\Da_t\backslash\Da_{t+1}) \\
    & \quad \leq 2m(\Db_{t+1}\backslash\Db_t) \\
    & \qquad - \beta\big(m(\Db_{t+1})-m(\Db_t)\big) + \frac{1}{N}.
\end{aligned}
\end{equation}
We discuss based on whether $\Da_{t+1} \supseteq \Da_{t} \backslash \Db_{t+1}$ or $\Da_{t+1} \subseteq \Da_{t} \backslash \Db_{t+1}$. 
\begin{itemize}
    \item If $\Da_{t+1} \supseteq \Da_{t} \backslash \Db_{t+1}$, because $\Da_t$ is disjoint from $\Db_t$, it is not hard to see that
    \begin{equation}\label{eq:two-set:thm:set-update-term-2}
\begin{aligned}
\Da_t\backslash\Da_{t+1} &\subseteq \Db_{t+1}\backslash\Db_t, \\
    m(\Da_t\backslash\Da_{t+1}) &\leq m(\Db_{t+1}\backslash\Db_t).
\end{aligned}
\end{equation}
    Moreover, by the definition of $\Da_t$, $m(\Da_t) \geq \beta (1-m(\Db_{t})) - 1/N$ and  $m(\Da_{t+1}) \leq \beta (1-m(\Db_{t+1}))$, so 
    \begin{align}
    & m(\Da_{t+1}\backslash\Da_t) \nonumber \\
    & \quad = m(\Da_t\backslash\Da_{t+1}) + m(\Da_{t+1}) - m(\Da_t) \nonumber \\
    & \quad \leq m(\Db_{t+1}\backslash\Db_t) \nonumber \\
    & \qquad - \beta\big(m(\Db_{t+1})-m(\Db_t)\big) + \frac{1}{N}. \label{eq:two-set:thm:set-update-term-3}
\end{align}
    Combining \eqref{eq:two-set:thm:set-update-term-2} and \eqref{eq:two-set:thm:set-update-term-3}, we get \eqref{eq:two-set:thm:da-diff-bound}.
    \item If $\Da_{t+1} \subseteq \Da_{t} \backslash \Db_{t+1}$, we have $m(\Da_{t+1} \backslash \Da_{t}) = 0$ and
    \begin{align}
    m(\Da_t\backslash\Da_{t+1}) 
    &  = m(\Da_t)-m(\Da_{t+1}) \nonumber \\
    &  = \frac{1}{N}\Bigl(\lfloor\beta N(1-m(\Db_t))\rfloor \nonumber \\
    & \quad - \lfloor\beta N(1-m(\Db_{t+1}))\rfloor\Bigr) \nonumber \\
    &  \leq \beta\big(m(\Db_{t+1})-m(\Db_t)\big) + \frac{1}{N}, \nonumber
\end{align}
    which implies \eqref{eq:two-set:thm:da-diff-bound} because $m(\Db_{t+1}\backslash\Db_t) \geq m(\Db_{t+1}) - m(\Db_t)$ and $\beta < 1$. 
\end{itemize}
Plugging \eqref{eq:two-set:thm:da-diff-bound} into \eqref{eq:two-set:thm:set-update-term-1}
and rearranging the terms, we get an upper bound for the set-update term:
\begin{align}
    & \Vds(X_{t+1},\Db_{t+1},\Da_{t+1}) - \Vds(X_{t+1},\Db_t,\Da_t) \nonumber \\
    & \quad \leq 4\big(\lamu^{1/2}+\lamw^{1/2}\big)m(\Db_t\backslash\Db_{t+1}) \nonumber \\
    & \qquad + \frac{2\lamw^{1/2}}{N}, \label{eq:two-set:thm:set-update-term-4}
\end{align}
where the terms involving to $m(\Db_{t+1}\backslash \Db_t)$ have been canceled out. 

Substituting the above calculation into
\eqref{eq:two-set:thm:state-change-diff-term} and \eqref{eq:two-set:thm:set-change-diff-term}, we get
\begin{align}
    & \Vds(\Fullstate_{t+1}) - \Vds(\Fullstate_t) \nonumber \\
    & \quad \leq \big(\hu(X_{t+1},\Db_t)-\hu(X_t,\Db_t)\big) \nonumber \\
    & \qquad + \big(\hw(X_{t+1},\Da_t)-\hw(X_t,\Da_t)\big) \label{eq:two-set:thm:v-diff-term-1} \\
    & \qquad + 4\big(\lamu^{1/2}+\lamw^{1/2}\big)m(\Db_t\backslash\Db_{t+1}) \nonumber \\
    & \qquad + \frac{2\lamw^{1/2}}{N}. \label{eq:two-set:thm:v-diff-term-2}
\end{align}
By \Cref{lem:wcmdp:sufficient-coverage}, we have
\begin{align}
 \Vds(\Fullstate_t)
    &\leq \hu(X_t,\Db_t) + \hw(X_t,\Da_t) \nonumber \\
    & \qquad + \Lds\bigg(\KscOne\hw(X_t,\Da_t)+\frac{\KscTwo}{N}\bigg) \nonumber \\
    & \quad \leq K_{Vh}\cdot\big((1-\rhou)\hu(X_t,\Db_t) \nonumber \\
    & \qquad + (1-\rhow)\hw(X_t,\Da_t)\big) + \frac{\Lds\KscTwo}{N}, \nonumber
\end{align}
where we choose $K_{Vh}>1$, independent of $N$, so that $K_{Vh}(1-\rhou)\geq1$ and $K_{Vh}(1-\rhow)\geq1+\Lds\KscOne$. We set $\rhoFinal=1-1/K_{Vh}\in(0,1)$. Multiplying the preceding bound by $1-\rhoFinal=1/K_{Vh}$ gives
\begin{equation}\label{eq:v-h-bound}
\begin{aligned}
& (1-\rhoFinal)\Vds(\Fullstate_t) \\
    & \quad \leq (1-\rhou)\hu(X_t,\Db_t) + (1-\rhow)\hw(X_t,\Da_t) \\
    & \qquad + \frac{(1-\rhoFinal)\Lds\KscTwo}{N}.
\end{aligned}
\end{equation}
Adding \eqref{eq:v-h-bound} to \eqref{eq:two-set:thm:v-diff-term-1}--\eqref{eq:two-set:thm:v-diff-term-2} proves \eqref{eq:wcmdp:calc:v-diff-final-1}--\eqref{eq:wcmdp:calc:v-diff-final-2} with
\[
    \Kdrift\triangleq 2\lamw^{1/2}+(1-\rhoFinal)\Lds\KscTwo.
\]

\fi

\end{document}